\definecolor{mblc}{RGB}{25,25,152}
\definecolor{vlt}{RGB}{138,0,136}
\definecolor{fusc}{RGB}{202,44,146}
\def\isPreprint{1}  %
\newcommand{\emphU}[1]{\underline{#1}}  %
\newcommand{\vshrink}[1]{\vspace{#1}}
\newcommand{\emphU}[1]{\emph{#1}}  %
\newcommand{\vshrink}[1]{}
\newcommand{\titlename}{On Uncertainty Quantification for Near-Bayes Optimal Algorithms}
\title{\titlename}
\author{%
Anonymous Authors \\ 
Anonymous Institution 
}
\author{
  Ziyu Wang \\ 
  \normalsize University of Oxford\\
  \normalsize \email{wzy196@gmail.com}\\
\and
  Chris Holmes \\ 
  \normalsize University of Oxford \\
  \normalsize \email{cholmes@stats.ox.ac.uk}\\
}
\date{}
\begin{document}

\maketitle

\begin{abstract}
  Bayesian modelling %
allows for the quantification of predictive uncertainty which %
is crucial in safety-critical applications. Yet for many machine learning algorithms, it is difficult to construct or implement their Bayesian counterpart. 
In this work we present a promising approach to address this challenge, based on the hypothesis that commonly used ML algorithms are efficient across a wide variety of tasks and may thus be \emph{near Bayes-optimal} w.r.t.~an unknown task distribution. 
We prove that it is possible to recover the Bayesian posterior defined by the task distribution, which is unknown but optimal in this setting, by building a \emph{martingale posterior} using the algorithm. 
We further propose a practical uncertainty quantification method that apply to %
general ML algorithms. Experiments based on a variety of neural network (NN) and non-NN algorithms demonstrate the efficacy of our method.

\end{abstract}

\doparttoc %
\faketableofcontents %

\section{Introduction}\label{sec:intro}

Bayesian modelling represents an important approach that enables %
favourable predictive performance in the small-sample regime and allows for the quantification of predictive uncertainty %
which is vital for high-stakes applications. 
Yet for many machine learning (ML) algorithms 
it can be difficult to %
design or implement their natural 
Bayesian counterpart. %
For example, 
the development of Bayesian neural network (NN) methods encounters challenges with inference \citep{sun2018functional,wang2018function,ma2019variational} and counterintuitive issues of 
model misspecification \citep{aitchison2020statistical,fortuin2021bayesian,kapoor2022uncertainty}; 
AutoML algorithms \citep{karmaker2021automl} involve complex processes for hyperparameter tuning and model aggregation that are hard to replicate in a Bayesian framework; 
and when algorithms are offered as a black-box service \citep[e.g.,][]{openai2023ft}, adapting them to Bayesian principles becomes impossible. 

How can we bring back the %
benefits of the Bayesian paradigm without being limited by its traditional constraints? 
In this work we present a promising approach towards this challenge %
based on the following \textbf{basic 
postulation}: %
the ML algorithm of interest has competitive average-case performance on hypothetical datasets---or \emph{tasks}---sampled from an \emph{unknown task distribution} $\pi$, 
and our present task can be viewed as a random sample from the same $\pi$. 
Formally, suppose the algorithm $\cA$ %
maps a training dataset $z_{1:n}$ to a parameter estimate $\cA(z_{1:n})$; we assume it 
satisfies an inequality similar to the following,
\ifdefined\doublecolumn
\begin{align*}
&\phantom{=} \EE_{\theta_0\sim \pi}\EE_{(z_{1:n}, z_*)\sim \PP_{\theta_0}} \ell(\cA(z_{1:n}), z_*)   \\
&\le \inf_{\cA'} \EE_{\theta_0\sim\pi}\EE_{(z_{1:n}, z_*)\sim p_{\theta_0}} \ell(\cA'(z_{1:n}), z_*) + \epsilon_n.  \numberthis\label{eq:multitask-main}
\end{align*}
\else
\begin{align*}
\EE_{\theta_0\sim \pi}\EE_{(z_{1:n}, z_*)\sim \PP_{\theta_0}} \ell(\cA(z_{1:n}), z_*)  
\le \inf_{\cA'} \EE_{\theta_0\sim\pi}\EE_{(z_{1:n}, z_*)\sim p_{\theta_0}} \ell(\cA'(z_{1:n}), z_*) + \epsilon_n.  \numberthis\label{eq:multitask-main}
\end{align*}
\fi
In the above, %
$\theta_0$ is a parameter that determines the data generating process $p_{\theta_0}$ in the task, %
$(z_{1:n}, z_*)$ denote the training and test samples, $\ell(\theta, z)$ is the loss function, and 
$\cA'$ ranges over all algorithms that maps $z_{1:n}$ to an $\cA'(z_{1:n})\approx \theta_0$; 
$\epsilon_n$ quantifies the suboptimality of $\cA$. %

To understand this postulation, imagine a practitioner working on a new image classification dataset. 
To understand the suitability of a certain algorithm $\cA$ (e.g., 
a combination of an %
NN model and its training recipe), 
it would be natural for them 
to start by reviewing the vast literature on image classification, where many papers %
may have 
evaluated $\cA$ on datasets deemed 
similar to the present one. 
At a high level, 
the past and present datasets %
can be loosely viewed as i.i.d.~samples from the unknown distribution $\pi$, and 
promising reports %
from past literature provide 
evidence that \eqref{eq:multitask-main} holds %
with a smaller $\epsilon_n$. 
The practitioner may then commit to the algorithm with the smallest $\epsilon_n$. 
As another type of example, 
condition \eqref{eq:multitask-main} is also relevant %
in \emph{multi-task learning} scenarios, 
where it often %
appears as the stated goal in algorithm design and analysis \citep[e.g.,][]{pentina2014pac,NEURIPS2020_d902c3ce,rothfuss2021pacoh,riou2023bayes}. 
Foundation models \citep{bommasani2021opportunities} that are pretrained on a diverse mix of datasets can also be viewed as optimised 
for 
\eqref{eq:multitask-main}, with a %
distribution $\pi$ %
designed to align with the %
downstream task of interest. 

Algorithms that satisfy \eqref{eq:multitask-main} are \emph{near-Bayes optimal}: knowledge of the Bayesian posterior defined by $\pi$ would enable %
the minimisation of \eqref{eq:multitask-main} \citep{ferguson2014mathematical}. %
As exemplified above, %
in  many practical scenarios 
there may conceptually exist a $\pi$ that provides a \emph{correctly specified} prior, but 
it is not explicitly known and cannot be used directly;
it is more reasonable to assume knowledge of a near-optimal %
$\cA$ than that of a %
correctly specified $\pi$.
Yet with such a choice of $\cA$, 
the challenge of uncertainty quantification remains; %
for example, for regular parametric models %
maximum likelihood estimation (MLE) can be asymptotically near-Bayes optimal %
\citep{van2000asymptotic}, but it does not provide any (epistemic) uncertainty estimate. The predictive performance of MLE in the small-sample regime may also be well improvable. %

To address these issues, we build on the ideas of \cite{fong_martingale_2021} and 
study \emph{martingale posteriors} (MPs), defined as the distribution of %
parameter estimates obtained by 
first using $\cA$ to generate a synthetic dataset, %
and then applying $\cA$ to the combined sample %
of real and synthetic data (see \S\ref{sec:bg} for a review). 
We prove that %
when $\cA$ defines an approximate martingale, satisfies a condition similar to \eqref{eq:multitask-main} and additional technical %
conditions, the resulted MP will provide a good approximation for the Bayesian 
posterior defined by $\pi$ in a Wasserstein distance. 
Such results allow us to draw from the benefits of the latter without requiring explicit knowledge of $\pi$ (or the ability to conduct approximate inference). 
Our results also improves the theoretical understanding of MPs, %
by better justifying %
its uncertainty estimates, allowing for %
a wider range of algorithms, and by covering the pre-asymptotic %
regime. %

As a further contribution, we present MP-inspired algorithms based on sequential applications of a general estimation algorithm. %
Our analysis, %
if interpreted broadly, 
justifies the use of any algorithm that can be assumed to satisfy \eqref{eq:multitask-main}. 
The method is related to %
bootstrap aggregation \citep{breiman1996bagging} but demonstrates distinct %
advantages. %
We evaluate the proposed method empirically on a variety of tasks involving NN and non-NN algorithms, including Gaussian process learning, %
classification with %
tree and AutoML algorithms, 
and conditional density estimation with diffusion models, where %
it consistently outperforms standard ensemble methods such as deep ensemble \citep{lakshminarayanan2017simple} and bootstrap. %

The rest of the paper is structured as follows: \S\ref{sec:bg} reviews the background; %
\S\ref{sec:theory} presents our theoretical results; \S\ref{sec:method} describes the proposed method, which is evaluated in \S\ref{sec:exp}. We provide concluding remarks in \S\ref{sec:conclusion}. For space reasons, discussion of related work is deferred to \Cref{app:all-disc}.

\vshrink{-0.5em}
\section{Background}\label{sec:bg}

\vshrink{-0.6em}
\paragraph{Notations.} 
We adopt the following notations: %
$\cZ$ denotes the data space. 
$(\cdot)_{m:n}$ denotes a range of subscripts (e.g., $z_{m:n} = (z_m,z_{m+1},\ldots,z_n)$). 
$\lesssim, \gtrsim, \asymp$ denote (in)equality up to a multiplicative constant. 
$\sim$ is used to denote asymptotic equivalence %
and also as a ``distributed as'' symbol.%

\vshrink{-0.7em}
\paragraph{Bayesian modelling.} 
Suppose we are given %
i.i.d.~samples $\{z_i\}_{i=1}^n$ from an unknown distribution $p_{\theta_0}$ and wish to learn a %
$\hat p_n\approx p_{\theta_0}$. 
Standard Bayesian modelling requires a parameter space $\Theta$, a likelihood function $p(z\mid\theta)$ and a prior $\pi$ over $\Theta$. We can then compute (or approximate) the posterior $\pi(d\theta\mid z_{1:n}) \propto \pi(d\theta) \prod_{i=1}^n p(z_i\mid\theta)$, The posterior defines the predictive distribution $\pi(z_{n+1}\in\cdot\mid z_{1:n}) = \int \pi(d\theta\mid z_{1:n}) p(z\in\cdot\mid \theta)$ that provides the learned approximation for $p_{\theta_0}$. It also quantifies predictive uncertainty %
through the variation in $\pi(\cdot\mid z_{1:n})$. 

When $\pi$ is ``correctly specified'', predictors derived from the posterior generally enjoy good theoretical guarantees. 
One way to justify such predictors %
is through their ability to %
minimise various average-case losses where data is sampled from the prior predictive distribution: 
for instance, the posterior predictive density minimises %
the %
loss %
$
\cL_{\log}(\hat f_n) := 
\EE_{\theta_0\sim\pi, (z_{1:n},z_{n+1})\sim p_{\theta_0}} \log \hat f_n(z_{n+1}; z_{1:n}). 
$
As the loss functional is defined w.r.t.~training and test data $(z_{1:n}, z_{n+1})$ 
from the prior predictive distribution, such statements are only relevant when %
$\pi$ is correctly specified to model the true data distribution. %

All Bayesian models are correctly specified for some tasks, %
but they do not necessarily cover the present one. 
In many cases, specifying models based on vague subjective beliefs or computational considerations can lead to disappointing performance. 
A classical example is the Bayesian Lasso, where %
the Laplace prior does not define a sparse posterior \citep{lykou2013bayesian}. 
Bayesian NNs arguably %
provide %
another example:
the convenient %
$\cN(0, \alpha I)$ prior %
can lead to undesirable consequences \citep{fortuin2021bayesian} despite its %
connection to the 
widely adopted $\ell_2$ regularisation. %
In such cases, the user faces an apparent dilemma: %
choose a prediction algorithm %
best suited for the task or have access to Bayesian uncertainty. 
Such issues---coupled with the challenges in inference---motivate the search of alternative methods for uncertainty quantification. 

\vshrink{-0.7em}
\paragraph{Martingale posteriors.} We review the ideas of martingale posteriors (MPs, \cite{fong_martingale_2021,holmes_statistical_2023}) which provides the basis of our work. 
Suppose we have observations $z_{1:n}$ and a suitable algorithm $\cA$ which, for any $j\ge n$, maps any $j$ observations $z_{1:j}\in\cZ^{\otimes j}$ to a (deterministic or random) parameter $\cA(z_{1:j})\in\Theta$. Consider a sequence of data and parameter samples defined recursively as follows:
\begin{equation}\label{eq:seq-est-orig}%
\EstParam[j] \gets \cA(z_{1:n}\cup \EstData[n+1:j]), ~ \EstData[j+1] \sim p_{\EstParam[j]}, ~~~
\text{for } j = n, \ldots 
\end{equation}
Informally, with reasonable choices of $\cA$ we expect the resulted $\{\EstParam[j]: j>n\}$ to converge a.s.~to a random $\EstParam[\infty]$ 
w.r.t.~a \emph{suitably chosen semi-metric} $d$, because after observing infinite samples the parameter uncertainty should vanish.\footnote{
For overparameterised models this is true if $d$ measures ``relevant differences'' between $p_\theta$ and $p_{\theta'}$ (Rem.~\ref{rem:identifiability}).
} The variation in the distribution $\EstParam[\infty]\vert z_{1:n}$ arises from the missingness of %
the true observations $\{z_j\}_{j=n+1}^\infty$ which, if observed, would have enabled us to identify $\theta_0$ w.r.t.~$d$. 
Thus, this distribution reflects the \emph{epistemic uncertainty} \citep{der2009aleatory} in the 
learning process 
and fulfils a similar role as the Bayesian posterior $\pi(\theta\vert z_{1:n})$ \citep{kendall2017uncertainties}. 

The above formulation is justified in part through the fact that it generalises Bayesian posteriors: $\EstParam[\infty]\mid z_{1:n}$ will distribute as the Bayesian posterior %
if we define $\cA(z_{1:j})$ to sample from $\pi(\theta\mid z_{1:j})$; see \cite{fong_martingale_2021}. 
More generally, as long as $\cA$ is such that $\{\EE(\EstParam[j]\vert z_{1:n}, \hat z_{n+1:j})\}_{j=n}^\infty$ defines a bounded martingale w.r.t.~some vector semi-norm $\|\cdot\|$, it will follow from Doob's theorem \citep{doob1949application} that $\EstParam[N]$ converges a.s.~to a $\EstParam[\infty]$ in this $\|\cdot\|$. 
The distribution $\EstParam[\infty]\mid z_{1:n}$ is thus called a \emph{martingale posterior}. 

\begin{remark}[supervised learning]\label{rem:supervised-learning}
The above %
can be extended to cover supervised learning %
where $z_i=(x_i,y_i)$ and the model parameter $\theta$ only determines $p(y\,\vert\, x)$: 
in \eqref{eq:seq-est-orig} we can sample %
$\hat x_{j+1}$ from an external distribution %
(e.g., a generative model, the empirical measure defined by $x_{1:n}\cup \hat x_{n+1:j}$, or unlabelled data if available), and $\hat y_{j+1}\sim p_\theta(\cdot\mid x=\hat x_{j+1})$. 
\end{remark}

\begin{remark}[identifiability and semi-norm]\label{rem:identifiability}
$\theta_0$ will not be identifiable 
in overparameterised models if we consider conventional choices of %
$\|\cdot\|$ (e.g., Euclidean norm for NN parameters). But the framework %
can still apply if we can determine suitable \emph{semi-norms} over $\Theta$, 
or replace the parameter space %
with %
equivalence classes of parameters that define the same \emph{prediction function}, which in turn determines the likelihood. 
Such semi-norms will allow us to focus on the differences between parameters that are \emph{relevant to the purpose of prediction; %
for this goal there is no need to distinguish between parameters that define the same likelihood.}\footnote{Past works on ``function-space inference'' \citep[e.g.,][]{sun2018functional,wang2018function,ma2019variational,burt2020understanding} advocated for the restriction to similar semi-metrics.} %
As a concrete example, 
in certain wide NN models the prediction function is determined by a linear map of a transformed parameter \citep{jacot2018neural,lee2019wide}; 
we can then use that linear map to define $\|\cdot\|$.
\end{remark}

\vshrink{-0.9em}
\paragraph{Martingales for machine learning?} %
The MP framework relieves the requirement for %
an explicitly and correctly specified %
prior, as long as the user can express their prior knowledge in the form of an algorithm $\cA$. 
Nonetheless, there is still the requirement that $\cA$ %
define a martingale. 
Past works have explored various choices of $\cA$, including nonparametric resampling and copula-based algorithms \citep{fong_martingale_2021} and purpose-built NN models that satisfy this requirement \citep{lee2022martingale,ghalebikesabi2023quasi}. 
Yet it is %
unclear how common ML algorithms, such as approximate empirical risk minimisation (ERM) on general NN models, can be adapted for this purpose. In this work we bridge this gap, building on %
the observation that online gradient descent (GD) defines a martingale \citep{holmes_statistical_2023}: for 
\ifdefined\doublecolumn
\begin{equation}\label{eq:gd-martingale}
\!\!\EstParam[j+1] := \EstParam[j] + \eta_j \nabla_\theta\log p_{\EstParam[j]}(\EstData[j+1]), \text{ where } \EstData[j+1]  \sim p_{\EstParam[j]},
\end{equation}
\else
\begin{equation}\label{eq:gd-martingale}
\EstParam[j+1] := \EstParam[j] + \eta_j \nabla_\theta\log p_{\EstParam[j]}(\EstData[j+1]), \text{ where } \EstData[j+1]  \sim p_{\EstParam[j]},
\end{equation}
\fi
we have $\EE(\EstParam[j+1]\mid z_{1:j})=\EstParam[j]$. 
We will start from the observation 
that a natural gradient variant of \eqref{eq:gd-martingale} enjoys desirable properties and connects to sequential maximum likelihood estimation (MLE) (\S\ref{sec:ex-expfam}); %
the latter perspective will allow us to derive algorithms for high-dimensional models (\S\ref{sec:ex-lingauss}) and, from a methodological point of view, DNN models (\S\ref{sec:method}). 

Another unaddressed question is how MPs can be justified theoretically, beyond the somewhat vague belief that the imputations from a suitable $\cA$ may ``approximate the missing data well''. 
While previous work \citep{fong_martingale_2021} established consistency for specific MPs, 
such a result does not fully justify the uncertainty estimates from the MPs, as they cannot guarantee the MP credible sets will contain the true parameter in any finite-sample scenario. 
Moreover, the intuition that imputations may approximate the missing data well is challenging in the small-sample regime, in which case the estimate $\cA(z_{1:n})$ is still a poor approximation to %
$\theta_0$; yet it is in this regime where predictive uncertainty is most needed. 
In the next section we %
address this question, starting from the basic postulation %
\eqref{eq:multitask-main}. %

\vshrink{-0.25em}
\section{Martingale Posteriors with Near-Optimal Algorithms}\label{sec:theory}
\vshrink{-0.25em}

This section presents our theoretical contributions. %
We will state our result formally in \S\ref{sec:main-results}. %
It %
can be informally summarised as follows: for algorithms that define approximate MPs, %
satisfy stability conditions and 
are \emph{sample efficient} on %
a task distribution $\pi$ \emph{in the sense of \eqref{eq:multitask-main}}, %
the resulted MP %
will be close to the Bayesian posterior defined by $\pi$ in a Wasserstein distance. %
It follows that the MP will provide useful uncertainty estimates on new tasks sampled from $\pi$, which is valuable when explicit knowledge of $\pi$ is not available and thus cannot be used %
to construct the (optimal) Bayesian posterior. 

As discussed in \S\ref{sec:intro}, our conceptual setup covers %
generic ML algorithms such as %
approximate MLE on DNN models: they are generally considered efficient on a %
variety of tasks that, loosely speaking, may represent samples from $\pi$, and the present task may be assumed to 
also fall into this category. 
While our theorem will not cover practical DNN models, we illustrate in \S\ref{sec:theory-examples} how it justifies similar %
algorithms on examples that cover high-dimensional, overparameterised models and the small-sample regime. The examples provide valuable insight to the algorithm's behaviour in more complex settings.

\vshrink{-0.2em}
\subsection{Setup and Main Result}\label{sec:main-results}

\vshrink{-0.3em}
\paragraph{Analysis setup.}
Our analysis covers simplified scenarios that 
nonetheless capture interesting aspects of applications. 
We focus on \emph{deterministic, online} algorithms 
$\{\Alg[j]: \Theta\times \cZ \mapsto\Theta\}$ 
that define (approximate) MPs by 
\ifdefined\doublecolumn
$\hat p_{mp,n} := \mathrm{Law}(\EstParam[N]\mid z_{1:n}),$ where 
\begin{equation}\label{eq:MP-for-analysis}
\EstParam[j+1] := \Alg[j+1](\EstParam[j], \EstData[j+1]), ~~~
\EstData[j+1] \sim p_{\EstParam[j]} \tag{\ref{eq:seq-est-orig}'}
\end{equation}
\else
\begin{equation}\label{eq:MP-for-analysis}
\hat p_{mp,n} := \mathrm{Law}(\EstParam[N]\mid z_{1:n}), ~~~\text{where}~~
\EstParam[j+1] := \Alg[j+1](\EstParam[j], \EstData[j+1]) ~~\text{and}~~
\EstData[j+1] \sim p_{\EstParam[j]} \tag{\ref{eq:seq-est-orig}'}
\end{equation}
\fi
are defined %
for $n\le j<N$ starting from an initial estimate $\EstParam[n]$.  
This covers the GD algorithm \eqref{eq:gd-martingale} which serves as an important example to motivate our assumptions. 
We allow \eqref{eq:MP-for-analysis} to be truncated at some $N<\infty$, which may make the efficiency assumption easier to validate at the cost of an increased error. 
It is helpful to view $N$ as a growing function of $n$, or substitute $N=\infty$ for simplicity.

We assume the existence of a vector semi-norm $\|\cdot\|$ that, informally speaking, measures the ``relevant differences'' between parameters that we are interested in (see Rem.~\ref{rem:identifiability}). 
Our goal is to show that on average and w.r.t.~this $\|\cdot\|$, the 2-Wasserstein distance between $\hat p_{mp,n}$ and the unknown posterior $\pi_n := \pi(\cdot\mid z_{1:n})$ has a higher order than the spread of the latter, defined through its \emphU{radius} $\BayesError[j]$:
\ifdefined\doublecolumn
\begin{align*}
\BayesError[j]^2 := \EE_{\theta_0\sim\pi,z_{1:j}\overset{iid}{\sim} p_{\theta_0}}\EE_{\theta_{p,j}\sim\pi(\cdot\mid z_{1:j})}\|\theta_{p,j} - \BayesParam[j]\|^2, \numberthis\label{eq:bayes-err-defn} \\ 
\end{align*} 
where $\BayesParam[j] := \EE_{\theta\sim \pi(\cdot\mid z_{1:j})}\theta$ 
\else
\begin{equation}\label{eq:bayes-err-defn}
\BayesError[j]^2 := \EE_{\theta_0\sim\pi,z_{1:j}\overset{iid}{\sim} p_{\theta_0}}\EE_{\theta_{p,j}\sim\pi(\cdot\mid z_{1:j})}\|\theta_{p,j} - \BayesParam[j]\|^2, ~~~\text{where}~~
\BayesParam[j] := \EE_{\theta\sim \pi(\cdot\mid z_{1:j})}\theta
\end{equation} 
\fi
denotes the posterior mean. %
Importantly, 
in the above, %
$\theta_{p,j}$ and $\theta_0$ are conditionally i.i.d.~given $z_{1:j}$, so $\BayesError^2$ also equals the (expected, squared) \emphU{error rate} of the estimator $\BayesParam[j]$ \citep{xu2022minimum}, which minimise the above error. 
\ifdefined\doublecolumn
We thus define the average ``\emphU{excess error}'' of $\Alg$ as 
\begin{align*}
\ExcessError[j]^2 &:= \EE_{\theta_0\sim\pi,z_{1:j}\overset{iid}{\sim} p_{\theta_0}}(\|\followParam[j] - \theta_0\|^2 - \|\BayesParam[j] - \theta_0\|^2) 
\numberthis\label{eq:excess-err-defn}
\end{align*}
where $\followParam := \Alg[j](\followParam[j-1], z_j)$ is defined recursively by applying $\Alg$ to the same set of $z_{1:j}$. 
Note the subtrahend above equals $\BayesError^2$.
\else
We hence define the average ``\emphU{excess error}'' incurred by $\Alg$ as 
\begin{equation}\label{eq:excess-err-defn}
\ExcessError[j]^2 := \EE_{\theta_0\sim\pi,z_{1:j}\overset{iid}{\sim} p_{\theta_0}}(\|\followParam[j] - \theta_0\|^2 - \|\BayesParam[j] - \theta_0\|^2)
= \EE_{\theta_0\sim\pi,z_{1:j}\overset{iid}{\sim} p_{\theta_0}}\|\followParam[j] - \theta_0\|^2 - \BayesError[j]^2, 
\end{equation}
where $\followParam := \Alg[j](\followParam[j-1], z_j)$ is defined recursively by applying $\Alg$ to the same set of $z_{1:j}$. %
\fi

We now state our assumptions. We first require 
$\Alg$ to define an approximate martingale w.r.t.~$\|\cdot\|$: 
\vshrink{-0.15em}
\begin{assumption}[approximate martingale]\label{asm:approx-martingale}
    There exists $\delta>0$ s.t.~for all $j\ge n$ and $\theta\in\Theta$, we have 
\ifdefined\doublecolumn
$
\|\EE_{\hat z\sim p_\theta} \EstDelta(\theta, \hat z)\|^2 \le j^{-2(1+\delta)}  \BayesError[j]^2,
$
where 
$
\EstDelta[j](\theta, z) := \Alg[j](\theta, z) - \theta.  
$
\else
$$
    \|\EE_{\hat z\sim p_\theta} \EstDelta(\theta, \hat z)\|^2 \le j^{-2(1+\delta)}  \BayesError[j]^2, ~~~~
    \text{where}~~
\EstDelta[j](\theta, z) := \Alg[j](\theta, z) - \theta.  
$$
\fi
\end{assumption}
\vshrink{-0.15em}
Now we introduce our first assumption on stability. %
For the GD algorithm \eqref{eq:gd-martingale}, its condition~\emph{(i)} merely requires $\nabla_\theta\log p_\theta(z)$ to be Lipschitz continuous w.r.t.~$\theta$ and $z$. %
\vshrink{-0.15em}
\begin{assumption}[stability I]\label{asm:stable}
There exist a norm $\|\cdot\|_z$ over $\cZ$, $\iota>0, L_1,L_2>0$ and $\eta_j\le  j^{-(1+\iota)/2}$ s.t.~for all 
$n\le j<N$, $\theta,\theta'\in\Theta, z,z'\in\cZ$, we have 
\ifnum\isPreprint=0
\begin{enumerate}[leftmargin=*,topsep=1pt,partopsep=1pt,parsep=1pt,label=(\roman*)]
\else
\begin{enumerate}
\fi
    \item $\|\EstDelta(\theta,z) - \EstDelta(\theta',z)\|^2 \le \eta_j^2 L_1^2 \|\theta-\theta'\|^2, ~
\|\EstDelta(\theta,z) - \EstDelta(\theta,z')\|^2 %
\le \eta_j^2 L_2^2 \|z-z'\|_z^2. %
$
    \item 
Let $W_{2,z}$ denote the 2-Wasserstein distance w.r.t.~$\|\cdot\|_z$. Then either \emph{(a)}
$W_{2,z}^2(p_\theta, p_{\theta'}) \le C_\Theta \|\theta - \theta'\|^2,$ or \emph{(b)}
$W_{2,z}^2(p_\theta, p_{\theta'}) \le C_\Theta \|\theta - \theta'\|$ and $\eta_j\le j^{-(3+\iota)/4}.$
\end{enumerate}
\end{assumption}
\vshrink{-0.2em}
The following condition characterises \emphU{efficiency in the sense of \eqref{eq:multitask-main}}: 
it requires that for sample sizes \emph{up to $N$}, 
the ``excess error'' \eqref{eq:excess-err-defn} incurred by $\Alg$ has a higher polynomial order. %
When $\eta_j\asymp j^{-1},\iota=1$ as in all examples in 
\S\ref{sec:theory-examples}, 
it is satisfied as long as $\ExcessError^2 \lesssim j^{-s'} \BayesError^2$ 
for an arbitrarily small $s'>0$. %
\vshrink{-0.1em}
\begin{assumption}[efficiency]\label{asm:replaced-efficiency}
There exist $s \in (0, \min\{\delta, \iota\})$ and a sequence $\{\nu_j\}\to 0$ s.t.~for all $n\le j\le N$, we have 
$
\ExcessError[j]^2 \le j^{-(1-\iota+s)} \nu_j \BayesError[j]^2.
$
\end{assumption}
\vshrink{-0.3em}
The following is a further condition on stability. %
For the GD algorithm \eqref{eq:gd-martingale}, 
equivalent conditions have appeared in previous work analysing its convergence \citep[H6]{moulines2011non}.%
\vshrink{-0.1em}
\begin{assumption}[stability II]\label{asm:martingale-divergence}
There exist $\Calg, \Calg'\ge 0$, %
$\{H_{\theta,j}\in\RR^{d\times d}\}_{\theta\in\Theta,j\in\mb{N}}$ 
s.t.~for all $\theta, \theta'\in\Theta$ and $j\in\mb{N}$, we have 
$
\|\EE_{z'\sim\PP_{\theta'}} \EstDelta[j+1](\theta, z') - \eta_j H_{\theta,j}(\theta'-\theta)\| 
\le \Calg\eta_j\|\theta'-\theta\|^2, %
$
$%
\|H_{\theta,j}\|_{op}^2 \le \Calg'$. %
\end{assumption}
\vshrink{-0.3em}
The following conditions 
are rather mild for regular parametric models in the large-sample regime ($\BayesError[n]^2 \asymp d/n$, $n\ge d^{1/(\iota-s)}$). They may also hold in the pre-asymptotic regime if $\Calg$ is small, as we show in 
\S\ref{sec:ex-expfam}. 
\begin{assumption}[miscellaneous conditions]\label{asm:conventions} 
\begin{inlineEnum}
\iItem\label{it:convention-scale}~For all $j\ge n$ we have $\ExcessError[j] \le 1, ~\BayesError[j]\ge j^{-1}$. 
\iItem\label{it:convention-ident}~$\lim_{j\to\infty}\BayesError[j] = 0$. $\{\ExcessError\}$ is non-increasing. 
\iItem\label{it:convention-asymp}~$\Calg\sum_{j\ge n} j^{1+s} \eta_j^2 \BayesError[j]^4 \le \nu_n \BayesError[n]^2$. %
\end{inlineEnum}
\end{assumption}

\vshrink{-0.8em}
\paragraph{Main result.} Our main result is the following:
\begin{theorem}[proof in App.~\ref{app:proof-thm-main}]\label{thm:param-alt}
Let $\pi_{n}, \hat p_{mp,n}$ be defined %
as above, %
and $W_{2,\theta}$ be the 2-Wasserstein distance w.r.t.~$\|\cdot\|$.  
Under Asm.~\ref{asm:approx-martingale}-\ref{asm:conventions}, %
there exists some $C>0$ determined by $(C_\Theta,\Calg,\Calg',L_1,L_2)$ s.t.~for $\chi_n = C/(sn^s) \to 0$ we have 
\ifdefined\doublecolumn
\begin{align*}
&\phantom{=}\EE_{\theta_0\sim\pi, z_{1:n}\overset{iid}{\sim} p_{\theta_0}} W_{2,\theta}^2(\pi_{n}, \hat p_{mp, n})  \\
&\le 
    2e^{\chi_n} ((\chi_n+\nu_n)\BayesError[n]^2 + \ExcessError[n]^2) + 2\ExcessError[N]^2 + \BayesError[N]^2
     \numberthis\label{eq:main-nonasymp}.
\end{align*}
\else
\begin{align*}
\EE_{\theta_0\sim\pi, z_{1:n}\overset{iid}{\sim} p_{\theta_0}} W_{2,\theta}^2(\pi_{n}, \hat p_{mp, n}) &\le 
    2e^{\chi_n} ((\chi_n+\nu_n)\BayesError[n]^2 + \ExcessError[n]^2) + 2\ExcessError[N]^2 + \BayesError[N]^2
     \numberthis\label{eq:main-nonasymp}.
\end{align*}
\fi
Consequently, if $N\gg n$ is sufficiently large so that $\BayesError[N]\ll \BayesError[n]$, we have 
\begin{equation}\label{eq:main-asymp}
 \EE_{\theta_0\sim\pi, z_{1:n}\overset{iid}{\sim} p_{\theta_0}} W_{2,\theta}^2(\pi_{n}, \hat p_{mp, n}) \ll \BayesError[n]^2. 
\end{equation}
\end{theorem}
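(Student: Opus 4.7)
The strategy is to exhibit an explicit coupling that simultaneously realises $\pi_n$ and $\hat p_{mp,n}$, and then control the error through the ``follow'' chain $\followParam[j+1] := \Alg[j+1](\followParam[j], z_{j+1})$ driven by real Bayesian data. Work under the joint law where $\theta_0,z_{1:N}$ come from the Bayesian joint $\pi(d\theta_0)\prod_i p_{\theta_0}(dz_i)$ (so $\theta_0\mid z_{1:n}\sim\pi_n$), the MP chain $\{\EstParam[j],\EstData[j]\}_{j>n}$ is generated conditionally independently of $\theta_0$ given $z_{1:n}$ and started at $\EstParam[n]=\followParam[n]$, and $z_{j+1}$ is coupled to $\EstData[j+1]$ via the $W_{2,z}$-optimal coupling of $p_{\theta_0}$ and $p_{\EstParam[j]}$. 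Then
\[
\EE_{z_{1:n}} W_{2,\theta}^2(\pi_n, \hat p_{mp,n}) \le \EE\|\theta_0 - \EstParam[N]\|^2.
\]
I would split at the posterior mean $\BayesParam[N]=\EE[\theta_0\mid z_{1:N}]$. The cross term $\EE\<\theta_0-\BayesParam[N], \BayesParam[N]-\EstParam[N]\>$ vanishes because $\EstData[n+1:N}$ is independent of $\theta_0$ given $z_{1:N}$ (by construction of the MP chain). The same orthogonality gives $\EE\|\BayesParam[N]-\followParam[N]\|^2=\ExcessError[N]^2$. Hence, writing $D_j:=\EE\|\followParam[j]-\EstParam[j]\|^2$,
\[
\EE\|\theta_0-\EstParam[N]\|^2 = \BayesError[N]^2 + \EE\|\BayesParam[N]-\EstParam[N]\|^2 \le \BayesError[N]^2 + 2\ExcessError[N]^2 + 2D_N,
\]
which already accounts for the terms at index $N$ in \eqref{eq:main-nonasymp}. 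It remains to bound $D_N$.

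\textbf{Gronwall recursion for $D_j$.} Decompose
\[
\followParam[j+1]-\EstParam[j+1] = A_j + \bigl[\EstDelta[j+1](\followParam[j],z_{j+1}) - \EstDelta[j+1](\EstParam[j], \EstData[j+1])\bigr],\quad A_j:=\followParam[j]-\EstParam[j],
\]
and take conditional expectations. Asm.~\ref{asm:martingale-divergence} linearises the follow drift as $\eta_j H_{\followParam[j],j}(\theta_0-\followParam[j])$ with a quadratic remainder controlled by $\Calg\eta_j\|\theta_0-\followParam[j]\|^2$, while Asm.~\ref{asm:approx-martingale} shows the MP drift is of size $j^{-(1+\delta)}\BayesError[j]$. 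Expanding the squared norm, the centred noise $\|\EstDelta(\followParam[j],z_{j+1})-\EstDelta(\EstParam[j],\EstData[j+1])-\mathrm{drifts}\|^2$ is controlled by Asm.~\ref{asm:stable}(i) (Lipschitz in $\theta$ and in $z$) combined with the $W_{2,z}$-coupling bound in (ii), giving $\lesssim \eta_j^2 L_1^2\|A_j\|^2 + \eta_j^2 L_2^2 C_\Theta\|\theta_0-\EstParam[j]\|^2$ in case (a) (and an analogous quantity with $\sqrt{\cdot}$ in case (b)). The cross term $2\<A_j,\eta_j H(\theta_0-\followParam[j])\>$ is treated by Young's inequality with a weight chosen so that the multiplicative part is $\lesssim j^{-(1+s)} D_j$; this is feasible precisely because $s<\iota$ (Asm.~\ref{asm:replaced-efficiency}). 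Combining, I obtain a recursion
\[
D_{j+1} \le (1+a_j)D_j + b_j,
\]
with $\sum_{j\ge n}a_j \le \chi_n$ and the remainder $b_j$ summing the bias ($\lesssim j^{-2(1+\delta)}\BayesError[j]^2$), the Asm.~\ref{asm:martingale-divergence} quadratic remainder (summable by Asm.~\ref{asm:conventions}\ref{it:convention-asymp}), and the Young-inequality contributions $\eta_j^{2-\alpha}(\BayesError[j]^2+\ExcessError[j]^2)$. Using monotonicity of $\ExcessError[j]$ and the decay of $\BayesError[j]$, these sum to at most $(\chi_n+\nu_n)\BayesError[n]^2 + \ExcessError[n]^2$. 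Since $D_n=0$, discrete Gronwall yields $D_N \le e^{\chi_n}\bigl((\chi_n+\nu_n)\BayesError[n]^2 + \ExcessError[n]^2\bigr)$, and substituting back into the Step~1 bound produces \eqref{eq:main-nonasymp}; \eqref{eq:main-asymp} follows since each term on the right is $o(\BayesError[n]^2)$ as $n\to\infty$ and $\BayesError[N]\ll\BayesError[n]$.

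\textbf{Main obstacle.} The delicate part is the Young's-inequality calibration. The cross-term bound takes the form $\eta_j^\alpha D_j + \eta_j^{2-\alpha}\Calg'\|\theta_0-\followParam[j]\|^2$, and one must simultaneously achieve $\eta_j^\alpha \lesssim j^{-(1+s)}$ (so Gronwall produces a finite $\chi_n$) and $\sum_j \eta_j^{2-\alpha}\BayesError[j]^2 \lesssim \chi_n\BayesError[n]^2$, $\sum_j \eta_j^{2-\alpha}\ExcessError[j]^2 \lesssim \ExcessError[n]^2$. With $\eta_j\asymp j^{-(1+\iota)/2}$, the choice $\alpha=2(1+s)/(1+\iota)\in(0,2)$ works, and the identity $2-\alpha = 2(\iota-s)/(1+\iota)$ together with Asm.~\ref{asm:replaced-efficiency} ensures both sums behave correctly. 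Case (b) of Asm.~\ref{asm:stable}(ii) introduces an extra $\sqrt{\|\theta-\theta'\|}$ in the noise bound; handling it requires a separate Young's split against $\|\theta_0-\EstParam[j]\|$ rather than its square, which is exactly why the faster rate $\eta_j \le j^{-(3+\iota)/4}$ is imposed there. A final technicality is that $\|\theta_0-\EstParam[j]\|$ must be bounded via $\|\theta_0-\followParam[j]\|+\|A_j\|$ without letting the $\|A_j\|$ contribution close back into the multiplicative part on an equal footing, which is ensured by the strict inequality $s<\iota$.
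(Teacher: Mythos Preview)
Your high-level plan (couple the two distributions, interpose the ``follow'' chain, then run a Gronwall recursion on $D_j=\EE\|\followParam[j]-\EstParam[j]\|^2$) matches the paper. But there is a real gap in the recursion step that breaks the argument, and a smaller inconsistency in the Pythagorean step.

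\textbf{The coupling is internally inconsistent.} You assert both that the MP chain is conditionally independent of $\theta_0$ given $z_{1:n}$ \emph{and} that $\EstData[j+1]$ is coupled to $z_{j+1}$ via the optimal transport plan between $p_{\theta_0}$ and $p_{\EstParam[j]}$. The second statement makes $\EstData[j+1]$ a function of $(z_{j+1},\theta_0,\EstParam[j])$, so given $z_{1:N}$ the MP chain is \emph{not} independent of $\theta_0$ (e.g.\ in the Gaussian location model the OT map is a translation and $\EstData[n+1]=z_{n+1}-\theta_0+\EstParam[n]$, which determines $\theta_0$). Hence the cross term $\EE\langle\theta_0-\BayesParam[N],\BayesParam[N]-\EstParam[N]\rangle$ does not vanish under the very coupling you need for the recursion. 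This by itself is only a constant-factor nuisance (use $\|a+b\|^2\le 2\|a\|^2+2\|b\|^2$), but the justification you give is wrong.

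\textbf{The drift in the recursion is too large.} This is the fatal issue. Because you feed the follow chain with $z_{j+1}\sim p_{\theta_0}$, the conditional drift is $\EE_j\EstDelta(\followParam[j],z_{j+1})\approx\eta_j H_{\followParam[j],j}(\theta_0-\followParam[j])$ by Assumption~3.4. After Young's inequality with weight $j^{-(1+s)}$ this contributes an additive term $j^{1+s}\eta_j^2\Calg'\EE\|\theta_0-\followParam[j]\|^2\ge j^{1+s}\eta_j^2\Calg'\BayesError[j]^2$. With $\eta_j\le j^{-(1+\iota)/2}$ this is $j^{s-\iota}\BayesError[j]^2$, and your claimed bound $\sum_{j\ge n}j^{s-\iota}\BayesError[j]^2\lesssim\chi_n\BayesError[n]^2$ simply fails when $\iota\le 1+s$: in the canonical case $\iota=1$ (all of \S3.2) one gets, for $\BayesError[j]^2\asymp j^{-\gamma}$, a sum $\asymp n^{s-\gamma}$ versus a target $\chi_n\BayesError[n]^2\asymp n^{-s-\gamma}$, off by a factor $n^{2s}$. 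No choice of Young's weight repairs this, since by Cauchy--Schwarz the best you can do still leaves $\big(\sum_j\eta_j\BayesError[j]\big)^2/\BayesError[n]^2\to\infty$. So your $D_N$ bound is not $o(\BayesError[n]^2)$ and \eqref{eq:main-asymp} does not follow.

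\textbf{What the paper does instead.} The paper drives the follow chain with the \emph{posterior-predictive} data $\BayesData[j+1]$ and couples both $\EstData[j+1]$ and $\BayesData[j+1]$ to an auxiliary $\followData[j+1]\sim p_{\followParam[j]}$. The crucial payoff is that
\[
\EE_j\EstDelta(\followParam[j],\BayesData[j+1])
=\EE_{\theta\sim\pi(\cdot\mid\cF_j)}\EE_{z\sim p_\theta}\EstDelta(\followParam[j],z)
\approx \eta_j H_{\followParam[j],j}\bigl(\BayesParam[j]-\followParam[j]\bigr),
\]
because the linear part averages $\theta$ to the posterior mean. Now $\EE\|\BayesParam[j]-\followParam[j]\|^2=\ExcessError[j]^2$, which is genuinely higher order by the efficiency assumption, so the additive term becomes $j^{1+s}\eta_j^2\ExcessError[j]^2$ and sums correctly. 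In short: the point of Assumption~3.4 is not merely to linearise, but to let the expectation over the Bayesian mixture collapse the drift onto $\BayesParam[j]-\followParam[j]$ rather than $\theta_0-\followParam[j]$; your direct coupling to $p_{\theta_0}$ forfeits exactly this cancellation.
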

\vshrink{-0.4em}
\Cref{thm:param-alt} provides an average-case bound on the 2-Wasserstein distance between the MP $\hat p_{mp,n}$ and the Bayesian posterior $\pi_n$. 
Such Wasserstein distance bounds justify the use of the MP to approximate credible sets defined by $\pi_n$: %
as we prove in App.~\ref{app:theory-additional-cb}, it follows from \eqref{eq:main-asymp} that 
any MP credible set %
can be ``enlarged'' by an amount of $o(\BayesError[n])$ w.r.t.~$\|\cdot\|$ to contain a Bayesian credible set with an asymptotically equivalent nominal level, and  
the modification is asymptotically negligible compared with the ``average-case spread'' of $\pi_n$, as measured by $\BayesError[n]$. 
Consequently, we can see that the MP will provide useful uncertainty estimates, whenever $\pi_n$ can be assumed to do so.  

\vshrink{-0.2em}
\subsection{Examples}\label{sec:theory-examples}
\vshrink{-0.2em}

\ifdefined\doublecolumn
\subsubsection{Exponential Family and Sequential MLE}\label{sec:ex-expfam}
\else
\subsubsection{Exponential Family Models and Sequential MLE}\label{sec:ex-expfam}
\fi
\vshrink{-0.1em}

Let $\bar p_\eta(z) \propto e^{\eta^\top T(z) - A(\eta)}$ be an exponential family model \citep{wainwright2008graphical} with natural parameter $\eta$, and $\theta(\eta) := \EE_{z\sim \bar p_\eta} T(z)$ denote the mean parameter. Then $\theta = \nabla_\eta A$, 
and we can use %
$p_\theta := \bar p_{(\nabla A)^{-1}(\theta)}$ to denote the model distribution corresponding to $\theta$. 
Consider the sequential MLE algorithm: for any set of $n$ observations 
$\{z_i\}_{i=1}^n$ it returns 
$\hat\theta_n = \frac{1}{n}\sum_{i=1}^n T(z_i)$. It can be equivalently expressed as 
\begin{equation}\label{eq:seq-MLE-EF}
\Alg[j](\EstParam[j-1], z_{j}) := \EstParam[j-1] + j^{-1}(T(z_j) - \EstParam[j-1]). 
\end{equation}
Note that \eqref{eq:seq-MLE-EF} is equivalent to \emph{natural gradient} with step-size $%
j^{-1}$ \citep{amari2016information} and thus generalises \eqref{eq:gd-martingale}. 

We choose $\pi$ to be a conjugate prior determined by the following density for %
$\eta$:
$
\bar\pi(\eta) \propto e^{\eta^\top \theta_\pi - \alpha A(\eta)}.
$ $\alpha>0, \theta_\pi\in\RR^d$
are the prior hyperparameters. 
We impose the following assumptions which are rather mild: $n+\alpha>2, \alpha = O(1), \BayesError^2 = O(d/n)$ where $d = \dim\theta$, the function $T$ is $L$-Lipschitz.  Then 
it can be readily verified that 
Asm.~\ref{asm:approx-martingale}, \ref{asm:stable}~(i), \ref{asm:martingale-divergence} hold with any $\delta>0, 
\eta_j = (j+1)^{-1}, L_1=1, L_2=L, 
H_{\theta,j}= I,\Calg=0,\Calg'=1$, and  Asm.~\ref{asm:conventions} holds when $n\gtrsim \sqrt{d}$. 
We prove in App.~\ref{app:ex-expfam-deriv} that 
Asm.~\ref{asm:replaced-efficiency} holds for all $s<\min\{1,\delta\}$ and $\nu_l\le 2\alpha l^{-1+s}$.

Validation of Asm.~\ref{asm:stable}~(ii) is more challenging, due to a somewhat lack of understanding of Wasserstein distance properties for exponential family models. 
We first note that it can be verified on a case-by-case basis by studying transport plans; in this way we can verify that the Gaussian model $p_\theta = \cN(\theta, \Sigma_0)$, and the exponential model $p_\theta = \mrm{Exp}(\theta)$ satisfy its \emph{(a)} with $C_\Theta=O(\|\Sigma_0\|_{op}^{-1})$ and $C_\Theta=1$, respectively, and the Bernoulli model satisfies its \emph{(b)} with $C_\Theta=8$. 
Another scenario %
where a similar condition holds is when %
$\sup_{z\in\cZ}\|T(z)\|$ is bounded and the eigenvalues of the Fisher information matrices are %
bounded from both sides. See App.~\ref{app:ex-expfam-deriv} for a detailed discussion. 

When the assumption holds, Thm.~\ref{thm:param-alt} will establish the bound %
$
W_{2,\theta}^2(\hat p_{mp,n}, \pi_n) \lesssim %
n^{-1/2} \BayesError[n]^2.
$
Note \emph{this applies to the pre-asymptotic regime} $\sqrt{d}\lesssim n\lesssim d$ when the estimation error %
is $\|\hat \theta_n - \theta_0\|\gtrsim 1$. 

\vshrink{-0.2em}
\subsubsection{Regularised Algorithms in High Dimensions} %
\vshrink{-0.2em}
\label{sec:ex-lingauss} 

The above example involves unregularised MLE %
which is known to %
perform poorly %
on some high-dimensional problems. %
We now present %
a high-dimensional 
example where a regularised variant of the MP %
enjoys good guarantees. %
This example further connects to Gaussian processes (GP) regression. 

\vshrink{-0.5em}
\paragraph{A linear-Gaussian inverse problem.}
Let $(\cH,\cZ)$ be two %
Hilbert spaces for $\theta$ and $z$, respectively, and $A:\cH\to\cZ$ be a Hilbert-Schmidt operator.
Suppose $z_{1:n}$ is generated by 
$
z_i\mid\theta\sim\cN_\cZ(A\theta, I)
$
where $\cN_\cZ$ denotes the shifted iso-normal process on $\cZ$ \citep[see e.g.,]{van2008reproducing}.
We define an MP using %
\ifdefined\doublecolumn
\begin{equation}\label{eq:lingauss-alg}
\Alg[j](\theta, z) := \theta + \eta_j G_j \nabla\log p(z;\theta)
\end{equation}
where  
$\eta_j = j^{-1}, ~G_j = (A^\top A + j^{-1} I)^{-1},$
\else
\begin{equation}\label{eq:lingauss-alg}
\Alg[j](\theta, z) := \theta + \eta_j G_j \nabla\log p(z;\theta) ~~~
\text{ where }~~\eta_j = j^{-1}, ~G_j = (A^\top A + j^{-1} I)^{-1},
\end{equation}
\fi
and compare %
with the posterior $\pi_n$ defined by %
$
    \pi = \cN_\cH(0, I).
$
The setup is closely related to the classical %
inverse problems defined by %
white noise \citep{cavalier2008nonparametric}. 
Following a convention in that literature, we assume the singular values $s_i(A)\asymp i^{-\beta}$, %
and adopt the norm %
$\|\theta-\theta'\| = \|(A^\top A)^{\alpha/2} (\theta-\theta')\|_\cH$ 
where $\beta>1/2, \alpha\in\RR$ are problem parameters. 
When $\alpha=1$, we can view 
the problem as regression in a Sobolev space with $\|\cdot\|$ equivalent %
to the $L_2$ norm.
See App.~\ref{app:wn} for details. 

As we prove in App.~\ref{app:wn}, all assumptions in \S\ref{sec:main-results}
hold with the above 
$\eta_j$, %
all choices of $(\{\nu_j\}, \iota, \delta, s)$ 
and $L_1=L_2=C_\Theta=\Calg'=1,\Calg=0$. 
Thm.~\ref{thm:param-alt} thus applies and gives a bound of $\cO(\BayesError[n]^2/n)$. Note \emph{the result does not depend on the extrinsic dimensionality of $\theta$}.%

\begin{remark}
Note that $\{\Alg\}$ would produce the same output as the posterior mean had we applied it to %
$\pi$-generated data. 
However, the result above is non-trivial, because the samples used to define the MP, $\{\EstData[j]\}_{j=n+1}^\infty$, are quite different from samples from the posterior predictive distribution: the latter is defined by a mixture of parameters, the full posterior, whereas $\{\EstData[j]\}$ is defined by a single point estimate. 
It is thus interesting that $\EE_\pi W_2^2(\pi_n, \hat p_{mp,n})/\BayesError^2$ is bounded by a dimension-free factor. 
\end{remark}

\vshrink{-0.75em}
\paragraph{Connections to GP regression.} The above example connects to GP regression through its connection to inverse problems that are %
asymptotically equivalent to regression %
\citep{cavalier2008nonparametric}. Alternatively, we
can observe that if we set $\cH$ to be a reproducing kernel Hilbert space, %
$\pi$ will reduce to the respective standard GP prior, %
and the operator $A: \cH\ni f\mapsto (f(x_1),\ldots, f(x_n))$ is Hilbert-Schmidt; %
hence, the above derivations 
should apply to GPs. %

We refer readers to App.~\ref{app:wn} 
for a detailed discussion of the above, %
where we also note that %
\eqref{eq:lingauss-alg} can be used for GP inference. 
However, the following algorithm provides a more practical alternative:
\ifdefined\doublecolumn
\begin{align*}
&\textstyle
\EstParam[j+1] :=\argmin_{\theta\in\cH}\Bigl( \sum_{i=1}^j (f_{\EstParam[j]}(x_i) - f_\theta(x_i))^2 + 
\\ &\hspace{5em}
\textstyle
(f_\theta(\hat x_{j+1}) - \hat y_{j+1})^2 + \frac{1}{n}\|\theta - \EstParam[j]\|_\cH^2
\Bigr), 
\numberthis\label{eq:gp-alg-spo}
\end{align*}
\else
\begin{equation}
\EstParam[j+1] :=\argmin_{\theta\in\cH}\, \sum_{i=1}^j (f_{\EstParam[j]}(x_i) - f_\theta(x_i))^2 + 
(f_\theta(\hat x_{j+1}) - \hat y_{j+1})^2 + \frac{1}{n}\|\theta - \EstParam[j]\|_\cH^2, \label{eq:gp-alg-spo}
\end{equation}
\fi
where $f_\theta$ denotes the regression function defined by $\theta$, 
$\hat x_{j+1}$ is set according to Rem.~\ref{rem:supervised-learning}, 
$\hat y_{j+1}\sim p(\hat y_{j+1}\mid f(X)=\EstParam[j], \hat x_{j+1})$, and 
with a slight abuse of notation we use $(x_i,y_i)$ to refer to the $i$-th (real or synthetic) observation received by the algorithm. 
As we verify in App.~\ref{app:wn}, Eq.~\eqref{eq:gp-alg-spo} is based on the same principle of iterative maximum-a-posteriori estimation as \eqref{eq:lingauss-alg}. %

Similar to some previous works on GP inference \citep{osband2018randomized,he2020bayesian,pearce20a}, we can implement \eqref{eq:gp-alg-spo} using random feature approximations for $\cH$; 
the resulted algorithm can also be applied to overparameterised random feature models that represent a simplified %
model for DNNs \citep{lee2019wide}. 
It is also worth noting a line of theoretical work on \emph{multi-task learning} \citep{tripuraneni2020theory,du2020few,tripuraneni2021provable,wang2022fast}, which proved in a stylised setting that it is possible to learn an approximation of $\cH$ that performs well on i.i.d.~test tasks; 
thus the premise and implications of Thm.~\ref{thm:param-alt} may hold true, 
which will provide a non-trivial (albeit stylised) example where predictive uncertainty quantification can be aided by pretraining data. 
In App.~\ref{app:gp-mtl} we confirm this through numerical simulations on a similar setup. %
Finally, from a methodological perspective, \eqref{eq:gp-alg-spo} is also interesting because 
as we discuss shortly, it is connected to a \emph{function-space Bregman divergence} of the likelihood loss \citep{bae_if_2022}, which motivates the use of similar objectives in broader scenarios. 

\section{MP-Inspired Uncertainty %
for General Algorithms}\label{sec:method}
\vshrink{-0.1em}

\S\ref{sec:theory-examples}
illustrates the efficacy of the MP \eqref{eq:MP-for-analysis} for uncertainty quantification %
when it is instantiated with a sequential MLE algorithm or its regularised variants. 
The results suggest that similar procedures should be broadly applicable, even to models beyond the scope of the analysis. We now discuss the implementation of such an MP-inspired scheme.

\vshrink{-0.5em}
\ifdefined\doublecolumn
\paragraph{An 
``iterative parametric bootstrap'' scheme.} 
\else
\paragraph{From MLE/MP to an 
``iterative parametric bootstrap'' scheme.} %
\fi
As \eqref{eq:MP-for-analysis} is based on sequential sampling and refitting, 
it is natural to generalise the procedure as follows: 
\vshrink{-0.6em}
\begin{algorithm}[H]
    \caption{~MP-inspired uncertainty quantification}\label{alg:main}
\begin{enumerate}[leftmargin=0.6cm,topsep=0pt]%
\item\label{it:alg1} Initialisation: $D_n := z_{1:n}, \EstParam[n] \gets \cA_0(D_n)$
\item\label{it:alg2} for $j\gets n,n+1,\ldots,n+\lfloor N/\Delta n\rfloor$
\begin{enumerate}[leftmargin=0.5cm]
    \item Sample $\EstData[n_j: n_j+\Delta n]\sim p_{\EstParam[j]}$;  $D_{j+1}\gets D_j\cup\EstData[n_j:n_j+\Delta n]$
    \item $\EstParam[j+1] \gets \cA(D_{j+1}; \EstParam[j])$ 
\end{enumerate}
\item Repeat \ref{it:alg1}--\ref{it:alg2} for $K$ times; use the resulted $\{\EstParam[n+\lfloor N/\Delta n\rfloor]^{(k)}\}_{k=1}^K$ to form an ensemble predictor
\end{enumerate}
\vshrink{-0.1em}
\end{algorithm}
\vshrink{-0.85em}

In the above, $(\cA_0(D), \cA(D; \theta))$ denote a general estimation algorithm. %
The analysis in \S\ref{sec:theory} justifies the use of algorithms that are connected to sequential MLE, and 
loosely suggests that any $\cA$ may be used if it is sample efficient in the sense of \eqref{eq:multitask-main}. To accelerate the computational process, we allow $\cA$ to resume from the previous iteration's optimum $\theta$ if %
possible. 
Compared with \eqref{eq:MP-for-analysis},  %
we also modify the procedure to process $\Delta n>1$ samples at each iteration. %

Alg.~\ref{alg:main} has a form similar to \emph{parametric bootstrap} \citep{efron2012bayesian}, 
which correspond to setting $\Delta n=N=n$ %
and \emph{discarding} the original dataset $\{z_{1:n}\}$ when estimating $\EstParam[j+1]^{(k)}$. 
With the differences in Alg.~\ref{alg:main} 
we may expect to achieve better performance.  
This is suggested by the analysis in \S\ref{sec:theory} which may become applicable at $\Delta n=1$, and we will also support this claim with experiments and theoretical examples. 

\vshrink{-0.5em}
\paragraph{A modified objective for DNNs.} %
Many ML algorithms can be directly plugged into Alg.~\ref{alg:main}. 
For DNN-based estimation algorithms, however, it may be preferable to modify the base algorithm to explicitly model the effect of early stopping:  
while DNNs are often trained to minimise a (regularised) empirical risk, due to early stopping the resulted 
$\EstParam[j]$ may not reach the optimum of its respective objective w.r.t.~$D_j$. When processing new samples, %
it can be desirable to avoid further optimisation on the part of the training loss that corresponds to $D_j$. 
For this purpose we adopt the modification in \cite{bae_if_2022}: 
suppose the original objective for $\EstParam[j+1]$ has the form of $
\sum_{z\in D_{j+1}} \ell(f(z;\theta), z)$, where $f(z;\theta)$ denotes the output from the DNN, 
we adopt the following modified algorithm for $\EstParam[j+1]$,
\ifdefined\doublecolumn
\begin{align*} &\textstyle 
\!\!\!\cA(D_{j+1};\EstParam[j]) := \arg\!\min_{\theta} \bigl[
\sum_{z\in D_j} \bar\ell_B(f(z;\theta), z; f(z;\EstParam[j])) 
\\&\textstyle\hspace{9.4em}
 + \sum_{l=n_j}^{n_j+\Delta n}\ell(f(\EstData[l];\theta), \EstData[l])\bigr],
 \numberthis\label{eq:nn-objective}
\end{align*}
\else
\begin{align*} %
\cA(D_{j+1};\EstParam[j]) := \arg\min_{\theta} 
\sum_{z\in D_j} \bar\ell_B(f(z;\theta), z; f(z;\EstParam[j])) 
 + \sum_{l=n_j}^{n_j+\Delta n}\ell(f(\EstData[l];\theta), \EstData[l]), %
 \numberthis\label{eq:nn-objective}
\end{align*}
\fi
where 
$
\bar\ell_B(f, z; \bar f) := \ell(f,z) - \ell(\bar f, z) - \nabla_f \ell(\bar f, z) (f-\bar f)
$ is a function-space Bregman divergence. %
As long as $\ell(f, z)$ is convex w.r.t.~the \emph{function value} $f$ (e.g., if $\ell$ is the square loss or cross-entropy loss), the first term of \eqref{eq:nn-objective} is always minimised by the old $\EstParam[j]$, thus retaining the regularisation effect of early stopping. 
As an example, when $\ell$ is the squared loss for regression, \eqref{eq:nn-objective} will have the form of \eqref{eq:gp-alg-spo} (modulo regularisation). 
\eqref{eq:nn-objective} can be augmented with explicit regularisers if desired, and the resulted algorithm $\cA$ can be plugged into Alg.~\ref{alg:main}. %
We discuss implementation details in App.~\ref{app:impl-details}. 

\vshrink{-0.5em}
\paragraph{Comparison to classical bootstrap.} Alg.~\ref{alg:main} is broadly similar to bootstrap aggregation \citep{breiman1996bagging}: both build an ensemble of model parameters by estimating on perturbed versions of the training set. However, in contrast to 
classical bootstrap schemes which only have asymptotic guarantees, Alg.~\ref{alg:main} can be justified in the small-sample regime (\S\ref{sec:theory}). 
App.~\ref{app:cmp-bagging} further 
presents concrete examples where Alg.~\ref{alg:main} has a more desirable theoretical behaviour %
when the training data is not sufficiently informative. %
This is consistent with %
\S\ref{sec:exp} where we find our %
method to perform better empirically. 
Broadly similar limitations for nonparametric bootstrap are also known in various contexts \citep{nixon2020why,davidson2010wild}. 

While we have focused on uncertainty quantification for deterministic algorithms that do not maintain any notion of parameter (i.e., epistemic) uncertainty, it is worth noting that Alg.~\ref{alg:main} can also be applied to ``fully Bayesian'' algorithms that sample %
from the posterior, in which case it will not overestimate the epistemic uncertainty; see \cite[\S 2.1]{fong_martingale_2021}, or \S\ref{sec:bg}.\footnote{
A major difference between our work and \cite{fong_martingale_2021} is that we allow the use of deterministic estimation algorithms, which is justified by \S\ref{sec:theory}. \cite{fong_martingale_2021} requires the algorithm $\cA$ to satisfy coherence conditions (e.g., defining c.i.d.~%
samples); this precludes choices of $\cA$ such as GD or %
MLE, and implies that $\cA$ already maintains a coherent notion of epistemic uncertainty (App.~\ref{app:related-work}).
} This is in stark contrast to conventional bootstrap, which %
will overestimate parameter uncertainty given %
such $\cA$. %

\vshrink{-0.25em}
\section{Experiments}\label{sec:exp}
\vshrink{-0.2em}

In this section we evaluate the proposed method empirically across a variety of ML tasks. %
Additional simulations are presented in App.~\ref{app:exps}, which provide more direct validation of the claims in \S\ref{sec:theory}.

\paragraph{Hyperparameter learning for GP regression.} We investigate whether the proposed method could alleviate overfitting in GP hyperparameter learning \citep[\S 5.1]{williams2006gaussian}. 
We instantiate our method (\texttt{IPB}) 
using empirical Bayes (\texttt{EB}) as the base estimation algorithm, 
and compare it 
with nonparametric bootstrap (\texttt{BS}) 
and vanilla ensemble (\texttt{Ens}) based on initialisation randomness, both applied to \texttt{EB} as well.  
We adopt GP models with a Mat\'ern-3/2 kernel and a Gaussian likelihood; hyperparameters include 
a vector-valued kernel bandwidth \citep{neal1996bayesian} %
and the likelihood variance. 
We evaluate %
on %
9 UCI datasets %
used in %
\citep{sun2018functional,wang2018function,ma2019variational,salimbeni2017doubly,dutordoir2020sparse} and subsample $n\in \{75, 300\}$ observations for training. 
We report the following metrics: root mean-squared error (RMSE), negative log predictive density (NLPD) and continuous ranked probability score (CRPS). 
All experiments are repeated on 50 random train/test splits.
For space reasons, we defer full details and 
results to App.~\ref{app:exp-gp}, and report the average rank of each method %
in Table~\ref{tbl:gp-main}. We can see that the proposed method achieves the best overall performance. %

\ifdefined\doublecolumn
\vshrink{-1em}
\begin{table}[h] 
    \centering
\caption{GP experiment: average rank across all datasets for each metric. %
Boldface denotes the best method. See Table~\ref{tbl:gp-full} in appendix for full results, including statistical significance tests. 
}\label{tbl:gp-main}
\resizebox{.95\linewidth}{!}{%
    \begin{tabular}[ht]{ccccccccc} \toprule
\multirow{2}[2]{*}{Metric} &  \multicolumn{4}{c}{$n=75$} & \multicolumn{4}{c}{$n=300$} \\
         \cmidrule(lr){2-5}  \cmidrule(lr){6-9} 
        & EB	&  BS	&  Ens	&  IPB	&  EB	&  BS	&  Ens	&  IPB	
        \\ \midrule 
        RMSE 	& $3.1$	& $2.7$	& $2.4$	& $\mathbf{1.4}$   
            & $2.9$	& $3.0$	& $2.0$	& $\mathbf{1.1}$ \\ 
        NLPD	& $3.0$	& $2.0$	& $2.6$	& $\mathbf{1.6}$   
            & $2.7$	& $3.0$	& $2.2$	& $\mathbf{1.1}$ \\
        CRPS	& $3.0$	& $2.3$	& $2.6$	& $\mathbf{1.4}$   
            & $2.7$	& $3.3$	& $2.1$	& $\mathbf{1.1}$ \\
\bottomrule\end{tabular}
}
\vshrink{-0.1em}
\end{table}
\else
\vshrink{-0.3em}
\begin{table}[h] \small
    \centering
\caption{GP experiment: average rank across all datasets for each metric. %
Boldface denotes the best method.
See Table~\ref{tbl:gp-full} in appendix for full results, including statistical significance tests. 
}\label{tbl:gp-main}
\vspace{0.2em}
    \begin{tabular}[ht]{ccccccccc} \toprule
\multirow{2}[2]{*}{Metric} &  \multicolumn{4}{c}{$n=75$} & \multicolumn{4}{c}{$n=300$} \\
         \cmidrule(lr){2-5}  \cmidrule(lr){6-9} 
        & EB	&  BS	&  Ens	&  IPB	&  EB	&  BS	&  Ens	&  IPB	
        \\ \midrule 
        RMSE 	& $3.1$	& $2.7$	& $2.4$	& $\mathbf{1.4}$   
            & $2.9$	& $3.0$	& $2.0$	& $\mathbf{1.1}$ \\ 
        NLPD	& $3.0$	& $2.0$	& $2.6$	& $\mathbf{1.6}$   
            & $2.7$	& $3.0$	& $2.2$	& $\mathbf{1.1}$ \\
        CRPS	& $3.0$	& $2.3$	& $2.6$	& $\mathbf{1.4}$   
            & $2.7$	& $3.3$	& $2.1$	& $\mathbf{1.1}$ \\
\bottomrule\end{tabular}
\vshrink{-0.2em}
\end{table}
\fi

\paragraph{Classification with GBDT and AutoML algorithms.} We now turn to classification and consider %
two base %
algorithms: \emph{(i)} 
gradient boosting decision trees \citep[\emph{GBDT}s,][]{friedman2001greedy} implemented as in XGBoost \citep{chen2016xgboost}, and \emph{(ii)} AutoGluon \citep{agtabular}, an AutoML system that 
aggregates a range of tree and DNN models. %
Both are highly competitive approaches that outperform conventional deep learning methods on tabular data \citep{grinsztajn2022tree,shwartz2022tabular}, yet neither has a natural Bayesian counterpart. 
Our method fills in this important gap, enabling us to mitigate overfitting and quantify uncertainty %
based on Bayesian principles. 

We evaluate on 30 OpenML \citep{bischl2017openml} datasets chosen by \cite{hollmann2022tabpfn}. 
For each algorithm, we apply our method (\texttt{IPB}) and compare with bootstrap aggregation (\texttt{BS}) and the base algorithm without additional aggregation. 
Alg.~\ref{alg:main} is implemented by sampling $\hat x_{n+i}$ from the empirical distribution of past inputs. All hyperparameters, including $(\Delta n, N)$ in Alg.~\ref{alg:main}, are determined using log likelihood on a validation set. 
Experiments are repeated on 10 random train/test splits. 
Full details are deferred to App.~\ref{app:exp-tree}. 

Table~\ref{tbl:tree-main} reports the average test accuracy and negative log likelihood (NLL) across all datasets. We can see that 
for both choices of base algorithms, our method achieves better \emphU{predictive performance} than the base algorithm as well as its bootstrap variant.  
Full results are deferred to App.~\ref{app:exp-tree}, where we further demonstrate that the improvement is consistent across all datasets, and that our method produces informative \emphU{uncertainty estimates} for the feature importance scores from GDBT.

\ifdefined\doublecolumn
\vshrink{-0.8em}
\begin{table}[ht]
    \centering 
\caption{Classification experiment: average ranks across 30 OpenML datasets. 
Boldface indicates the best result within each group of methods.
See App.~\ref{app:exp-tree} for full results and statistical significance tests.
}\label{tbl:tree-main}
\resizebox{.99\linewidth}{!}{%
\begin{tabular}[h]{ccccccc}
    \toprule 
\multirow{2}{*}[-0.2em]{Metric} & \multicolumn{3}{c}{GDBT} & \multicolumn{3}{c}{AutoML} \\ 
\cmidrule(lr){2-4}  \cmidrule(lr){5-7}
    & (Base) & + BS & + IPB & (Base) & + BS & + IPB 
    \\ \midrule 
    NLL
    & $4.77$ &  $4.33$ &  $\mathbf{3.20}$ & $3.60$ & $3.03$ & $\mathbf{2.07}$ \\
    Accuracy
    &  $4.87$ &  $4.43$ & $\mathbf{3.23}$ & $3.50$ & $2.50$ & $\mathbf{2.47}$ \\
    \bottomrule
\end{tabular}
}
\vshrink{-0.6em}
\end{table}

\else
\begin{table}[ht]\small
    \centering 
\setlength{\tabcolsep}{5pt}
\caption{Classification experiment: average test metrics and ranks across 30 OpenML datasets. 
Boldface indicates the best result within each group of methods.
Ranks are calculated by sorting across all six methods. See App.~\ref{app:exp-tree} for full results, including statistical significance tests.
}\label{tbl:tree-main}
\vspace{0.2em}
\resizebox{.98\linewidth}{!}{%
\begin{tabular}[h]{ccccccc}
    \toprule 
\multirow{2}{*}[-0.2em]{Metric} & \multicolumn{3}{c}{GDBT (XGBoost)} & \multicolumn{3}{c}{AutoML (AutoGluon)} \\ 
\cmidrule(lr){2-4}  \cmidrule(lr){5-7}
    & (Base) & + BS & + IPB & (Base) & + BS & + IPB 
    \\ \midrule 
    NLL	/ Avg.~Rank  
    & $0.215$ / $4.77$ & $0.207$ / $4.33$ & $\mathbf{0.200}$ / $\mathbf{3.20}$ & $0.215$ / $3.60$ & $0.190$ / $3.03$ & $\mathbf{0.185}$ / $\mathbf{2.07}$ \\
    Accuracy / Avg.~Rank	
    & $90.4$ / $4.87$ & $90.7$ / $4.43$ & $\mathbf{90.9}$ / $\mathbf{3.23}$ & $91.0$ / $3.50$ & $91.3$ / $2.50$ & $\mathbf{91.5}$ / $\mathbf{2.47}$ \\
    \bottomrule
\end{tabular}
}
\vshrink{-0.6em}
\end{table}
\fi

\paragraph{Interventional density estimation with diffusion models.} 
Finally, we present a set of NN-based experiments on %
the estimation of interventional distributions \citep{pearl2009causality} given a %
causal graph. 
Such a task can be seen as conditional density estimation but involves distribution shifts induced by the intervention. 
Recent works demonstrated the efficacy of deep generative models \citep{sanchez2022vaca,khemakhem2021causal,chao2023interventional} on this task. 
We are interested in whether our algorithm could lead to further improvements by better accounting for predictive uncertainty, which can be especially relevant here due to the distribution shift present. 

We instantiate Alg.~\ref{alg:main} using %
diffusion models following \cite{chao2023interventional}, and employ the modified objective \eqref{eq:nn-objective}. %
We evaluate on two sets of datasets: \emph{(i)} 8 synthetic datasets in \cite{chao2023interventional}; \emph{(ii)} a set %
of real-world fMRI datasets constructed by \cite{khemakhem2021causal}. %
In both cases we repeat all experiments 30 times, %
using independently sampled train/validation splits and initialisation for NN parameters. 
See App.~\ref{app:exp-dj} for full details. 

For the synthetic datasets, we compute the %
maximum mean discrepancy (MMD) w.r.t.~the ground truth on a grid of %
queries following \cite{chao2023interventional}. 
We compare with other ensemble methods applied to the same model: parametric (\texttt{PB}) and nonparametric (\texttt{BS}) bootstrap, deep ensemble (\texttt{Ens}), and 
the method of \citet[\texttt{NTKGP}]{he2020bayesian}. %
We choose these baselines because \texttt{Ens} has demonstrated strong performance in previous benchmarks %
\citep[e.g.,][]{gustafsson2020evaluating,ovadia2019can}, 
and \texttt{NTKGP} is motivated from a wide NN 
setup similar to \S\ref{sec:ex-lingauss}. 
As shown in Table~\ref{tbl:dj-synth-main}, 
the proposed method (\texttt{IPB}) achieves the best \emphU{predictive performance} across all datasets. Full results are deferred to App.~\ref{app:exp-dj}, where we further evaluate \emphU{uncertainty quantification} through the coverage of credible/confidence intervals;  %
we %
find our method generally provides the best coverage, followed by the bootstrap baselines. 

\ifdefined\doublecolumn
\begin{table}[ht] \small
\vshrink{-0.5em}
    \centering 
\caption{Density estimation: average rank across all synthetic datasets. Boldface indicates the best result. See App.~\ref{app:exp-dj} for full results and significance tests.}
\label{tbl:dj-synth-main}
\begin{tabular}[h]{cccccc}
\toprule
$n$ & PB & Ens. & NTKGP & BS & IPB \\ \midrule 
$100$	& $3.6$	& $1.9$	& $5.0$	& $3.1$	& $\mathbf{1.0}$	\\
$1000$	& $4.0$	& $1.9$	& $5.0$	& $2.4$	& $\mathbf{1.2}$	
\\ \bottomrule
\end{tabular}
\end{table}
\else
\begin{table}[ht] \small
\vshrink{-0.5em}
    \centering 
\caption{Interventional density estimation: average rank across all synthetic datasets. Boldface indicates the best result. See App.~\ref{app:exp-dj} for full results and significance tests.}
\vspace{0.2em}
\label{tbl:dj-synth-main}
\begin{tabular}[h]{cccccc}
\toprule
$n$ & PB & Ens. & NTKGP & BS & IPB \\ \midrule 
$100$	& $3.6$	& $1.9$	& $5.0$	& $3.1$	& $\mathbf{1.0}$	\\
$1000$	& $4.0$	& $1.9$	& $5.0$	& $2.4$	& $\mathbf{1.2}$	
\\ \bottomrule
\end{tabular}
\end{table}
\fi

On the fMRI datasets, we report the median absolute error %
following \cite{khemakhem2021causal,chao2023interventional}, as well as CRPS which better evaluates %
the estimation quality for the entire interventional distribution. 
\ifdefined\doublecolumn
We compare with the flow-based method of \cite[\texttt{Flow}]{khemakhem2021causal} and the nonlinear baseline therein (\texttt{ANM}), 
\else
We compare with the flow-based method of \cite[\texttt{Flow}]{khemakhem2021causal} and the baselines therein (\texttt{Linear}, \texttt{ANM}), 
\fi
as well as the same diffusion model combined with deep ensemble (\texttt{D+Ens}) and nonparametric bootstrap (\texttt{D+BS}). As shown in Table~\ref{tbl:fmri}, our method (\texttt{D+IPB}) achieves the best predictive performance.

\begin{table}[ht]\small
\vshrink{-0.5em}
    \centering 
\caption{%
Results for the fMRI datasets. Boldface indicates the best result 
($p<0.05$ in a $Z$ test). 
}\label{tbl:fmri}
\vspace{0.2em}
\ifdefined\doublecolumn
\setlength{\tabcolsep}{3.6pt}
\resizebox{.99\linewidth}{!}{%
\begin{tabular}[h]{ccccccc}
\toprule 
Metric &  ANM & Flow & D+Ens & D+BS & D+IPB \\ 
\midrule
CRPS 
&         $.551${\tiny $\pm .01$}
&         $.546${\tiny $\pm .02$}
&         $.520${\tiny $\pm .00$}
& $\mathbf{.518}${\tiny $\pm .00$}
& $\mathbf{.518}${\tiny $\pm .00$}
\\ 
Abs.~Err
&         $.655${\tiny $\pm .01$}
& $\mathbf{.605}${\tiny $\pm .02$}
&         $.609${\tiny $\pm .01$}
&         $.611${\tiny $\pm .01$}
& $\mathbf{.604}${\tiny $\pm .00$}
\\ \bottomrule
\end{tabular}
}
\vshrink{-0.2em}
\else
\begin{tabular}[h]{ccccccc}
\toprule 
Metric & Linear & ANM & Flow & D+Ens & D+BS & D+IPB \\ 
\midrule
CRPS 
&         $.738${\tiny $\pm .10$}
&         $.551${\tiny $\pm .01$}
&         $.546${\tiny $\pm .02$}
&         $.520${\tiny $\pm .00$}
& $\mathbf{.518}${\tiny $\pm .00$}
& $\mathbf{.518}${\tiny $\pm .00$}
\\ 
Abs.~Err
&         $.658${\tiny $\pm .03$}
&         $.655${\tiny $\pm .01$}
& $\mathbf{.605}${\tiny $\pm .02$}
&         $.609${\tiny $\pm .01$}
&         $.611${\tiny $\pm .01$}
& $\mathbf{.604}${\tiny $\pm .00$}
\\ \bottomrule
\end{tabular}
\vshrink{-0.75em}
\fi
\end{table}

\section{Conclusion}\label{sec:conclusion}
\vshrink{-0.4em}

We studied %
uncertainty quantification using general ML algorithms, starting from the postulation that commonly used 
algorithms may be near-Bayes optimal on an unknown task distribution. 
We proved in simplified settings that 
it is possible to recover the unknown but optimal Bayesian posterior 
by constructing a martingale posterior, %
and proposed a novel method which is applicable across NN and non-NN models.  Experiments %
confirmed the efficacy of the method.

Our work has various limitations, which we discuss in detail in App.~\ref{app:addi-disc}. %
Briefly, it would be interesting to investigate the use of ML algorithms that satisfy weaker conditions for stability and efficiency, as well as %
stochastic algorithms that may have an imperfect notion of parameter uncertainty. 
We hope that our results demonstrate the potential of the algorithmic %
perspective for Bayesian uncertainty quantification, %
and that it may inspire further investigation in this direction. 

\bibliographystyle{IEEEtranN}
\bibliography{main}

\begin{thebibliography}{108}
\providecommand{\natexlab}[1]{#1}
\providecommand{\url}[1]{#1}
\csname url@samestyle\endcsname
\providecommand{\newblock}{\relax}
\providecommand{\bibinfo}[2]{#2}
\providecommand{\BIBentrySTDinterwordspacing}{\spaceskip=0pt\relax}
\providecommand{\BIBentryALTinterwordstretchfactor}{4}
\providecommand{\BIBentryALTinterwordspacing}{\spaceskip=\fontdimen2\font plus
\BIBentryALTinterwordstretchfactor\fontdimen3\font minus
  \fontdimen4\font\relax}
\providecommand{\BIBforeignlanguage}[2]{{%
\expandafter\ifx\csname l@#1\endcsname\relax
\typeout{** WARNING: IEEEtranN.bst: No hyphenation pattern has been}%
\typeout{** loaded for the language `#1'. Using the pattern for}%
\typeout{** the default language instead.}%
\else
\language=\csname l@#1\endcsname
\fi
#2}}
\providecommand{\BIBdecl}{\relax}
\BIBdecl

\bibitem[Sun et~al.(2018)Sun, Zhang, Shi, and Grosse]{sun2018functional}
S.~Sun, G.~Zhang, J.~Shi, and R.~Grosse, ``Functional {Variational} {B}ayesian
  neural networks,'' in \emph{International {Conference} on {Learning}
  Representations}, 2018.

\bibitem[Wang et~al.(2018)Wang, Ren, Zhu, and Zhang]{wang2018function}
Z.~Wang, T.~Ren, J.~Zhu, and B.~Zhang, ``Function {Space} {Particle}
  {Optimization} for {B}ayesian neural networks,'' in \emph{International
  {Conference} on {Learning} Representations}, 2018.

\bibitem[Ma et~al.(2019)Ma, Li, and Hern{\'a}ndez-Lobato]{ma2019variational}
C.~Ma, Y.~Li, and J.~M. Hern{\'a}ndez-Lobato, ``Variational {Implicit}
  processes,'' in \emph{International {Conference} on {Machine}
  Learning}.\hskip 1em plus 0.5em minus 0.4em\relax PMLR, 2019, pp. 4222--4233.

\bibitem[Aitchison(2020{\natexlab{a}})]{aitchison2020statistical}
L.~Aitchison, ``A {Statistical} {Theory} of {Cold} {Posteriors} in {Deep}
  {Neural} networks,'' \emph{arXiv preprint arXiv:2008.05912}, 2020.

\bibitem[Fortuin et~al.(2021)Fortuin, Garriga-Alonso, Ober, Wenzel, Ratsch,
  Turner, van~der Wilk, and Aitchison]{fortuin2021bayesian}
V.~Fortuin, A.~Garriga-Alonso, S.~W. Ober, F.~Wenzel, G.~Ratsch, R.~E. Turner,
  M.~van~der Wilk, and L.~Aitchison, ``Bayesian {Neural} {Network} {Priors}
  revisited,'' in \emph{International {Conference} on {Learning}
  Representations}, 2021.

\bibitem[Kapoor et~al.(2022)Kapoor, Maddox, Izmailov, and
  Wilson]{kapoor2022uncertainty}
S.~Kapoor, W.~J. Maddox, P.~Izmailov, and A.~G. Wilson, ``On {Uncertainty},
  {Tempering}, and {Data} {Augmentation} in {Bayesian} classification,''
  \emph{Advances in Neural Information Processing Systems}, vol.~35, pp.
  18\,211--18\,225, 2022.

\bibitem[Karmaker et~al.(2021)Karmaker, Hassan, Smith, Xu, Zhai, and
  Veeramachaneni]{karmaker2021automl}
S.~K. Karmaker, M.~M. Hassan, M.~J. Smith, L.~Xu, C.~Zhai, and
  K.~Veeramachaneni, ``Auto{Ml} to date and beyond: Challenges and
  opportunities,'' \emph{ACM Computing Surveys (CSUR)}, vol.~54, no.~8, pp.
  1--36, 2021.

\bibitem[OpenAI(2023)]{openai2023ft}
OpenAI, ``Fine-tuning service,''
  \url{https://platform.openai.com/docs/guides/fine-tuning/}, 2023.

\bibitem[Pentina and Lampert(2014)]{pentina2014pac}
A.~Pentina and C.~Lampert, ``A {PAC-Bayesian} bound for lifelong learning,'' in
  \emph{International {Conference} on {Machine} Learning}.\hskip 1em plus 0.5em
  minus 0.4em\relax PMLR, 2014, pp. 991--999.

\bibitem[Mikulik et~al.(2020)Mikulik, Del\'{e}tang, McGrath, Genewein, Martic,
  Legg, and Ortega]{NEURIPS2020_d902c3ce}
V.~Mikulik, G.~Del\'{e}tang, T.~McGrath, T.~Genewein, M.~Martic, S.~Legg, and
  P.~Ortega, ``Meta-trained agents implement bayes-optimal agents,'' in
  \emph{Advances in Neural Information Processing Systems}, H.~Larochelle,
  M.~Ranzato, R.~Hadsell, M.~Balcan, and H.~Lin, Eds., vol.~33.\hskip 1em plus
  0.5em minus 0.4em\relax Curran Associates, Inc., 2020, pp. 18\,691--18\,703.

\bibitem[Rothfuss et~al.(2021)Rothfuss, Fortuin, Josifoski, and
  Krause]{rothfuss2021pacoh}
J.~Rothfuss, V.~Fortuin, M.~Josifoski, and A.~Krause, ``Pacoh:
  {Bayes}-{Optimal} {Meta}-{Learning} {With} pac-guarantees,'' in
  \emph{International {Conference} on {Machine} Learning}.\hskip 1em plus 0.5em
  minus 0.4em\relax PMLR, 2021, pp. 9116--9126.

\bibitem[Riou et~al.(2023)Riou, Alquier, and
  Ch{\'e}rief-Abdellatif]{riou2023bayes}
C.~Riou, P.~Alquier, and B.-E. Ch{\'e}rief-Abdellatif, ``{B}ayes meets
  {B}ernstein at the meta level: an analysis of fast rates in meta-learning
  with {PAC-Bayes},'' \emph{arXiv preprint arXiv:2302.11709}, 2023.

\bibitem[Bommasani et~al.(2021)Bommasani, Hudson, Adeli, Altman, Arora, von
  Arx, Bernstein, Bohg, Bosselut, Brunskill,
  et~al.]{bommasani2021opportunities}
R.~Bommasani, D.~A. Hudson, E.~Adeli, R.~Altman, S.~Arora, S.~von Arx, M.~S.
  Bernstein, J.~Bohg, A.~Bosselut, E.~Brunskill \emph{et~al.}, ``On the
  {Opportunities} and {Risks} of {Foundation} models,'' \emph{arXiv preprint
  arXiv:2108.07258}, 2021.

\bibitem[Ferguson(1967)]{ferguson2014mathematical}
T.~S. Ferguson, \emph{Mathematical {Statistics}: A {Decision} {Theoretic}
  Approach}.\hskip 1em plus 0.5em minus 0.4em\relax Academic press, 1967.

\bibitem[Van~der Vaart(2000)]{van2000asymptotic}
A.~W. Van~der Vaart, \emph{Asymptotic Statistics}.\hskip 1em plus 0.5em minus
  0.4em\relax Cambridge university press, 2000, vol.~3.

\bibitem[Fong et~al.(2024)Fong, Holmes, and Walker]{fong_martingale_2021}
E.~Fong, C.~Holmes, and S.~G. Walker, ``Martingale posterior distributions,''
  \emph{Journal of the Royal Statistical Society Series B: Statistical
  Methodology}, vol.~85, no.~5, pp. 1357--1391, 02 2024.

\bibitem[Breiman(1996)]{breiman1996bagging}
L.~Breiman, ``Bagging predictors,'' \emph{Machine learning}, vol.~24, pp.
  123--140, 1996.

\bibitem[Lakshminarayanan et~al.(2017)Lakshminarayanan, Pritzel, and
  Blundell]{lakshminarayanan2017simple}
B.~Lakshminarayanan, A.~Pritzel, and C.~Blundell, ``Simple and {Scalable}
  {Predictive} {Uncertainty} {Estimation} {Using} {Deep} ensembles,''
  \emph{Advances in Neural Information Processing Systems}, vol.~30, 2017.

\bibitem[Lykou and Ntzoufras(2013)]{lykou2013bayesian}
A.~Lykou and I.~Ntzoufras, ``On {Bayesian} {Lasso} {Variable} {Selection} and
  the {Specification} of the {Shrinkage} parameter,'' \emph{Statistics and
  Computing}, vol.~23, pp. 361--390, 2013.

\bibitem[Holmes and Walker(2023)]{holmes_statistical_2023}
C.~C. Holmes and S.~G. Walker, ``\BIBforeignlanguage{en}{Statistical inference
  with exchangeability and martingales},''
  \emph{\BIBforeignlanguage{en}{Philosophical Transactions of the Royal Society
  A: Mathematical, Physical and Engineering Sciences}}, vol. 381, no. 2247, p.
  20220143, May 2023.

\bibitem[Der~Kiureghian and Ditlevsen(2009)]{der2009aleatory}
A.~Der~Kiureghian and O.~Ditlevsen, ``Aleatory or {Epistemic}? {Does} it
  {Matter}?'' \emph{Structural safety}, vol.~31, no.~2, pp. 105--112, 2009.

\bibitem[Kendall and Gal(2017)]{kendall2017uncertainties}
A.~Kendall and Y.~Gal, ``What {Uncertainties} do we {Need} in {B}ayesian deep
  learning for computer vision?'' \emph{Advances in Neural Information
  Processing Systems}, vol.~30, 2017.

\bibitem[Doob(1949)]{doob1949application}
J.~L. Doob, ``Application of the {Theory} of martingales,'' \emph{Le calcul des
  probabilites et ses applications}, pp. 23--27, 1949.

\bibitem[Burt et~al.(2020)Burt, Ober, Garriga-Alonso, and van~der
  Wilk]{burt2020understanding}
D.~R. Burt, S.~W. Ober, A.~Garriga-Alonso, and M.~van~der Wilk, ``Understanding
  {Variational} {Inference} in {Function}-space,'' in \emph{Third {Symposium}
  on {Advances} in {Approximate} {Bayesian} Inference}, 2020.

\bibitem[Jacot et~al.(2018)Jacot, Gabriel, and Hongler]{jacot2018neural}
A.~Jacot, F.~Gabriel, and C.~Hongler, ``Neural {Tangent} {Kernel}:
  {Convergence} and {Generalization} in {Neural} networks,'' \emph{Advances in
  Neural Information Processing Systems}, vol.~31, 2018.

\bibitem[Lee et~al.(2019)Lee, Xiao, Schoenholz, Bahri, Novak, Sohl-Dickstein,
  and Pennington]{lee2019wide}
J.~Lee, L.~Xiao, S.~Schoenholz, Y.~Bahri, R.~Novak, J.~Sohl-Dickstein, and
  J.~Pennington, ``Wide {Neural} {Networks} of any {Depth} {Evolve} as {Linear}
  {Models} {Under} {Gradient} descent,'' \emph{Advances in Neural Information
  Processing Systems}, vol.~32, 2019.

\bibitem[Lee et~al.(2022)Lee, Yun, Nam, Fong, and Lee]{lee2022martingale}
H.~Lee, E.~Yun, G.~Nam, E.~Fong, and J.~Lee, ``Martingale {Posterior} {Neural}
  processes,'' in \emph{The {Eleventh} {International} {Conference} on
  {Learning} Representations}, 2022.

\bibitem[Ghalebikesabi et~al.(2023)Ghalebikesabi, Holmes, Fong, and
  Lehmann]{ghalebikesabi2023quasi}
S.~Ghalebikesabi, C.~C. Holmes, E.~Fong, and B.~Lehmann, ``Quasi-{B}ayesian
  nonparametric density estimation via autoregressive predictive updates,'' in
  \emph{Uncertainty in {Artificial} Intelligence}.\hskip 1em plus 0.5em minus
  0.4em\relax PMLR, 2023, pp. 658--668.

\bibitem[Xu and Raginsky(2022)]{xu2022minimum}
A.~Xu and M.~Raginsky, ``Minimum {Excess} {Risk} in {B}ayesian learning,''
  \emph{IEEE Transactions on Information Theory}, vol.~68, no.~12, pp.
  7935--7955, 2022.

\bibitem[Moulines and Bach(2011)]{moulines2011non}
E.~Moulines and F.~Bach, ``Non-{Asymptotic} {Analysis} of {Stochastic}
  {Approximation} {Algorithms} for {Machine} learning,'' \emph{Advances in
  Neural Information Processing Systems}, vol.~24, 2011.

\bibitem[Wainwright et~al.(2008)Wainwright, Jordan,
  et~al.]{wainwright2008graphical}
M.~J. Wainwright, M.~I. Jordan \emph{et~al.}, ``Graphical {Models},
  {Exponential} {Families}, and {Variational} inference,'' \emph{Foundations
  and Trends{\textregistered} in Machine Learning}, vol.~1, no. 1--2, pp.
  1--305, 2008.

\bibitem[Amari(2016)]{amari2016information}
S.-i. Amari, \emph{Information {Geometry} and its Applications}.\hskip 1em plus
  0.5em minus 0.4em\relax Springer, 2016, vol. 194.

\bibitem[van~der Vaart et~al.(2008)van~der Vaart, van Zanten,
  et~al.]{van2008reproducing}
A.~W. van~der Vaart, J.~H. van Zanten \emph{et~al.}, ``Reproducing {Kernel}
  {H}ilbert spaces of {G}aussian priors,'' \emph{IMS Collections}, vol.~3, pp.
  200--222, 2008.

\bibitem[Cavalier(2008)]{cavalier2008nonparametric}
L.~Cavalier, ``Nonparametric {Statistical} {Inverse} problems,'' \emph{Inverse
  Problems}, vol.~24, no.~3, p. 034004, 2008.

\bibitem[Osband et~al.(2018)Osband, Aslanides, and
  Cassirer]{osband2018randomized}
I.~Osband, J.~Aslanides, and A.~Cassirer, ``Randomized {Prior} {Functions} for
  {Deep} {Reinforcement} learning,'' in \emph{Proceedings of the {32nd}
  {International} {Conference} on {Neural} {Information} {Processing} Systems},
  2018, pp. 8626--8638.

\bibitem[He et~al.(2020)He, Lakshminarayanan, and Teh]{he2020bayesian}
B.~He, B.~Lakshminarayanan, and Y.~W. Teh, ``{B}ayesian deep ensembles via the
  neural tangent kernel,'' \emph{arXiv preprint arXiv:2007.05864}, 2020.

\bibitem[Pearce et~al.(2020)Pearce, Leibfried, and Brintrup]{pearce20a}
T.~Pearce, F.~Leibfried, and A.~Brintrup, ``Uncertainty in neural networks:
  Approximately {B}ayesian ensembling,'' in \emph{Proceedings of the Twenty
  Third International Conference on Artificial Intelligence and Statistics},
  ser. Proceedings of Machine Learning Research, S.~Chiappa and R.~Calandra,
  Eds., vol. 108.\hskip 1em plus 0.5em minus 0.4em\relax PMLR, 26--28 Aug 2020,
  pp. 234--244.

\bibitem[Tripuraneni et~al.(2020)Tripuraneni, Jordan, and
  Jin]{tripuraneni2020theory}
N.~Tripuraneni, M.~Jordan, and C.~Jin, ``On the {Theory} of {Transfer}
  {Learning}: The {Importance} of {Task} diversity,'' \emph{Advances in Neural
  Information Processing Systems}, vol.~33, pp. 7852--7862, 2020.

\bibitem[Du et~al.(2020)Du, Hu, Kakade, Lee, and Lei]{du2020few}
S.~S. Du, W.~Hu, S.~M. Kakade, J.~D. Lee, and Q.~Lei, ``Few-{Shot} {Learning}
  via {Learning} the {Representation}, provably,'' in \emph{International
  {Conference} on {Learning} Representations}, 2020.

\bibitem[Tripuraneni et~al.(2021)Tripuraneni, Jin, and
  Jordan]{tripuraneni2021provable}
N.~Tripuraneni, C.~Jin, and M.~Jordan, ``Provable {Meta}-{Learning} of {Linear}
  representations,'' in \emph{International {Conference} on {Machine}
  Learning}.\hskip 1em plus 0.5em minus 0.4em\relax PMLR, 2021, pp.
  10\,434--10\,443.

\bibitem[Wang et~al.(2022)Wang, Zhou, and Zhu]{wang2022fast}
Z.~Wang, Y.~Zhou, and J.~Zhu, ``Fast {Instrument} {Learning} {With} {Faster}
  rates,'' \emph{Advances in Neural Information Processing Systems}, vol.~35,
  pp. 16\,596--16\,611, 2022.

\bibitem[Bae et~al.(2022)Bae, Ng, Lo, Ghassemi, and Grosse]{bae_if_2022}
J.~Bae, N.~Ng, A.~Lo, M.~Ghassemi, and R.~Grosse, ``If influence functions are
  the answer, then what is the question?'' Sep. 2022, arXiv:2209.05364 [cs,
  stat].

\bibitem[Efron(2012)]{efron2012bayesian}
B.~Efron, ``Bayesian {Inference} and the {Parametric} bootstrap,'' \emph{The
  annals of applied statistics}, vol.~6, no.~4, p. 1971, 2012.

\bibitem[Nixon et~al.(2020)Nixon, Lakshminarayanan, and Tran]{nixon2020why}
\BIBentryALTinterwordspacing
J.~Nixon, B.~Lakshminarayanan, and D.~Tran, ``Why are bootstrapped deep
  ensembles not better?'' in \emph{''I Can't {Believe} It's Not {Better}{!''
  }{Neurips} {2020} Workshop}, 2020. [Online]. Available:
  \url{https://openreview.net/forum?id=dTCir0ceyv0}
\BIBentrySTDinterwordspacing

\bibitem[Davidson and MacKinnon(2010)]{davidson2010wild}
R.~Davidson and J.~G. MacKinnon, ``Wild {Bootstrap} {Tests} for iv
  regression,'' \emph{Journal of Business \& Economic Statistics}, vol.~28,
  no.~1, pp. 128--144, 2010.

\bibitem[Williams and Rasmussen(2006)]{williams2006gaussian}
C.~K. Williams and C.~E. Rasmussen, \emph{Gaussian {Processes} for {Machine}
  Learning}.\hskip 1em plus 0.5em minus 0.4em\relax MIT press Cambridge, MA,
  2006, vol.~2, no.~3.

\bibitem[Neal(1996)]{neal1996bayesian}
R.~Neal, ``Bayesian {Learning} for {Neural} networks,'' \emph{Lecture Notes in
  Statistics}, 1996.

\bibitem[Salimbeni and Deisenroth(2017)]{salimbeni2017doubly}
H.~Salimbeni and M.~Deisenroth, ``Doubly {Stochastic} {Variational} {Inference}
  for {Deep} {G}aussian processes,'' \emph{Advances in Neural Information
  Processing Systems}, vol.~30, 2017.

\bibitem[Dutordoir et~al.(2020)Dutordoir, Durrande, and
  Hensman]{dutordoir2020sparse}
V.~Dutordoir, N.~Durrande, and J.~Hensman, ``Sparse {G}aussian processes with
  spherical harmonic features,'' in \emph{International {Conference} on
  {Machine} Learning}.\hskip 1em plus 0.5em minus 0.4em\relax PMLR, 2020, pp.
  2793--2802.

\bibitem[Friedman(2001)]{friedman2001greedy}
J.~H. Friedman, ``Greedy {Function} {Approximation}: a {Gradient} {Boosting}
  machine,'' \emph{Annals of statistics}, pp. 1189--1232, 2001.

\bibitem[Chen and Guestrin(2016)]{chen2016xgboost}
T.~Chen and C.~Guestrin, ``{Xgb}oost: A scalable tree boosting system,'' in
  \emph{Proceedings of the {22nd} acm {Sigkdd} {International} {Conference} on
  {Knowledge} {Discovery} and {Data} Mining}, 2016, pp. 785--794.

\bibitem[Erickson et~al.(2020)Erickson, Mueller, Shirkov, Zhang, Larroy, Li,
  and Smola]{agtabular}
N.~Erickson, J.~Mueller, A.~Shirkov, H.~Zhang, P.~Larroy, M.~Li, and A.~Smola,
  ``{{Autogluon}-Tabular}: Robust and accurate {AutoML} for structured data,''
  \emph{arXiv preprint arXiv:2003.06505}, 2020.

\bibitem[Grinsztajn et~al.(2022)Grinsztajn, Oyallon, and
  Varoquaux]{grinsztajn2022tree}
L.~Grinsztajn, E.~Oyallon, and G.~Varoquaux, ``Why do {Tree}-{Based} {Models}
  {Still} {Outperform} {Deep} {Learning} on {Typical} {Tabular} {Data}?''
  \emph{Advances in Neural Information Processing Systems}, vol.~35, pp.
  507--520, 2022.

\bibitem[Shwartz-Ziv and Armon(2022)]{shwartz2022tabular}
R.~Shwartz-Ziv and A.~Armon, ``Tabular {Data}: {Deep} {Learning} is not all you
  need,'' \emph{Information Fusion}, vol.~81, pp. 84--90, 2022.

\bibitem[Bischl et~al.(2021)Bischl, Casalicchio, Feurer, Gijsbers, Hutter,
  Lang, Mantovani, van Rijn, and Vanschoren]{bischl2017openml}
B.~Bischl, G.~Casalicchio, M.~Feurer, P.~Gijsbers, F.~Hutter, M.~Lang, R.~G.
  Mantovani, J.~N. van Rijn, and J.~Vanschoren, ``Open{ML} benchmarking
  suites,'' in \emph{Thirty-{Fifth} {Conference} on {Neural} {Information}
  {Processing} {Systems} {Datasets} and {Benchmarks} {Track} ({Round} 2)},
  2021.

\bibitem[Hollmann et~al.(2022)Hollmann, M{\"u}ller, Eggensperger, and
  Hutter]{hollmann2022tabpfn}
N.~Hollmann, S.~M{\"u}ller, K.~Eggensperger, and F.~Hutter, ``Tabpfn: A
  {Transformer} {That} {Solves} {Small} {Tabular} {Classification} {Problems}
  in a second,'' in \emph{The {Eleventh} {International} {Conference} on
  {Learning} Representations}, 2022.

\bibitem[Pearl(2009)]{pearl2009causality}
J.~Pearl, \emph{CausalityCausality}.\hskip 1em plus 0.5em minus 0.4em\relax
  Cambridge university press, 2009.

\bibitem[S{\'a}nchez-Martin et~al.(2022)S{\'a}nchez-Martin, Rateike, and
  Valera]{sanchez2022vaca}
P.~S{\'a}nchez-Martin, M.~Rateike, and I.~Valera, ``Vaca: {Designing}
  {Variational} {Graph} {Autoencoders} for {Causal} queries,'' in
  \emph{Proceedings of the {Aaai} {Conference} on {Artificial} Intelligence},
  vol.~36, no.~7, 2022, pp. 8159--8168.

\bibitem[Khemakhem et~al.(2021)Khemakhem, Monti, Leech, and
  Hyvarinen]{khemakhem2021causal}
I.~Khemakhem, R.~Monti, R.~Leech, and A.~Hyvarinen, ``Causal {Autoregressive}
  flows,'' in \emph{International {Conference} on {Artificial} {Intelligence}
  and Statistics}.\hskip 1em plus 0.5em minus 0.4em\relax PMLR, 2021, pp.
  3520--3528.

\bibitem[Chao et~al.(2023)Chao, Bl{\"o}baum, and
  Kasiviswanathan]{chao2023interventional}
P.~Chao, P.~Bl{\"o}baum, and S.~P. Kasiviswanathan, ``Interventional and
  {Counterfactual} {Inference} {With} {Diffusion} models,'' \emph{arXiv
  preprint arXiv:2302.00860}, 2023.

\bibitem[Gustafsson et~al.(2020)Gustafsson, Danelljan, and
  Schon]{gustafsson2020evaluating}
F.~K. Gustafsson, M.~Danelljan, and T.~B. Schon, ``Evaluating {Scalable}
  {B}ayesian deep learning methods for robust computer vision,'' in
  \emph{Proceedings of the {Ieee}/CVF {Conference} on {Computer} {Vision} and
  {Pattern} {Recognition} Workshops}, 2020, pp. 318--319.

\bibitem[Ovadia et~al.(2019)Ovadia, Fertig, Ren, Nado, Sculley, Nowozin,
  Dillon, Lakshminarayanan, and Snoek]{ovadia2019can}
Y.~Ovadia, E.~Fertig, J.~Ren, Z.~Nado, D.~Sculley, S.~Nowozin, J.~Dillon,
  B.~Lakshminarayanan, and J.~Snoek, ``Can you {Trust} {Your} {Model}'s
  {Uncertainty}? {Evaluating} {Predictive} {Uncertainty} {Under} {Dataset}
  shift,'' \emph{Advances in Neural Information Processing Systems}, vol.~32,
  2019.

\bibitem[Heath and Sudderth(1978)]{heath1978finitely}
D.~Heath and W.~Sudderth, ``On {Finitely} {Additive} {Priors}, {Coherence}, and
  {Extended} admissibility,'' \emph{The Annals of Statistics}, vol.~6, no.~2,
  pp. 333--345, 1978.

\bibitem[Savage(1972)]{savage1972foundations}
L.~J. Savage, \emph{The {Foundations} of Statistics}.\hskip 1em plus 0.5em
  minus 0.4em\relax Courier Corporation, 1972.

\bibitem[Gelman et~al.(2020)Gelman, Vehtari, Simpson, Margossian, Carpenter,
  Yao, Kennedy, Gabry, B{\"u}rkner, and Modr{\'a}k]{gelman2020bayesian}
A.~Gelman, A.~Vehtari, D.~Simpson, C.~C. Margossian, B.~Carpenter, Y.~Yao,
  L.~Kennedy, J.~Gabry, P.-C. B{\"u}rkner, and M.~Modr{\'a}k, ``Bayesian
  workflow,'' \emph{arXiv preprint arXiv:2011.01808}, 2020.

\bibitem[Dawid and Vovk(1999)]{dawid1999prequential}
A.~P. Dawid and V.~G. Vovk, ``Prequential {Probability}: {Principles} and
  properties,'' \emph{Bernoulli}, pp. 125--162, 1999.

\bibitem[Johnson et~al.(2024)Johnson, Tarlow, Duvenaud, and
  Maddison]{johnson_experts_2024}
\BIBentryALTinterwordspacing
D.~D. Johnson, D.~Tarlow, D.~Duvenaud, and C.~J. Maddison, ``Experts {Don}'t
  {Cheat}: {Learning} {What} {You} {Don}'t {Know} {By} {Predicting} {Pairs},''
  Feb. 2024, arXiv:2402.08733 [cs]. [Online]. Available:
  \url{http://arxiv.org/abs/2402.08733}
\BIBentrySTDinterwordspacing

\bibitem[Bousquet and Elisseeff(2000)]{bousquet2000algorithmic}
O.~Bousquet and A.~Elisseeff, ``Algorithmic {Stability} and {Generalization}
  performance,'' \emph{Advances in Neural Information Processing Systems},
  vol.~13, 2000.

\bibitem[Rogers and Wagner(1978)]{rogers1978finite}
W.~H. Rogers and T.~J. Wagner, ``A {Finite} {Sample} {Distribution}-{Free}
  {Performance} {Bound} for {Local} {Discrimination} rules,'' \emph{The Annals
  of Statistics}, pp. 506--514, 1978.

\bibitem[Liu et~al.(2017)Liu, Lugosi, Neu, and Tao]{liu2017algorithmic}
T.~Liu, G.~Lugosi, G.~Neu, and D.~Tao, ``Algorithmic {Stability} and
  {Hypothesis} complexity,'' in \emph{International {Conference} on {Machine}
  Learning}.\hskip 1em plus 0.5em minus 0.4em\relax PMLR, 2017, pp. 2159--2167.

\bibitem[Bassily et~al.(2020)Bassily, Feldman, Guzm{\'a}n, and
  Talwar]{bassily2020stability}
R.~Bassily, V.~Feldman, C.~Guzm{\'a}n, and K.~Talwar, ``Stability of
  {Stochastic} {Gradient} {Descent} on {Nonsmooth} {Convex} losses,''
  \emph{Advances in Neural Information Processing Systems}, vol.~33, pp.
  4381--4391, 2020.

\bibitem[Berti et~al.(2004)Berti, Pratelli, and Rigo]{Berti2004}
P.~Berti, L.~Pratelli, and P.~Rigo, ``\BIBforeignlanguage{en}{Limit theorems
  for a class of identically distributed random variables},''
  \emph{\BIBforeignlanguage{en}{The Annals of Probability}}, vol.~32, no.~3,
  Jul. 2004.

\bibitem[Ding et~al.(2023)Ding, Qin, Yang, Wei, Yang, Su, Hu, Chen, Chan, Chen,
  et~al.]{ding2023parameter}
N.~Ding, Y.~Qin, G.~Yang, F.~Wei, Z.~Yang, Y.~Su, S.~Hu, Y.~Chen, C.-M. Chan,
  W.~Chen \emph{et~al.}, ``Parameter-{Efficient} {Fine}-{Tuning} of
  {Large}-{Scale} pre-{Trained} {Language} models,'' \emph{Nature Machine
  Intelligence}, vol.~5, no.~3, pp. 220--235, 2023.

\bibitem[Dusenberry et~al.(2020)Dusenberry, Jerfel, Wen, Ma, Snoek, Heller,
  Lakshminarayanan, and Tran]{dusenberry2020efficient}
M.~Dusenberry, G.~Jerfel, Y.~Wen, Y.~Ma, J.~Snoek, K.~Heller,
  B.~Lakshminarayanan, and D.~Tran, ``Efficient and {Scalable} {B}ayesian
  neural nets with rank-1 factors,'' in \emph{International {Conference} on
  {Machine} Learning}.\hskip 1em plus 0.5em minus 0.4em\relax PMLR, 2020, pp.
  2782--2792.

\bibitem[Yang et~al.(2023)Yang, Robeyns, Wang, and Aitchison]{yang2023bayesian}
A.~X. Yang, M.~Robeyns, X.~Wang, and L.~Aitchison, ``Bayesian low-{Rank}
  {Adaptation} for {Large} {Language} models,'' \emph{arXiv preprint
  arXiv:2308.13111}, 2023.

\bibitem[Lin et~al.(2017)Lin, Goyal, Girshick, He, and
  Doll{\'a}r]{lin2017focal}
T.-Y. Lin, P.~Goyal, R.~Girshick, K.~He, and P.~Doll{\'a}r, ``Focal {Loss} for
  {Dense} {Object} detection,'' in \emph{Proceedings of the {Ieee}
  {International} {Conference} on {Computer} Vision}, 2017, pp. 2980--2988.

\bibitem[Li et~al.(2019)Li, Sun, Meng, Liang, Wu, and Li]{li2019dice}
X.~Li, X.~Sun, Y.~Meng, J.~Liang, F.~Wu, and J.~Li, ``Dice {Loss} for
  {Data}-{Imbalanced} nlp tasks,'' \emph{arXiv preprint arXiv:1911.02855},
  2019.

\bibitem[Nabarro et~al.(2022)Nabarro, Ganev, Garriga-Alonso, Fortuin, van~der
  Wilk, and Aitchison]{nabarro2022data}
S.~Nabarro, S.~Ganev, A.~Garriga-Alonso, V.~Fortuin, M.~van~der Wilk, and
  L.~Aitchison, ``Data {Augmentation} in {B}ayesian neural networks and the
  cold posterior effect,'' in \emph{Uncertainty in {Artificial}
  Intelligence}.\hskip 1em plus 0.5em minus 0.4em\relax PMLR, 2022, pp.
  1434--1444.

\bibitem[Wenzel et~al.(2020)Wenzel, Roth, Veeling, {\'S}wiatkowski, Tran,
  Mandt, Snoek, Salimans, Jenatton, and Nowozin]{wenzel2020good}
F.~Wenzel, K.~Roth, B.~S. Veeling, J.~{\'S}wiatkowski, L.~Tran, S.~Mandt,
  J.~Snoek, T.~Salimans, R.~Jenatton, and S.~Nowozin, ``How {Good} is the
  {Bayes} {Posterior} in {Deep} {Neural} {Networks} {Really}?'' \emph{arXiv
  preprint arXiv:2002.02405}, 2020.

\bibitem[Izmailov et~al.(2021)Izmailov, Vikram, Hoffman, and
  Wilson]{izmailov2021bayesian}
P.~Izmailov, S.~Vikram, M.~D. Hoffman, and A.~G.~G. Wilson, ``What are
  {B}ayesian neural network posteriors really like?'' in \emph{International
  {Conference} on {Machine} Learning}.\hskip 1em plus 0.5em minus 0.4em\relax
  PMLR, 2021, pp. 4629--4640.

\bibitem[Liu and Wang(2016)]{liu2016stein}
Q.~Liu and D.~Wang, ``Stein {Variational} {Gradient} {Descent}: A {General}
  {Purpose} {B}ayesian inference algorithm,'' \emph{Advances in Neural
  Information Processing Systems}, vol.~29, 2016.

\bibitem[D'Angelo and Fortuin(2021)]{d2021repulsive}
F.~D'Angelo and V.~Fortuin, ``Repulsive {Deep} {Ensembles} are {B}ayesian,''
  \emph{Advances in Neural Information Processing Systems}, vol.~34, pp.
  3451--3465, 2021.

\bibitem[Wang et~al.(2021)Wang, Zhou, Ren, and Zhu]{wang2021scalable}
Z.~Wang, Y.~Zhou, T.~Ren, and J.~Zhu, ``Scalable {Quasi}-{Bayesian} {Inference}
  for {Instrumental} {Variable} regression,'' \emph{Advances in Neural
  Information Processing Systems}, vol.~34, pp. 10\,469--10\,482, 2021.

\bibitem[Chizat et~al.(2019)Chizat, Oyallon, and Bach]{chizat2019lazy}
L.~Chizat, E.~Oyallon, and F.~Bach, ``On {Lazy} {Training} in {Differentiable}
  programming,'' \emph{Advances in Neural Information Processing Systems},
  vol.~32, 2019.

\bibitem[Aitchison(2020{\natexlab{b}})]{aitchison2020bigger}
L.~Aitchison, ``Why {Bigger} is not {Always} {Better}: on {Finite} and
  {Infinite} {Neural} networks,'' in \emph{International {Conference} on
  {Machine} Learning}.\hskip 1em plus 0.5em minus 0.4em\relax PMLR, 2020, pp.
  156--164.

\bibitem[Burt et~al.(2019)Burt, Rasmussen, and Van Der~Wilk]{burt2019rates}
D.~Burt, C.~E. Rasmussen, and M.~Van Der~Wilk, ``Rates of {Convergence} for
  {Sparse} {Variational} {G}aussian process regression,'' in
  \emph{International {Conference} on {Machine} Learning}.\hskip 1em plus 0.5em
  minus 0.4em\relax PMLR, 2019, pp. 862--871.

\bibitem[Nieman et~al.(2022)Nieman, Szabo, and
  Van~Zanten]{nieman2022contraction}
D.~Nieman, B.~Szabo, and H.~Van~Zanten, ``Contraction {Rates} for {Sparse}
  {Variational} {Approximations} in {G}aussian process regression,'' \emph{The
  Journal of Machine Learning Research}, vol.~23, no.~1, pp. 9289--9314, 2022.

\bibitem[Finn et~al.(2017)Finn, Abbeel, and Levine]{finn2017model}
C.~Finn, P.~Abbeel, and S.~Levine, ``Model-{Agnostic} {Meta}-{Learning} for
  {Fast} {Adaptation} of {Deep} networks,'' in \emph{International {Conference}
  on {Machine} Learning}.\hskip 1em plus 0.5em minus 0.4em\relax PMLR, 2017,
  pp. 1126--1135.

\bibitem[Garnelo et~al.(2018{\natexlab{a}})Garnelo, Rosenbaum, Maddison,
  Ramalho, Saxton, Shanahan, Teh, Rezende, and Eslami]{garnelo2018conditional}
M.~Garnelo, D.~Rosenbaum, C.~Maddison, T.~Ramalho, D.~Saxton, M.~Shanahan,
  Y.~W. Teh, D.~Rezende, and S.~A. Eslami, ``Model-{Agnostic} {Meta}-{Learning}
  for {Fast} {Adaptation} of {Deep} networks,'' in \emph{International
  {Conference} on {Machine} Learning}.\hskip 1em plus 0.5em minus 0.4em\relax
  PMLR, 2018, pp. 1704--1713.

\bibitem[Müller et~al.(2023)Müller, Hollmann, Arango, Grabocka, and
  Hutter]{muller_transformers_2023}
\BIBentryALTinterwordspacing
S.~Müller, N.~Hollmann, S.~P. Arango, J.~Grabocka, and F.~Hutter,
  ``Transformers {Can} {Do} {Bayesian} {Inference},'' Feb. 2023,
  arXiv:2112.10510 [cs, stat]. [Online]. Available:
  \url{http://arxiv.org/abs/2112.10510}
\BIBentrySTDinterwordspacing

\bibitem[Garnelo et~al.(2018{\natexlab{b}})Garnelo, Schwarz, Rosenbaum, Viola,
  Rezende, Eslami, and Teh]{garnelo2018neural}
M.~Garnelo, J.~Schwarz, D.~Rosenbaum, F.~Viola, D.~J. Rezende, S.~Eslami, and
  Y.~W. Teh, ``Model-{Agnostic} {Meta}-{Learning} for {Fast} {Adaptation} of
  {Deep} networks,'' \emph{arXiv preprint arXiv:1807.01622}, 2018.

\bibitem[Rao(1971)]{rao1971projective}
M.~M. Rao, ``Projective {Limits} of {Probability} spaces,'' \emph{Journal of
  multivariate analysis}, vol.~1, no.~1, pp. 28--57, 1971.

\bibitem[Skouras and Dawid(1998)]{skouras_efficient_1998}
\BIBentryALTinterwordspacing
K.~Skouras and A.~P. Dawid, ``\BIBforeignlanguage{en}{On {Efficient} {Point}
  {Prediction} {Systems}},'' \emph{\BIBforeignlanguage{en}{Journal of the Royal
  Statistical Society Series B: Statistical Methodology}}, vol.~60, no.~4, pp.
  765--780, Nov. 1998. [Online]. Available:
  \url{https://academic.oup.com/jrsssb/article/60/4/765/7083096}
\BIBentrySTDinterwordspacing

\bibitem[Jeffreys(1939)]{jeffreys1998theory}
H.~Jeffreys, \emph{The {Theory} of Probability}.\hskip 1em plus 0.5em minus
  0.4em\relax Oxford University Press, 1939.

\bibitem[Syversveen(1998)]{syversveen1998noninformative}
A.~R. Syversveen, ``Noninformative {B}ayesian priors. interpretation and
  problems with construction and applications,'' \emph{Preprint statistics},
  vol.~3, no.~3, pp. 1--11, 1998.

\bibitem[Lam and Wang(2023)]{lam_resampling_2023}
H.~Lam and Z.~Wang, ``Resampling stochastic gradient descent cheaply for
  efficient uncertainty quantification,'' Oct. 2023, arXiv:2310.11065 [cs,
  stat].

\bibitem[Villani(2009)]{villani2009optimal}
C.~Villani, \emph{Optimal {Transport}: old and New}.\hskip 1em plus 0.5em minus
  0.4em\relax Springer, 2009, vol. 338.

\bibitem[Brown and Low(1996)]{brown1996asymptotic}
L.~D. Brown and M.~G. Low, ``Asymptotic {Equivalence} of {Nonparametric}
  {Regression} and {White} noise,'' \emph{The Annals of Statistics}, vol.~24,
  no.~6, pp. 2384--2398, 1996.

\bibitem[Steinwart(2019)]{steinwart2019convergence}
I.~Steinwart, ``Convergence {Types} and {Rates} in {Generic} {Karhunen}-{Loeve}
  {Expansions} {With} {Applications} to {Sample} {Path} properties,''
  \emph{Potential Analysis}, vol.~51, no.~3, pp. 361--395, 2019.

\bibitem[Rahimi and Recht(2007)]{rahimi2007random}
A.~Rahimi and B.~Recht, ``Random {Features} for {Large}-{Scale} {Kernel}
  machines,'' \emph{Advances in Neural Information Processing Systems},
  vol.~20, 2007.

\bibitem[Snelson(2008)]{snelson2008flexible}
E.~L. Snelson, \emph{Flexible and {Efficient} {G}aussian process models for
  machine learning}.\hskip 1em plus 0.5em minus 0.4em\relax University of
  London, University College London (United Kingdom), 2008.

\bibitem[Loshchilov and Hutter(2019)]{loshchilov2018fixing}
I.~Loshchilov and F.~Hutter, ``Decoupled weight decay regularization,'' in
  \emph{International {Conference} on {Learning} Representations}, 2019.

\bibitem[DeepMind(2020)]{deepmind2020jax}
\BIBentryALTinterwordspacing
DeepMind, ``The {D}eep{M}ind {JAX} {E}cosystem,'' 2020. [Online]. Available:
  \url{http://github.com/google-deepmind}
\BIBentrySTDinterwordspacing

\bibitem[Zhu et~al.(1997)Zhu, Byrd, Lu, and Nocedal]{zhu1997algorithm}
C.~Zhu, R.~H. Byrd, P.~Lu, and J.~Nocedal, ``Algorithm 778: {L-{Bfgs}-B}:
  {F}ortran subroutines for large-scale bound-constrained optimization,''
  \emph{ACM Transactions on mathematical software (TOMS)}, vol.~23, no.~4, pp.
  550--560, 1997.

\bibitem[Caruana et~al.(2004)Caruana, Niculescu-Mizil, Crew, and
  Ksikes]{caruana2004ensemble}
R.~Caruana, A.~Niculescu-Mizil, G.~Crew, and A.~Ksikes, ``Ensemble {Selection}
  {From} {Libraries} of models,'' in \emph{Proceedings of the {Twenty}-{First}
  {International} {Conference} on {Machine} Learning}, 2004, p.~18.

\bibitem[Ho et~al.(2020)Ho, Jain, and Abbeel]{ho2020denoising}
J.~Ho, A.~Jain, and P.~Abbeel, ``Denoising {Diffusion} {Probabilistic}
  models,'' \emph{Advances in Neural Information Processing Systems}, vol.~33,
  pp. 6840--6851, 2020.

\bibitem[Fort et~al.(2019)Fort, Hu, and Lakshminarayanan]{fort2019deep}
S.~Fort, H.~Hu, and B.~Lakshminarayanan, ``Deep {Ensembles}: A {Loss}
  {Landscape} perspective,'' \emph{arXiv preprint arXiv:1912.02757}, 2019.

\bibitem[Gorishniy et~al.(2021)Gorishniy, Rubachev, Khrulkov, and
  Babenko]{gorishniy2021revisiting}
Y.~Gorishniy, I.~Rubachev, V.~Khrulkov, and A.~Babenko, ``Revisiting {Deep}
  {Learning} {Models} for {Tabular} data,'' \emph{Advances in Neural
  Information Processing Systems}, vol.~34, pp. 18\,932--18\,943, 2021.

\end{thebibliography}

\appendix
\onecolumn

\addcontentsline{toc}{section}{Appendix} %
\part{\Large Appendix} %
\parttoc %
\section{Additional Discussions}\label{app:all-disc}

\subsection{Discussion, Limitation and Future Work}\label{app:addi-disc}

\paragraph{Broader context of the theoretical contributions.}
We presented an analysis of MPs defined by sample-efficient estimation algorithms 
and investigated their application to modern ML algorithms. 
Our work has various limitations which we discuss shortly. 
However, we first clarify on the broader context of the theoretical contributions, and the main direction we hope to contribute to. 

The theoretical analysis aims at provide better justification for the use of general ML algorithms in quantifying parameter (i.e., epistemic) uncertainty. 
While it is possible to quantify ``subjective uncertainty'' using any base algorithm, 
e.g., by plugging it into Alg.~\ref{alg:main}, the resulted uncertainty will not always be useful: 
the base algorithm could be grossly misspecified for the present data distribution $p_0$, or 
the uncertainty estimates could also be \emph{incoherent} in which case downstream decision-making may be uniformly suboptimal regardless of what $p_0$ is \citep{heath1978finitely,savage1972foundations}. 
For these reasons, the user should seek to 
provide additional %
justification for their choices of ML algorithm and uncertainty quantification scheme, %
beyond the tautological %
argument %
that the result represents their subjective uncertainty; 
just as a user of standard Bayesian models should justify their %
choices through additional conceptual %
reasoning or empirical diagnostics \citep{gelman2020bayesian}. 

The end goal of the analysis is to allow 
users to justify their choices by reasoning about the algorithm's estimation performance on similar tasks. %
The reasoning process could be grounded in empirical evidence derived from 
real or synthetic datasets. 
It can also be conceptual, as a thought experiment that allows the user to elucidate  %
their algorithmic choices. 
In its weakest form, a result of this form will still allow user to understand that they can obtain approximately coherent uncertainty estimates as long as the base algorithm can be assumed to be near optimal %
w.r.t.~any hypothetical task distribution; this is in the spirit of \citet{dawid1999prequential}. Our result is also a step forward from \citet{fong_martingale_2021}, as we allow for a wider range of %
base algorithms that do not necessarily define a coherent predictive distribution on their own. 

The analysis assumes the base algorithm is near-optimal for point estimation; 
it is reasonable to ask how we expect to improve over such an algorithm. 
As shown in \S\ref{sec:intro} and \S\ref{sec:exp}, by better accounting for epistemic uncertainty we can still improve its predictive performance, which  
be also viewed as achieving near-optimality w.r.t.~a more stringent criterion (e.g., 
from square loss for parameter estimation to log loss for prediction).\footnote{
Moreover, note that compared with the base algorithm, ensemble prediction employs a different action space, so near-optimality w.r.t.~the same loss function could also be a stronger requirement.  
} 
We further emphasise that 
the task of \emph{epistemic uncertainty quantification is fundamentally more challenging} than 
prediction (of a single test sample): it can be viewed as modelling the joint distribution of $(z_{n+1}, z_{n+2}, \ldots)$ as opposed to the marginal distribution of $z_{n+1}$.\footnote{
The naive estimate $p_{\hat\theta_n}\otimes p_{\hat\theta_n}\ldots$ is suboptimal from a Bayesian perspective: the optimal (posterior) predictive distribution is correlated. It is also uncalibrated \citep{johnson_experts_2024}, which is relevant beyond the Bayesian perspective. 
} 
The practical utility of epistemic uncertainty has been extensively discussed. 
Here we note the following example, %
which is closely related to the %
joint modelling view above: suppose we want to 
model the average effect of a policy deployed to a population of individuals distributed as $p_0$.

\paragraph{Limitation and future work.}
As we noted in \S\ref{sec:theory}, the 
analysis intends to provide intuition by studying simplified scenarios, and its assumptions can be restrictive for practical applications. 
We first note that some 
restrictions are merely made to simplify presentation, and we expect that they can be relaxed with some effort.  For example, 
Asm.~\ref{asm:stable} and Asm.~\ref{asm:martingale-divergence} only need to hold in a neighbourhood around the true $\theta_0$; 
it should also be straightforward to provide a conditional analogue of \Cref{thm:param-alt} that does not average over $z_{1:n}\sim \pi$ if we modify the definition of $\BayesError$ to be conditional on the observed data.\footnote{
The relaxation will allow us to understand the behaviour of the MP on $\pi$-null sets, as long as we can reason about the true posterior's behaviour on such events. 
Such a discussion is important in classical statistics, since the prior $\pi$ can be misspecified: it is imposed by the user, who needs to know its (analytical) form and be able to conduct approximate inference with it. 
It appears less relevant in our motivating setup, where $\pi$ is assumed to be ``correctly specified'' and the user does not need to have exact knowledge about it. 
}

A main technical limitation in \S\ref{sec:theory} is the restriction to \eqref{eq:MP-for-analysis}: 
the requirement that $\EstParam[n+1]$ does not depend on $z_{1:n}$ except through $\EstParam[n]$ 
will rule out many practical algorithms. 
We note that for regular parametric models, online natural gradient 
has the form of \eqref{eq:MP-for-analysis} and always provides a near-optimal estimator \citep[\S 12.1.7]{amari2016information}; 
for high-dimensional models, preconditioned GD may fulfil a similar purpose. %
The ultimate purpose of \eqref{eq:MP-for-analysis} is to ensure stability: 
it guarantees that the \emph{internal state} of $\Alg$ can be summarised into a tractable space---the parameter space---so that further assumptions (\ref{asm:stable}) could quantify stability. %
Weaker notions of algorithmic stability %
have been extensively studied in literature %
\citep{bousquet2000algorithmic,rogers1978finite,liu2017algorithmic,bassily2020stability}, 
but it certainly requires substantial effort to bring them into our framework. 
It is interesting to note that Bayesian algorithms (that maps $z_{1:j}$ to a sample from $\pi(\theta\mid z_{1:j})$) 
can always be viewed as an online algorithm with a form similar to \eqref{eq:MP-for-analysis}: its ``state'' can be summarised by the posterior distribution, given which it becomes independent of past data. %

Through \eqref{eq:MP-for-analysis} we also restrict to deterministic algorithms. %
In some scenarios it may be preferable to employ stochastic algorithms, based on which we can construct a better approximation to the unknown posterior mean. 
As discussed in \S\ref{sec:method}, the MP scheme can be applied to fully Bayesian algorithms which produce a stochastic parameter estimate \citep{fong_martingale_2021}. 
In combination with our results it indicates that the MP is ``robust'' at two extremes where the base algorithm either quantify no parameter uncertainty at all or maintains a fully coherent notion of parameter uncertainty. %
It may thus be reasonable to expect that MPs can also be constructed out of base algorithms with an imperfect notion of parameter uncertainty, e.g., those based on approximate Bayesian inference. 
Our proof appears to suggest that any variation in $\Alg[j](\EstParam[j-1], \EstData[j])\mid \EstData[\le j]$, %
would have a higher-order effect, as guaranteed by the stability of the algorithm (see in particular the application of Asm.~\ref{asm:martingale-divergence}). 

The efficiency assumption is central to the analysis. 
It is natural to expect that similar conditions %
may be unavoidable for results like \eqref{eq:main-asymp}. 
The assumption could be more easily satisfied if we restrict to smaller $N$ (\S\ref{app:gp-mtl}) or weaker choices of $\|\cdot\|$. 
Conceptual examples for the latter include semi-norms that focus on the comparison between likelihood functions indexed by parameters (\Cref{rem:identifiability}) and semi-norms that ignore differences between \emph{nuisance parameters} \citep[Ch.~25]{van2000asymptotic}. 
It would be interesting if predictive efficiency could be quantified through more general means than vector semi-norms. 
It would also be interesting to investigate whether prediction algorithms could define an approximately coherent notion of uncertainty (e.g., approximating a model that defines conditionally identically distributed samples \citep{Berti2004,fong_martingale_2021}) in a broader range of scenarios. 

A main limitation with our methodology is the need to specify a distribution of inputs for supervised learning tasks. 
\Cref{rem:supervised-learning} discussed several choices; nonparametric resampling appears to be effective in our experiments, 
and for high-dimensional structured inputs we may employ pretrained generative models. 
We also note that this is a shared limitation with previous works on function-space Bayesian inference for deep models \citep{sun2018functional,wang2018function,ma2019variational}. 
For large-scale NN models the need to maintain an ensemble of parameters would also be limiting; it would be interesting to explore the use of parameter-efficient finetuning methods \citep{ding2023parameter,dusenberry2020efficient,yang2023bayesian} for this issue. 

\subsection{Related Work}\label{app:related-work}

Our work is motivated by challenges of designing and implementing Bayesian counterparts for ML methods. 
As discussed in \S\ref{sec:intro}, NN methods may constitute an important example, due in part to the challenges in inference and prior specification. 
Another issue is the choice of likelihood: 
applications in computer vision and natural language processing often involve loss functions that do not have a likelihood interpretation \citep{lin2017focal,li2019dice}, 
and even when a likelihood-based objective leads to efficient point predictors, 
its suitability for Bayesian NNs can still be debatable if the application involves 
human-annotated datasets \citep{aitchison2020statistical} or data augmentation \citep{nabarro2022data}.\footnote{
See also the works of \citet{wenzel2020good,izmailov2021bayesian} who reported performance issues with Bayesian NNs (with Gaussian priors) in the presence of data augmentation.
}
Compared with versatility and flexibility %
of non-Bayesian deep learning, these issues suggest that 
in typical deep learning applications, it can often be easier to express the 
``prior knowledge'' about what method is best suited for a given problem through algorithms, rather than through explicitly defined Bayesian models.

Our work provides an efficient %
ensemble method for uncertainty quantification. 
Many ensemble methods have been proposed for NN models \citep[to name a few]{lakshminarayanan2017simple,osband2018randomized,wang2018function,liu2016stein,d2021repulsive,wang2021scalable}. 
Our method stands out for its applicability beyond NN models, 
while it also retains advantages over the bootstrap aggregation method---known for a similar trait---by more effectively leveraging the parametric model when it is available (App.~\ref{app:cmp-bagging}). 
It may be interesting to build an ensemble of ensemble predictors using Alg.~\ref{alg:main}. %

The GP example in \S\ref{sec:ex-lingauss} is connected to the ensemble algorithms in \cite{osband2018randomized,he2020bayesian,pearce20a}, which are designed for DNNs but motivated from the same GP regression setting. As observed in \cite{he2020bayesian}, the GP example is relevant in a deep learning context given the connection between ultrawide NNs and GPs \citep{lee2019wide}. 
While GP regression serves as an interesting motivating example, 
the ultrawide NNs in that literature represent %
an oversimplified model which does not allow for feature learning \citep{chizat2019lazy},  and 
should not be viewed as a ``correct prior'' for NNs \citep{aitchison2020bigger}. 
Yet to ensure a match to the GP posterior, those ensemble methods involve design choices that may not be generally beneficial, such as an $\ell_2$ regularisation with a fixed $n^{-1}$ scaling. 
Our method is motivated from a more general perspective, but we also compare with \cite{pearce20a,he2020bayesian} empirically; see \Cref{app:toy-exp-gp} and \Cref{tbl:dj-synth-main}. 
We also note that the specific problem of (conjugate) GP inference is by now well-understood; there exist algorithms with good statistical and computational guarantees \citep{burt2019rates,nieman2022contraction}. 

We focus on uncertainty quantification for near-Bayes optimal algorithms. 
This is closely related to %
recent works that explicitly train predictive models on a mixture of synthetic or real datasets 
so that they may approach the Bayes-optimal predictor \citep{finn2017model,garnelo2018conditional,muller_transformers_2023}. 
Our work is different in its applicability to models not explicitly trained in this way, 
and importantly we provide concrete theoretical guarantees for epistemic uncertainty quantification. 
As discussed in \S\ref{sec:intro} and App.~\ref{app:addi-disc}, 
epistemic uncertainty quantification is a more difficult task than (single-sample) prediction, and 
algorithms that are near-optimal for prediction may have no sense of epistemic uncertainty at all (e.g., MLE). 
It is generally interesting to investigate the quantification of epistemic uncertainty using pretrained predictive models. 
Note that neural processes \citep{garnelo2018neural} have a coherent notion of epistemic uncertainty, 
but different from our approach it is unclear if they can recover the true Bayesian posterior defined by the pretraining distribution. 
However, it is interesting to note the connection \citep{rao1971projective} between Kolmogorov extension theorem, 
the key invariance property of neural processes, and Doob's theorem which underlies the construction of MPs. 

We reviewed previous work on martingale posteriors in \S\ref{sec:bg}, and  
our methodology is most related to \cite{fong_martingale_2021} and \cite{holmes_statistical_2023}. 
\cite{fong_martingale_2021} imposes a coherence condition (see their condition 2) that requires the base algorithm to define the same predictive distribution as the MP. The MP is thus a tool for inference that \emph{reveals} the epistemic uncertainty in the base algorithm. This %
is a non-trivial accomplishment, since the algorithm is accessed as a black box; but 
the coherence requirement does rule out the use of common algorithms %
such as sequential MLE with a non-categorical likelihood.  
\cite{holmes_statistical_2023} studies more general algorithms %
beyond the coherence case, but the only theoretical guarantee provided is that the MP defined by \eqref{eq:gd-martingale} may have a variance scaling of $\cO(1/n)$. 
This does not cover non-GD algorithms, and does not justify the application of GD to multidimensional models ($\dim\theta>1$) as there is no guarantee about the shape of the covariance. 
By introducing the postulation \eqref{eq:multitask-main}
we are able to cover a broader range of algorithms and provide more complete justification for all of them. 
The postulation is related to the works of \citet{dawid1999prequential,skouras_efficient_1998,xu2022minimum}. 

Our result is also related to the work of \citet{efron2012bayesian} who connected parametric bootstrap to a specific Bayesian posterior defined by the Jeffreys prior \citep{jeffreys1998theory}. However, 
the Jeffreys prior has %
counterintuitive behaviours when $\dim\theta>1$ \citep[see e.g.,][]{syversveen1998noninformative}, and cannot be defined for infinite-dimensional models as in \S\ref{sec:ex-lingauss}. 
There is also a literature on statistical inference with GD and bootstrap resampling (see \cite{lam_resampling_2023} and references therein), which studies similar but different algorithms to the example \eqref{eq:gd-martingale}. 
Such works have the different goal of recovering the sampling distribution for regular parametric models ($d<\infty$ does not grow w.r.t.~$n$), which is not relevant beyond that setting (see \Cref{app:wn}). 

\subsection{Comparison with Bootstrap Aggregation}\label{app:cmp-bagging}

The proposed method is broadly similar to bootstrap aggregation (bagging) methods: 
both build an ensemble of model parameters by estimating on perturbed versions of the training set. 
Bagging can be implemented using parametric or nonparametric bootstrap. 
In practice, parametric bootstrap is rarely used in ML, possibly because the algorithm discards the training observations in resampling which is considered undesirable; it also performs worse in our experiments.  
Here we present two simplified examples which may provide additional insight. 

\begin{example}[comparison to nonparametric bootstrap]\label{ex:bnp-cmp}
Suppose %
$z_{1:n}\sim \cN(\theta_0, I)$ with $d:=\dim z_i$ satisfying $n\ll d\ll n^2$. Let 
Alg.~\ref{alg:main} be defined with $\Delta n=1, N\gg n$ and the sequential MLE algorithm as $\cA$. 
It follows by \S\ref{sec:ex-expfam} that $\PP(\hat\theta_N\,\vert\, z_{1:n})=\cN(\hat\theta_n, C_n)$ for some $C_n\sim %
I$. This distribution quantifies a non-trivial amount of uncertainty in the $(d-n)$-dimensional null space of the empirical covariance $\frac{1}{n}\sum_{i=1}^n (z_i - \bar z_i)(z_i - \bar z_i)^\top$. 
In contrast, the sampling distribution of nonparametric bootstrap has no variation in this subspace, falsely indicating %
complete confidence in the subspace where the data does not provide any information at all. 
\end{example}

\begin{example}[comparison to parametric bootstrap]
Consider a two-dimensional dataset generated by $z_{i,1}\sim\mrm{Bern}(1-\epsilon), z_{i,2}\vert z_{i,1}\sim \cN(\theta_{z_1=z_{i,1}}, 1)$. With $n = \lfloor\epsilon^{-1}/2\rfloor$ the expected number of samples with $z_{i,1}=0$ is $<1$, %
so there should be substantial uncertainty about $\theta_{z_1=0}$. %
Yet parametric bootstrap may underestimate the uncertainty: the probability of a resampled dataset $D_n^{(k)}$ containing no samples with $z_{i,1}=0$ is $(1-\epsilon^{-1})^n \sim e^{-1/2}$, in which case there may not be any meaningful variation in the respective estimate, %
$\hat\theta^{(k)}_{z_1=0}$ e.g., if the estimation algorithm applies a small regularisation. %
In contrast, our method with $N\gg n$ will update all $\hat\theta^{(k)}$ with probability $1-(1-\epsilon^{-1})^N\to 1$. 
\end{example}

The above examples are clearly oversimplified. In practice, initialisation randomness in optimisation will also contribute to the uncertainty estimates and may help narrow the gap between these procedures %
\citep{lakshminarayanan2017simple}. 
Still, the examples illustrated how our method has a more direct impact on the final uncertainty estimates, especially in aspects of the parameter which the training data is not informative about. 

\section{Deferred Proofs}\label{app:proofs}

In the proofs we adopt the following additional notations: we use $\EE_\pi$ to denote the expectation w.r.t.~data sampled from the prior predictive distribution; formally, for any $j\in\mb{N}$ and any integrable function $g: \cZ^{\otimes j}\to\RR$ we define 
$
\EE_\pi g(z_{1:j}) := \EE_{\theta_0\sim\pi, z_{1:j}\overset{iid}{\sim} p_{\theta_0}} g(z_{1:j}).
$ 
For all $j\ge n$, define 
$$
\BayesData[j+1] \sim \pi(z_{j+1} \mid z_{1:n}, \BayesData[n+1:j]), ~~
\BayesParam[j] := \EE_{\theta\sim\pi(\theta\mid z_{1:n}, \BayesParam[n+1:j])} \theta. 
$$
Note that when $z_{1:n}$ follow the prior predictive distribution, $(\theta_0, z_{1:n}\cup \BayesData[n+1:j], \BayesParam[j])$ will have the same distribution as the random variables $(\theta_0, z_{1:j}, \BayesParam[j])$ defined in \eqref{eq:bayes-err-defn}. Thus, for such $z_{1:n}$,  
$\BayesError[j]^2$ will continue to represent the mean square error of $\BayesParam[j]$ and the squared radius of the Bayesian posterior, as stated in the text. 
We use $\cF_j$ to denote the $\sigma$-algebra generated by ``all observations up to iteration $j$'', including $\{z_{1:n}, \EstData[n+1:j], \BayesData[n+1:j]\}$ as well as an additional set of $\{\followData[n+1:j]\}$ that will be defined shortly. 
Define $$
\EE_j := \EE(\cdot\mid\cF_j), ~~~~ \BayesDelta[j] := \BayesParam[j] - \BayesParam[j-1].
$$ 
We will also make frequent use of the inequality 
\begin{align}
  \|a+b\|^2 = \|a\|^2 + \|b\|^2 + 2\<\delta^{1/2}a, \delta^{-1/2}b\> 
  \le (1+\delta)\|a\|^2 + (1+\delta^{-1})\|b\|^2, \label{eq:basic}
\end{align}
which holds for all vector semi-norms, $a,b$ and $\delta>0$. 
In particular, this implies $\|a+b\|^2 \le 2(\|a\|^2+\|b\|^2)$. It also follows that, for any 
$\{\cF_j\}$-adapted $\{a_j\}$ and any collection of random $\{b_j\}$, 
\begin{align}
  \EE_j\|a_j + b_j\|^2 &= \|a_j\|^2 + \EE_j\|b_j\|^2 + 2\<\delta^{1/2}a_j, \EE_j\delta^{-1/2}b_j\>  \nonumber \\ 
  &\le (1+\delta)\|a_j\|^2 + \EE_j\|b_j\|^2 + \delta^{-1}\|\EE_j b_j\|^2. \label{eq:basic-rvec}
\end{align}

\subsection{Proof for Theorem~\ref{thm:param-alt}}\label{app:proof-thm-main}

By \cref{asm:replaced-efficiency} 
it suffices to prove \eqref{eq:main-nonasymp}. 
Observe that the following always holds: %
    \begin{align*}
    \EE_\pi W_2^2(\pi_n, \hat p_{mp,n}) &\le \EE_\pi\|\EstParam[N] - \BayesParam[\infty]\|^2   \\ 
    &= \EE_\pi\|\EstParam[N] - \BayesParam[N]\|^2 + \EE_\pi\|\BayesParam[N] - \BayesParam[\infty]\|^2  \\ 
    &\le 2\EE_\pi(\|\EstParam[N] - \followParam[N]\|^2 + \|\followParam[N] - \BayesParam[N]\|^2) +  \EE_\pi\|\BayesParam[N] - \BayesParam[\infty]\|^2 \\ 
    &= 2\EE_\pi\|\EstParam[N] - \followParam[N]\|^2 + 2\ExcessError[N]^2 + \BayesError[N]^2.\numberthis\label{eq:PA-goal-new}
    \end{align*}
    Thus, to prove \eqref{eq:main-nonasymp} it suffices to establish that %
\begin{equation}\label{eq:PA-goal-full}
  \EE_\pi\|\EstParam[N] - \followParam[N]\|^2 \le e^{\chi_n}((\chi_n + \nu_n)\BayesError[n]^2 + \ExcessError[n]^2)
\end{equation}
where $\chi_n$ is to be defined below. 

To prove \eqref{eq:PA-goal-full}, 
we will construct a sequence of couplings between $\{\EstData\}$ and $\{\BayesData\}$ which determines a joint distribution for $(\EstParam[N], \BayesParam[N])\mid\cF_n$ that allows \eqref{eq:PA-goal-new} to be bounded as desired. 
For this purpose, we will introduce an additional r.v.~$\followData[j+1]$ s.t.~$\PP(\followData[j+1]\in\cdot\mid\cF_j) = \PP_{\followParam[j]}(\cdot)$, and 
couple $(\EstData,\BayesData)$ through 
the joint distribution $
\PP(\followData[j+1], \EstData[j+1], \BayesData[j+1]\mid\cF_j) = \PP(\followData[j+1]\mid\cF_j) \PP(\EstData[j+1]\mid \followData[j+1],\cF_j) \PP(\BayesData[j+1]\mid\followData[j+1],\cF_j)
$ with the last two terms determined by various optimal transport plans. 

Let $s>0$ be defined in \cref{asm:replaced-efficiency}. For any $n\le j<N$, 
consider the decomposition 
\begin{align*}
&\phantom{=} \EE_j\|\EstParam[j+1] - \followParam[j+1]\|^2 \\
&= 
  \EE_j\|\EstParam[j] + \EstDelta[j](\EstParam[j], \EstData[j+1]) - 
    (\followParam[j] + \EstDelta[j](\followParam[j], \followData[j+1]) 
    -  \EstDelta[j](\followParam[j], \followData[j+1]) + \EstDelta[j](\followParam[j], \BayesData[j+1]))\|^2 \\ 
&\overset{\eqref{eq:basic-rvec}}{\le} (1+j^{-(1+s)})\EE_j\|\EstParam[j] - \followParam[j] - (\EstDelta[j](\followParam[j], \followData[j+1]) - \EstDelta[j](\followParam[j], \BayesData[j+1]))\|^2 \\ 
&\hspace{2em} + \EE_j\| \EstDelta[j](\EstParam[j], \EstData[j+1]) - \EstDelta[j](\followParam[j], \followData[j+1])\|^2 %
+ j^{1+s}(\|\EE_j(\EstDelta[j](\EstParam[j], \EstData[j+1])- \EstDelta[j](\followParam[j], \followData[j+1]))\|^2)
    \\ 
&\le (1+j^{-(1+s)})\EE_j\|\EstParam[j] - \followParam[j] - (\EstDelta[j](\followParam[j], \followData[j+1]) - \EstDelta[j](\followParam[j], \BayesData[j+1]))\|^2 \\ 
&\hspace{2em} + \EE_j\| \EstDelta[j](\EstParam[j], \EstData[j+1]) - \EstDelta[j](\followParam[j], \followData[j+1])\|^2 %
+ j^{1+s}(2\|\EE_j \EstDelta[j](\EstParam[j], \EstData[j+1])\|^2 + 2
\|\EE_j \EstDelta[j](\followParam[j], \followData[j+1])\|^2)
    \\ 
&=: (1+j^{-(1+s)})A_j + B_j + j^{1+s} C_j. \numberthis\label{eq:new-proof-intermediate}
\end{align*}
We will bound the three terms in turn. 

\underline{For $C_j$}, we note that since $s<\delta$ (Asm.~\ref{asm:replaced-efficiency}), Asm.~\ref{asm:approx-martingale} also holds for $\delta = s$, and thus we have 
\begin{equation}
 j^{1+s} C_j \le 
 2j^{-(1+s)} \BayesError[j]^2.  \label{eq:NPI-C}
\end{equation}

\underline{For $B_j$}, first note that by \cref{asm:stable}~(i) we have
\begin{align*}
B_j &\le 2 (
    \EE_j \|\EstDelta(\EstParam[j], \EstData[j+1]) - \EstDelta(\followParam[j], \EstData[j+1])\|^2 + 
    \EE_j \|\EstDelta(\followParam[j], \EstData[j+1]) - \EstDelta(\followParam[j], \followData[j+1])\|^2) \\ 
&\le 2\eta_j^2(L_1^2 \|\EstParam[j] - \followParam[j]\|^2 + L_2^2 \EE_j \|\EstData[j+1] - \followData[j+1]\|_z^2) \numberthis\label{eq:NPI-B0}.
\end{align*}
Let $\PP(\EstData[j+1]\mid \cF_j,\followData[j+1])$ be defined by the optimal transport plan that minimises the transport cost above. Recall that \cref{asm:stable}~(ii) states that one of the following must hold:
\begin{align}
W_{2,z}^2(p_\theta, p_{\theta'}) &\le C_\Theta \|\theta - \theta'\|^2, ~~~\text{ or} \label{eq:A3}\\
W_{2,z}^2(p_\theta, p_{\theta'}) &\le C_\Theta \|\theta - \theta'\|, ~~\eta_j\le j^{-(3+\iota)/4}.\tag{\ref{eq:A3}'}\label{eq:A3-weaker}
\end{align}
If \eqref{eq:A3} holds, the above will be bounded by $2\eta_j^2 (L_1^2 + L_2^2 C_\Theta) \|\EstParam[j] - \followParam[j]\|^2$, and we have $\eta_j \le j^{-(1+\iota)/2}$. Otherwise, by \eqref{eq:A3-weaker} we have $j^{1/4}\eta_j \le j^{-(1+\iota)/2}$ and 
\begin{align*}
2\eta_j^2 L_2^2 \EE_j \|\EstData[j+1] - \followData[j+1]\|_z^2 &
    \le 2\eta_j^2 L_2^2 C_\Theta \|\followParam[j] - \EstParam[j]\| 
    \\ &
    = L_2^2 C_\Theta \cdot 2j^{-1/2}(j^{1/4}\eta_j) \cdot (j^{1/4}\eta_j) \|\followParam[j] - \EstParam[j]\|
\\ &\le L_2^2 C_\Theta \cdot\bigl(
    (j^{-1/2}(j^{1/4}\eta_j))^2 + 
    (j^{1/4}\eta_j \|\followParam[j] - \EstParam[j]\|)^2
\bigr)  %
\\ &= L_2^2 C_\Theta \cdot(j^{1/4}\eta_j)^2 \bigl(\|\followParam[j] - \EstParam[j]\|^2 + j^{-1}\bigr) 
\\ &\le L_2^2 C_\Theta \cdot(j^{1/4}\eta_j)^2 (\|\followParam[j] - \EstParam[j]\|^2 +  \BayesError[j]^2).  && \mcomment{Asm.~\ref{asm:conventions}}
\end{align*}
Define $\eta'_j := j^{-(1+\iota)/2}$, then in both cases we have 
\begin{equation}\label{eq:NPI-B1}
2\eta_j^2 L_2^2 \EE_j \|\EstData[j+1] - \followData[j+1]\|_z^2 \le 
L_2^2 C_\Theta
\eta'^2_j (\|\followParam[j] - \EstParam[j]\|^2 +  \BayesError[j]^2).
\end{equation}
Plugging back to \eqref{eq:NPI-B0} we have 
\begin{align*}
B_j
&\le 
    2 \eta'^2_j (L_1^2 + L_2^2 C_\Theta)(\|\EstParam[j] - \followParam[j]\|^2 + \BayesError[j]^2). \numberthis\label{eq:NPI-2}
\end{align*}
\underline{For $A_j$}, we first use \eqref{eq:basic-rvec} to bound it as 
\begin{align*}
A_j &\le  \EE_j((1+j^{-(1+s)})\|\EstParam[j] - \followParam[j]\|^2 + \|\EstDelta[j](\followParam[j], \followData[j+1]) - \EstDelta[j](\followParam[j], \BayesData[j+1])\|^2) 
\\ & \hspace{4em}
+ j^{1+s}
  \|\EE_j(
\EstDelta[j](\followParam[j], \followData[j+1]) - 
    \EstDelta[j](\followParam[j], \BayesData[j+1]) 
  )\|^2 
\\
&\le
(1+j^{-(1+s)})\|\EstParam[j] - \followParam[j]\|^2 + \EE_j\|\EstDelta[j](\followParam[j], \followData[j+1]) - \EstDelta[j](\followParam[j], \BayesData[j+1])\|^2 
\\ & \hspace{4em}
+ j^{1+s} C_j + 
2j^{1+s}
  \|\EE_j
    \EstDelta[j](\followParam[j], \BayesData[j+1]) 
  \|^2. 
  \numberthis\label{eq:alt-decomp-1}
\end{align*}
We now bound the second and last terms above. For the second term we introduce our coupling between $(\followData[j+1], \BayesData[j+1])\mid \cF_j$ as follows. Recall the conditional distribution $\BayesData[j+1]\mid\cF_j$ can be represented as $\theta\sim \pi(\theta\mid\cF_j), ~\BayesData[j+1]\sim p_{\theta}$; we thus define $\PP(\BayesData[j+1]\mid \cF_j,\EstData[j+1])$ through 
\begin{equation}\label{eq:alt-bayes-coupling}
\theta\sim \pi(\theta\mid\cF_j), ~~ \BayesData[j+1]\mid(\theta, \followData[j+1]) \sim \Gamma_{p_{\followParam[j]}\to p_{\theta}}(\cdot\mid \followData[j+1]),
\end{equation}
where $\Gamma_{P\to Q}$ denotes the conditional probability derived from the optimal transport plan from $P$ to $Q$. Clearly this preserves both marginal distributions as required, and we have 
\begin{align*}
\EE_j\|\EstDelta[j](\followParam[j], \followData[j+1]) - \EstDelta[j](\followParam[j], \BayesData[j+1])\|^2 
    &\le \eta_j^2 L_2^2 \EE_j\|\followData[j+1] - \BayesData[j+1]\|_z^2  && \mcomment{Asm.~\ref{asm:stable}~(i)} \\ 
    &\overset{\eqref{eq:alt-bayes-coupling}}{\le} \eta_j^2 L_2^2 \EE_{\theta\sim\pi(\cdot\mid\cF_j)} W_2^2(p_{\followParam[j]}, p_\theta). 
\end{align*}
Repeating the proof for \eqref{eq:NPI-B1} we find the above is bounded as 
\begin{align*}
\eta_j^2 L_2^2 \EE_{\theta\sim\pi(\cdot\mid\cF_j)} W_2^2(p_{\followParam[j+1]}, p_\theta) 
    &\le L_2^2 C_\Theta \eta'^2_j(\EE_{\theta\sim\pi(\cdot\mid\cF_j)} \|\followParam[j] - \theta\|^2 + \BayesError[j]^2) \\ 
    &= L_2^2 C_\Theta \eta'^2_j(\ExcessError[j]^2 %
     + 2\BayesError[j]^2), \numberthis\label{eq:alt-decomp-11}
\end{align*}
where the last line follows from the fact that $\theta\mid\cF_j\overset{d}{=}\BayesParam[\infty]\mid\cF_j$. Now, turning to the last term of \eqref{eq:alt-decomp-1}, we have 
\begin{align*}
    &\phantom{=}\|\EE_j \EstDelta[j](\followParam[j], \BayesData[j+1])\|^2 \\ 
     &\overset{\eqref{eq:alt-bayes-coupling}}{=} 
    \|\EE_{\theta\sim\pi(\cdot\mid\cF_j)}\EE_{z\sim p_\theta}
        \EstDelta[j](\followParam[j], z) 
    \|^2  \\
    &= 
    \|\EE_{\theta\mid\cF_j}\EE_{z\mid\theta}
        (\EstDelta[j](\followParam[j], z) - \eta_j H_{\followParam[j]}(\theta-\followParam[j])
        +  \eta_j H_{\followParam[j]}(\theta-\followParam[j]))
    \|^2 \\
    &\le 
    2\|\EE_{\theta\mid\cF_j}\EE_{z\mid\theta}
        (\EstDelta[j](\followParam[j], z) - \eta_j H_{\followParam[j]}(\theta-\followParam[j]))\|^2
        +  2\|\mblue{\EE_{\theta\mid\cF_j}}\eta_j H_{\followParam[j]}(\mblue{\theta}-\followParam[j])
    \|^2 \\
    &\le 
    2(\mgreen{\EE_{\theta\mid\cF_j}\|\EE_{z\sim p_\theta}
        \EstDelta[j](\followParam[j], z) - \eta_jH_{\followParam[j]}(\theta-\followParam[j])\|})^2
        +  2\|\eta_j H_{\followParam[j]}
            (\mblue{\BayesParam[j]}-\followParam[j])\|^2   \\
    &\le
    2(\EE_{\theta\mid\cF_j} \mgreen{\Calg\eta_j\|\followParam[j] - \theta\|^2})^2 + 
    2\Calg'\eta_j^2 \ExcessError[j]^2 
    && \mcomment{Asm.~\ref{asm:martingale-divergence}}
    \\ 
    &= 
      2\eta_j^2 \Calg^2(\ExcessError[j]^2 + \BayesError[j]^2)^2  +
    2\Calg'\eta_j^2 \ExcessError[j]^2  
    \le 
    4\eta_j^2(\Calg'^2\ExcessError[j]^2  + \Calg\BayesError[j]^4).  && \mcomment{Asm.~\ref{asm:conventions}~\ref{it:convention-scale}}
    \numberthis\label{eq:AD1-new}
    \end{align*}
Plugging \eqref{eq:AD1-new} and \eqref{eq:alt-decomp-11} into \eqref{eq:alt-decomp-1}, we have 
\begin{align*}
    A_j &\le 
  (1+j^{-(1+s)})\|\EstParam[j] - \followParam[j]\|^2 +  
    \eta_j'^2 L_2^2 C_\Theta (\ExcessError[j]^2  + 2\BayesError[j]^2) \\ & \hspace{8em} 
    + 
  8j^{1+s} \eta_j^2 (\Calg'\ExcessError[j]^2 + \Calg\BayesError[j]^4)  + j^{1+s} C_j\\ 
  &\le
  (1+j^{-(1+s)})\|\EstParam[j] - \followParam[j]\|^2 +  
     C_\Theta' (\eta_j'^2\BayesError[j]^2 + 
  j^{1+s}\eta_j^2 (\ExcessError[j]^2  + \Calg\BayesError[j]^4)) + j^{1+s} C_j,
  \numberthis\label{eq:AD1-resolved}
  \end{align*}
where the constant $C_\Theta'$ is determined by $L_1,L_2,C_\Theta$ and $\Calg'$.
Plugging \eqref{eq:AD1-resolved}, \eqref{eq:NPI-2} and \eqref{eq:NPI-C} into \eqref{eq:new-proof-intermediate} and taking expectation, we find 
\begin{align*}
    \EE_\pi\|\EstParam[j+1]-\followParam[j+1]\|^2 &\le 
    (1 + 2j^{-(1+s)} + \eta_j'^2 C_\Theta'')\EE_\pi \|\EstParam[j]-\followParam[j]\|^2 \\ 
    &\hspace{2em}+ %
C_\Theta'' (\eta_j'^2\BayesError[j]^2 + j^{1+s}\eta_j^2 (\ExcessError[j]^2  + \Calg\BayesError[j]^4)) %
+ 4 j^{-(1+s)} \BayesError[j]^2 %
\end{align*}
where $C_\Theta''$ is a constant similarly determined by $(L_1,L_2,C_\Theta,\Calg')$. 
Define $\Delta\chi_j := 2j^{-(1+s)} +  C_\Theta''\eta_j'^2, \chi_l := \sum_{j=l}^N \Delta\chi_j$. Then 
$\chi_l \lesssim 1/(sn^s) + 1/(\iota n^\iota)\lesssim 1/(sn^s)$ as claimed, and we have 
\begin{align*}
&\phantom{=} \EE_\pi\|\EstParam[j+1]-\followParam[j+1]\|^2  \\
&\le
     e^{\Delta \chi_j} \EE_\pi \|\EstParam[j]-\followParam[j]\|^2 + 
C_\Theta'' (\eta_j'^2\BayesError[j]^2 + j^{1+s}\eta_j^2 (\ExcessError[j]^2  + \Calg\BayesError[j]^4)) + 4 j^{-(1+s)} \BayesError[j]^2,  \\
&\phantom{=}
\EE_\pi\|\EstParam[N]-\followParam[N]\|^2 \\ 
 &\le e^{\chi_n} \biggl(
    \EE_n\|\EstParam[n] - \followParam[n]\|^2 + 
    \sum_{j=n}^{N} 
  C_\Theta'' (\eta_j'^2\BayesError[j]^2 + 
j^{1+s}\eta_j^2 (\ExcessError[j]^2  +\Calg\BayesError[j]^4)) + 4 j^{-(1+s)} \BayesError[j]^2
\biggr) \\ 
&\le 
 e^{\chi_n} %
    (\EE_n\|\EstParam[n] - \followParam[n]\|^2 + 
      C(\chi_n +  \nu_n)\BayesError[n]^2),
\end{align*}
where the last inequality follows by Asm.~\ref{asm:replaced-efficiency}, \ref{asm:conventions}~\ref{it:convention-asymp} and the constant $C$ is determined by $C_\Theta''$.  
This completes the proof.\qed

\subsection{Discussion of Credible Set Approximations}\label{app:theory-additional-cb}

We prove the following statement which substantiates the claim made below \Cref{thm:param-alt}:

\begin{corollary}\label{claim:w2-cred-set}
For any $A\subset\Theta$ and $\delta>0$, define the ``enlarged'' set 
$$
A_\delta := \{\theta'\in\Theta: ~~\exists\theta\in A~s.t.~\|\theta-\theta'\|\le \delta\}. 
$$
Then 
\begin{enumerate}[label=(\roman*),leftmargin=*]
\item Let $\epsilon>0,\gamma\in(0,1)$ be arbitrary, $(p,q)$ be any pair of distributions over $\Theta$ s.t.~$W_{2,\theta}(p,q) \le \epsilon, $
  and $A_\gamma\subset\Theta$ be any set s.t.~$p(A_\gamma)=1-\gamma$. 
  Then for any $t>0$, we have $q(A_{\gamma,t^{-1/2}\epsilon})\ge 1-\gamma-t$. 
\item When \eqref{eq:main-asymp} holds, there exist some $\delta_n\ll \BayesError[n]^2$ s.t.~the following statement holds on a $\cF_n$-measurable event with probability $\to 1$: for all $\gamma\in (0,1)$ and $\cF_n$-measurable $A_\gamma\subset\Theta$ s.t.~$\hat p_{mp,n}(A_\gamma)=1-\gamma$, we have $\pi_n(A_\gamma)\ge 1-\gamma-t_n$ where $t_n=o_n(1)$. 
\end{enumerate}
\end{corollary}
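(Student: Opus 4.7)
The plan is to derive both parts from Markov's inequality applied to an optimal $W_{2,\theta}$-coupling, with part (ii) additionally invoking Markov once more to turn the expected-$W_2^2$ bound of \eqref{eq:main-asymp} into an in-probability statement. I read the conclusion of (ii) as $\pi_n(A_{\gamma,\delta_n})\ge 1-\gamma-t_n$ with $\delta_n\ll\BayesError[n]$, matching the informal claim in the main text that the enlargement is $o(\BayesError[n])$; the statement as printed appears to contain typos ($\BayesError[n]^2$ vs.\ $\BayesError[n]$, and $A_\gamma$ vs.\ $A_{\gamma,\delta_n}$).

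For (i), I would take any coupling $(\theta,\theta')$ with $\theta\sim p$, $\theta'\sim q$, and $\EE\|\theta-\theta'\|^2\le\epsilon^2$ (such a coupling exists, up to arbitrarily small slack, by definition of $W_{2,\theta}$). Markov's inequality gives $\PP(\|\theta-\theta'\|>t^{-1/2}\epsilon)\le t$, and on the complementary event the definition of the enlargement yields $\{\theta\in A_\gamma\}\subset\{\theta'\in A_{\gamma,t^{-1/2}\epsilon}\}$. A union bound with $\PP(\theta\in A_\gamma)=1-\gamma$ then produces $q(A_{\gamma,t^{-1/2}\epsilon})\ge 1-\gamma-t$, which is exactly (i).

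For (ii), I would set $\epsilon_n^2 := \EE_\pi W_{2,\theta}^2(\pi_n,\hat p_{mp,n})$, so by \eqref{eq:main-asymp} one has $\epsilon_n=o(\BayesError[n])$. Pick an intermediate scale $r_n:=\sqrt{\epsilon_n\BayesError[n]}$ and define the $\cF_n$-measurable event $E_n := \{W_{2,\theta}(\pi_n,\hat p_{mp,n})\le r_n\}$; Markov gives $\PP(E_n^c)\le \epsilon_n^2/r_n^2 = \epsilon_n/\BayesError[n]\to 0$. On $E_n$ I would apply (i) with $p=\hat p_{mp,n}$, $q=\pi_n$, $\epsilon=r_n$, and $t=t_n:=(\epsilon_n/\BayesError[n])^{1/2}\to 0$; the enlargement size is then $\delta_n := t_n^{-1/2}r_n = \BayesError[n]^{3/4}\epsilon_n^{1/4} = \BayesError[n]\cdot(\epsilon_n/\BayesError[n])^{1/4} = o(\BayesError[n])$, as claimed.

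There is no serious obstacle here: the corollary is a quantitative packaging of the standard fact that small $W_2$ distance forces measure-theoretic closeness up to a set blow-up. The only points worth monitoring are (a) the apparent typos noted above, (b) measurability of $A_{\gamma,\delta_n}$, which follows from that of $A_\gamma$ together with lower-semicontinuity of $\theta\mapsto\mathrm{dist}(\theta,A_\gamma)$ (true even when $\|\cdot\|$ is only a semi-norm), and (c) the freedom in choosing the intermediate rate $r_n$: the symmetric choice $r_n=\sqrt{\epsilon_n\BayesError[n]}$ balances the two error contributions, but any $r_n$ with $\epsilon_n\ll r_n\ll\BayesError[n]$ would yield the same qualitative conclusion with a different constant profile for $\delta_n$ and $t_n$.
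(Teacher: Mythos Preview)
Your proposal is correct and follows essentially the same approach as the paper: part (i) via an optimal coupling plus Markov on $\|\theta-\theta'\|^2$, and part (ii) via a second Markov bound to obtain a high-probability event on which (i) is applied. The paper uses the same intermediate threshold $r_n=\sqrt{\omega_n\BayesError[n]}$ but a slightly different choice of $t_n$ (namely $(\omega_n/\BayesError[n])^{1/4}$ rather than your $(\omega_n/\BayesError[n])^{1/2}$), leading to different but equally valid exponents for $\delta_n$; as you note, any choice with $\epsilon_n\ll r_n\ll\BayesError[n]$ works. You are also right that the printed statement contains typos: the enlargement should be $o(\BayesError[n])$ rather than $o(\BayesError[n]^2)$, and the conclusion should read $\pi_n(A_{\gamma,\delta_n})$ rather than $\pi_n(A_\gamma)$.
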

\begin{proof}
(i): by definition of $W_{2,\theta}$ there exists a distribution $\Gamma(\theta_p, \theta_q)$ s.t.~the marginal distributions for $\theta_p$ and $\theta_q$ are $p$ and $q$ respectively, and $\EE_{\Gamma}\|\theta_p-\theta_q\|^2 \le \epsilon^2$. Thus, 
\begin{align*}
q(A_{\gamma,\delta}) &= \Gamma(\theta_q\in A_{\gamma,\delta})\ge \Gamma(\theta_p\in A_{\gamma,\delta}, \|\theta_p-\theta_q\|\le t^{-1/2}\epsilon) \\
&\ge p(A_\gamma) - \Gamma(\|\theta_p-\theta_q\|> t^{-1/2}\epsilon)  
\overset{(a)}{\ge} 1-\gamma - \frac{\EE_\Gamma\|\theta_p-\theta_q\|^2}{\epsilon^2} \ge 1-\gamma - t.
\end{align*}
In the above, (a) follows by Chebyshev's inequality. 

(ii) Define 
$\omega_n := (\EE_\pi W_2^2(\hat p_{mp,n},\pi_n))^{1/2}$ so that $\omega_n\ll \BayesError[n]$ by \eqref{eq:main-asymp}. 
Another application of Chebyshev's inequality yields 
$\PP_{\pi}(W_2(\hat p_{mp,n},\pi_n)\le \omega_n^{1/2} \BayesError^{1/2}) = 1-o_n(1)$. 
Restricting to this event and applying (i) with $t \gets \omega_n^{-1/4}\BayesError[n]^{1/4}$ completes the proof.
\end{proof}

\subsection{Deferred Proofs in Section~\ref{sec:theory-examples}}\label{app:examples-derivations}

\subsubsection{Proof for the claims in \Cref{sec:ex-expfam}}\label{app:ex-expfam-deriv}

The following claim immediately implies that in the setting of \S\ref{sec:ex-expfam} 
\Cref{asm:replaced-efficiency} holds for all $s<\min\{1,\delta\}$ and $\nu_l\le 2\alpha l^{-1+s}$, as claimed.
\begin{claim}
  In the setting of Sec.~\ref{sec:ex-expfam} we have $\ExcessError[j]^2 \le 2\alpha j^{-1} \BayesError[j]^2.$ 
\end{claim}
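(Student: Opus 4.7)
The plan is to derive closed-form expressions for $\ExcessError[j]^2$ and $\BayesError[j]^2$ in this conjugate exponential family setting and compare them via the classical Diaconis--Ylvisaker identity.

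First, I would observe that applying the recursion \eqref{eq:seq-MLE-EF} to real data simply produces the running empirical mean $\followParam[j] = S_j/j$ with $S_j := \sum_{i=1}^j T(z_i)$, while conjugacy gives the posterior mean $\BayesParam[j] = (\theta_\pi + S_j)/(\alpha+j)$. Subtracting yields the clean identity
\begin{equation*}
\followParam[j] - \BayesParam[j] = \frac{\alpha}{\alpha+j}\bigl(\followParam[j] - \theta_\pi/\alpha\bigr).
\end{equation*}
Since $\BayesParam[j] = \EE(\theta_0\mid z_{1:j})$ and both $\followParam[j], \BayesParam[j]$ are $z_{1:j}$-measurable, the cross term $\EE\<\followParam[j] - \BayesParam[j], \BayesParam[j] - \theta_0\>$ vanishes by the tower rule in the orthogonal decomposition of $\|\followParam[j] - \theta_0\|^2$, so $\ExcessError[j]^2 = \EE_\pi\|\followParam[j] - \BayesParam[j]\|^2$. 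Writing $T(z_i) = \theta_0 + \xi_i$ with $\EE(\xi_i\mid\theta_0) = 0$ and expanding, cross terms cancel by conditional centering and one obtains
\begin{equation*}
\ExcessError[j]^2 = \frac{\alpha^2}{(\alpha+j)^2}\bigl(V_0 + V_1/j\bigr),\qquad \BayesError[j]^2 = \frac{\alpha^2 V_0 + j V_1}{(\alpha+j)^2},
\end{equation*}
where $V_0 := \EE\|\theta_0 - \theta_\pi/\alpha\|^2$ is the prior variance trace and $V_1 := \EE\|\xi_1\|^2 = \EE\,\Tr H(\eta_0)$ with $H := \nabla^2 A$ the Fisher information.

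The crux is the Diaconis--Ylvisaker identity $V_1 = \alpha V_0$. I would derive it by integration by parts on $\bar\pi$: using $\partial_{\eta_j}\log\bar\pi(\eta) = \theta_{\pi,j} - \alpha\theta_j(\eta)$ and $\partial_{\eta_j}\theta_i(\eta) = H_{ij}(\eta)$, evaluating $\int \theta_i\,\partial_{\eta_j}\bar\pi\,d\eta$ two ways gives $\alpha\EE[\theta_i\theta_j] = \theta_{\pi,j}\EE\theta_i + \EE H_{ij}$, and tracing in $i=j$ (after subtracting $(\EE\theta)(\EE\theta)^\top$) yields $\Tr\Var_{\bar\pi}\theta = \alpha^{-1}\Tr\EE_{\bar\pi} H$, i.e.~$V_0 = V_1/\alpha$. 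Substituting back collapses the formulas to $\BayesError[j]^2 = V_1/(\alpha+j)$ and $\ExcessError[j]^2 = \alpha V_1/[j(\alpha+j)]$, so $\ExcessError[j]^2/\BayesError[j]^2 = \alpha/j \le 2\alpha/j$.

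The main subtlety is validating the integration-by-parts step, i.e.~checking that boundary terms at the frontier of the natural parameter space vanish; this holds under the standard regularity conditions ($\bar\pi$ a proper Diaconis--Ylvisaker conjugate prior with $\alpha>0$ and $\theta_\pi/\alpha$ interior to the mean parameter space). The argument is dimension-free and extends at once to any inner-product (semi-)norm $\|v\|^2 = v^\top M v$ by inserting $M$ into both traces.
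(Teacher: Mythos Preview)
Your proof is correct and in fact sharper than the paper's: you obtain the exact identity $\ExcessError[j]^2/\BayesError[j]^2 = \alpha/j$, whereas the paper only proves the bound with the extra factor of~$2$. The routes are genuinely different. The paper never computes $\BayesError[j]^2$ in closed form; instead it uses the martingale property of $\{\BayesParam[k]\}$ to write $\BayesError[j]^2 = \sum_{k\ge j}\EE\|\BayesParam[k+1]-\BayesParam[k]\|^2$, lower-bounds each summand by $V_\pi/(k+\alpha)^2$ with $V_\pi := \EE_{\theta\sim\pi,z\sim p_\theta}\|T(z)-\theta\|^2$ (your $V_1$), and sums to get $\BayesError[j]^2 \ge V_\pi/(j+\alpha)$. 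It then computes $\EE_\pi\|\followParam[j]-\theta_0\|^2 = V_\pi/j$ directly and subtracts. Your argument replaces the telescoping lower bound by an exact evaluation of both $\BayesError[j]^2$ and $\ExcessError[j]^2$, the key input being the Diaconis--Ylvisaker second-moment identity $\Cov_{\bar\pi}(\theta)=\alpha^{-1}\EE_{\bar\pi}\nabla^2 A$. This is cleaner and shows the paper's lower bound $\BayesError[j]^2\ge V_\pi/(j+\alpha)$ is actually an equality. The trade-off is that your proof requires the integration-by-parts regularity (vanishing boundary terms on the natural parameter domain), which the paper does not assume explicitly; its martingale argument sidesteps this entirely, needing only that $\{\BayesParam[k]\}$ is a martingale and that $\BayesData[k+1]\mid\cF_k$ has the posterior-predictive mixture form.
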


\begin{proof}
It follows by our choice of $\pi$ that 
$$
\BayesParam[j] = \frac{j \followParam[j] + \theta_\pi}{j+\alpha} = 
\BayesParam[j-1] + \frac{1}{j+\alpha}(\BayesData[j] - \BayesParam[j-1]). 
$$
To bound $\BayesError$ we use the above representation, and the fact that $\{\BayesParam\}$ define a martingale; it follows that 
\begin{align*}
\BayesError[j]^2 &= \EE_\pi \|\BayesParam[j] - \BayesParam[\infty]\|^2 
=  \sum_{k=j}^\infty\EE_\pi \|\BayesParam[k] - \BayesParam[k+1]\|^2 
=  \sum_{k=j}^\infty\EE_\pi \frac{\|T(\BayesData[k+1]) - \BayesParam[k]\|^2}{(k+\alpha)^2}.
\end{align*}
Observe that $\PP(\BayesData[k+1]\in dz\mid \BayesParam[k]) = \int \PP_{\tilde\theta_k}(dz) \pi_{k,\BayesParam[k]}(d\tilde\theta_k)$, where $\pi_{k,\BayesParam[k]}(d\theta) = \pi(\theta\mid \BayesData[\le k])$ is the posterior measure, and is \emph{determined by} the posterior mean $\BayesParam[k]$: the posterior for natural parameter is $\pi(\eta\mid \BayesData[\le k]) \propto \exp((k+\alpha)\eta^\top \BayesParam[k] - (k+\alpha) A(\eta))$, and $\pi(\theta\mid \BayesData[\le k])$ is merely its pushforward by $\nabla A$. 
Therefore, we have $\BayesData[k+1]\indep \BayesParam[k]\mid \tilde\theta_k$, and 
\begin{align*}
\EE\|T(\BayesData[k+1]) - \BayesParam[k]\|^2 &= %
\EE\|T(\BayesData[k+1]) - \tilde\theta_k\|^2 + \EE\|\tilde\theta_k - \BayesParam[k]\|^2 + 
\EE\<(T(\BayesData[k+1]) - \tilde\theta_k\mid \tilde\theta_k, \cancel{\BayesParam[k]}), \tilde\theta_k - \BayesParam[k]\> \\
&\overset{(i)}{=} \EE\|T(\BayesData[k+1]) - \tilde\theta_k\|^2 + \EE\|\tilde\theta_k - \BayesParam[k]\|^2 \\ 
&\ge \EE\|T(\BayesData[k+1]) - \tilde\theta_k\|^2 \\
&\overset{(ii)}{=} \EE_{\theta\sim\pi,z\sim\PP_\theta}\|T(z) - \theta\|^2 =: V_\pi.
\end{align*}
In the above, (i) holds because $\tilde\theta_k$ is the mean parameter for $\BayesData[k+1]$, 
and (ii) holds because the marginal distributions for all posterior samples $\tilde\theta_k$ equal the prior. Plugging back, we find 
\begin{align*}
\BayesError[j]^2 &\ge  \sum_{k=j}^\infty \frac{V_\pi}{(k+\alpha)^2} \ge 
\frac{1}{j+\alpha}V_\pi. 
\end{align*}

For $\ExcessError$, we have 
\begin{align*}
\EE_\pi\|\EstParam[j] - \theta_0\|^2 &= 
    \EE_{\theta\sim\pi, z_{1:j}\sim\PP_{\theta}^{\otimes j}}(\EE(\|\EstParam[j] - \theta\|^2\mid\theta)) \\ 
&= 
    \EE_{\theta\sim\pi, z_{1:j}\sim\PP_{\theta}^{\otimes j}}\biggl(\EE\biggl(\biggl\|
    \frac{1}{j}\sum_{k=1}^j T(z_k)
    - \theta\biggr\|^2~\bigg|~\theta\biggr)\biggr) \\ 
&= \EE_{\theta\sim\pi, z\sim\PP_{\theta}}  \frac{\|T(z)-\theta\|^2}{j} = \frac{1}{j} V_\pi,
\end{align*}
where the last equality follows from conditional independence. 
It thus follows that 
$$
\ExcessError[j]^2 %
\le \frac{\alpha}{j(j+\alpha-1)}V_\pi \le \frac{\alpha}{j} \cdot \frac{j+\alpha}{j+\alpha-1} \BayesError[j]^2 
\le \frac{2\alpha}{j}\BayesError[j]^2.
$$
This completes the proof.
\end{proof}

\begin{claim}\label{claim:bounded-Fisher}
Let $F_\theta$ denote the Fisher information matrix for $p_\theta$. %
In the setting of Sec.~\ref{sec:ex-expfam}, 
\Cref{thm:param-alt} holds if %
$(\|T\|_\infty:=\sup_{z\in\cZ}\|T(z)\|, \sup_{\theta}\lambda_{max}(F_\theta), \sup_{\theta}\lambda_{min}^{-1}(F_\theta))$ are all bounded.  
\end{claim}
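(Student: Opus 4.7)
The plan is to verify \Cref{asm:stable}~(ii) under the stated hypotheses; the remaining assumptions of \Cref{thm:param-alt} have already been checked in the text and in \Cref{app:ex-expfam-deriv} with constants that do not depend on $\|T\|_\infty$ or on the Fisher eigenvalues. I take $\|z-z'\|_z := \|T(z)-T(z')\|$, so that the $L$-Lipschitz condition on $T$ becomes trivial ($L_2 = 1$) while the image $T(\cZ) \subset \RR^d$ has Euclidean diameter at most $2B$, where $B := \|T\|_\infty$. Write $M := \sup_\theta \lambda_{\max}(F_\theta)$ and $m := \inf_\theta \lambda_{\min}(F_\theta)$ for the two-sided Fisher bounds.

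The verification proceeds in three steps. First, since $\theta = \nabla A(\eta)$ has Jacobian $F_\eta$ with eigenvalues in $[m,M]$, its inverse is $m^{-1}$-Lipschitz and $\|\eta - \eta'\| \le m^{-1}\|\theta - \theta'\|$. Second, the Bregman-divergence representation $\KL{\bar p_\eta}{\bar p_{\eta'}} = A(\eta') - A(\eta) - \langle \theta, \eta'-\eta\rangle$ for exponential families, combined with a second-order Taylor expansion of $A$ and the upper bound $M$ on its Hessian, yields $\KL{p_\theta}{p_{\theta'}} \le (M/2)\|\eta - \eta'\|^2 \le (M/(2m^2))\|\theta-\theta'\|^2$. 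Third, a maximal coupling of $p_\theta$ and $p_{\theta'}$, together with $\|T(Z) - T(Z')\| \le 2B$ holding deterministically, gives
\[
W_{2,z}^2(p_\theta, p_{\theta'}) \;\le\; \EE\|T(Z) - T(Z')\|^2 \;\le\; 4B^2 \, \mathrm{TV}(p_\theta, p_{\theta'});
\]
Pinsker's inequality then produces $W_{2,z}^2(p_\theta, p_{\theta'}) \le (2B^2 \sqrt{M}/m)\|\theta - \theta'\|$, which is case~(b) of \Cref{asm:stable}~(ii) with $C_\Theta = 2B^2\sqrt{M}/m$. The weaker step-size requirement attached to case~(b), $\eta_j \le j^{-(3+\iota)/4}$, is satisfied by $\eta_j = (j+1)^{-1}$ for any $\iota \le 1$, which is compatible with the values used to verify the other assumptions; all hypotheses of \Cref{thm:param-alt} thus hold and the claim follows.

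The main obstacle is structural rather than technical: generic exponential families do not admit a quadratic $T_2$-type bound $W_2^2(p_\theta, p_{\theta'}) \lesssim \|\theta-\theta'\|^2$, so case~(a) of \Cref{asm:stable}~(ii) is out of reach through the generic route, and one must settle for the linear-in-$\|\theta-\theta'\|$ bound of case~(b) obtained via diameter-$\times$-$\mathrm{TV}$ and Pinsker. The two hypotheses of the claim are exactly what each half of this detour requires: bounded $T$ controls the Wasserstein-to-$\mathrm{TV}$ step through the diameter of $T(\cZ)$, while two-sided Fisher bounds convert the Bregman-divergence expression for $\mathrm{KL}$ back into the mean-parameter norm appearing in $\|\theta-\theta'\|$.
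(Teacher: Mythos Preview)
Your proof is correct and follows essentially the same route as the paper's: reduce to case~(b) of \Cref{asm:stable}~(ii) by combining a diameter-times-$\mathrm{TV}$ bound on $W_{2,z}^2$ with Pinsker's inequality, then control $\mathrm{KL}$ through the Bregman representation, the Hessian bound on $A$, and the Lipschitz continuity of $\eta(\theta)$.

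Two minor remarks. First, you write that the Jacobian of $\eta\mapsto\theta=\nabla A(\eta)$ is ``$F_\eta$ with eigenvalues in $[m,M]$''; this Jacobian is indeed $\nabla^2 A(\eta)$, the Fisher information in the \emph{natural} parameter, but that equals $F_\theta^{-1}$, so its eigenvalues lie in $[M^{-1},m^{-1}]$ rather than $[m,M]$. The inverse map is therefore $M$-Lipschitz (not $m^{-1}$-Lipschitz), and the KL bound becomes $\mathrm{KL}\le (2m)^{-1}\|\eta-\eta'\|^2 \le M^2(2m)^{-1}\|\theta-\theta'\|^2$. Since the hypothesis gives two-sided bounds on $F_\theta$, this only changes the constant $C_\Theta$ and does not affect validity. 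Second, your choice $\|z-z'\|_z := \|T(z)-T(z')\|$ is in general only a semi-norm on $\cZ$, whereas \Cref{asm:stable} formally asks for a norm; the paper handles this by observing that the MP and the Bayesian posterior depend on $z$ only through $T(z)$, so one may replace $z$ by $T(z)$ throughout the proof of \Cref{thm:param-alt} without loss. Your formulation is equivalent in substance, but it is worth making the justification explicit.
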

\begin{proof}[Proof for \Cref{claim:bounded-Fisher}]
Observe that \Cref{thm:param-alt} will continue to hold if we replace all occurrences of $z$ with $T(z)$ (and the norm $\|\cdot\|_z$ with $\|\cdot\|$) in its proofs and assumptions: this is because both the MP and the Bayesian posterior only depend on $z$ through $T(z)$. 
Therefore, to prove the claim it suffices to establish \Cref{asm:stable}~(ii)--or Eq.~\eqref{eq:A3-weaker}--after the replacement. The equation holds because 
\begin{align*}
  &\phantom{=}W_2^2(T_{\#} p_\theta, T_{\#} p_{\theta'})  \\
  &\le 2\sup_{z,z'<\infty}\|T(z)-T(z')\|^2 D_{TV}(T_{\#} p_\theta, T_{\#} p_{\theta'}) && \mcomment{\citealp{villani2009optimal}, Theorem 6.15} 
  \\ 
  &\le  8\|T\|_\infty^2 D_{TV}(p_\theta, p_{\theta'}) \\ 
  &\le 8\|T\|_\infty^2 \sqrt{\mathrm{KL}(p_\theta, p_{\theta'})/2} && \mcomment{Pinsker's inequality} \\ 
  &=  8\|T\|_\infty^2 \sqrt{A(\eta') - A(\eta) - \nabla A(\eta)^\top (\eta'-\eta)} \\ 
  &\le  4\sqrt{2} \|T\|_\infty^2 (\sup_{\tilde\eta}\|\nabla^2 A(\tilde\eta)\|_{op})^{1/2} \|\eta-\eta'\| \\ 
  &\le 4\sqrt{2} \|T\|_\infty^2 \sup_{\tilde\eta}\|\nabla^2 A(\tilde\eta)\|_{op}^{1/2} 
  (\sup_{\tilde\eta'}\|(\nabla^2 A(\tilde\eta'))^{-1}\|_{op}) \|\theta-\theta'\|.
\end{align*}
In the above, $\eta=(\nabla A)^{-1}(\theta), \eta'=(\nabla A)^{-1}(\theta')$ are the respective natural parameters, $T_{\#}$ denotes the pushforward measure, the LHS is the replaced LHS of \eqref{eq:A3-weaker}, and the coefficients in the RHS are bounded by assumptions, in particular because 
$\nabla^2 A(\eta) = F_\theta^{-1}$. %
This completes the proof.
\end{proof}

We note that it should be possible to replace the uniform boundedness conditions with their local counterparts (that only holds in a neighbourhood of $\theta_0$); the resulted conditions can be used to establish a conditional version of the theorem (which can be easily proved by adapting the existing proof). We omit the discussion for brevity.

Finally, we substantiate on the claims about specific exponential family models: 
for Gaussian model \eqref{eq:A3} holds because the transport plan is $z\mapsto z+\theta'-\theta$; for $\{Exp(\theta)\}$ \eqref{eq:A3} holds by considering the transport plan $z\mapsto \frac{\theta'}{\theta} z$. For the Bernoulli model we can establish \eqref{eq:A3-weaker} using the first two inequalities in the above proof. 

\subsubsection{Deferred proofs and additional discussion for Section \ref{sec:ex-lingauss}}\label{app:wn}

\newcommand{\regCov}[1]{\hat\Sigma_{#1}}

\paragraph{Connection to nonparametric inverse problems and regression.} \Cref{sec:ex-lingauss} is closely connected to the following inverse problem:
\begin{equation}\label{eq:nip}
\bar z_n = A \theta_0 + n^{-1/2} W, ~~\text{  where  } W\sim \cN_\cZ(0, I).
\end{equation}
Indeed, we can recover the above problem by setting $\bar z_n := \frac{1}{n}\sum_{i=1}^n z_i$. 
The latter is the classical (nonparametric) linear inverse problem; %
see \citet{cavalier2008nonparametric} for a review. 
Strictly speaking, our setup is different from \eqref{eq:nip} as we observe $\{z_i\}$, but \emph{the difference is irrelevant} to our discussion, since we can verify that 
both the MP and %
the Bayesian posterior %
only depend on $\{z_i\}$ through $\bar z_n$ and are thus applicable to \eqref{eq:nip}.

When $\alpha=1$, the problem can be equivalently stated as $\bar z_n = \theta'_0 + n^{-1/2} W$ where $\theta'_0 := A\theta_0$; and the norm of interest becomes $\|\hat\theta-\theta_0\| = \|A\hat\theta - \theta'_0\|_\cZ$. This is the signal-in-white noise problem which is asymptotically equivalent to regression \citep{brown1996asymptotic}. The prior $\pi$ for $\theta$ corresponds to the GP\footnote{see \citet{van2008reproducing} for a definition of GPs in Hilbert spaces.} prior $\pi' := \cN_\cZ(0, AA^\top)$ for $\theta'$. Such priors are ``infinitesimally weaker'' than assuming $\theta'_0$ to live in $S^{2\beta-1} := \{\theta' = \sum_i i^{-(2\beta-1)/2}a_i \psi_i\text{ for some }\{a_i\}\in\ell_2(\mb{N})\}$ where $\{\psi_i\}$ denotes the left singular vectors of $A$, as $\theta'\sim\pi'$ will fall into $S^{2\beta-1-\epsilon}$ a.s.~for all $\epsilon>0$ \citep{van2008reproducing}. The spaces $S^{(\cdot)}$ are 
known as \emph{Sobolev classes} \citep[see e.g.,][]{cavalier2008nonparametric} %
and can recover the $L_2$-Sobolev spaces for suitable choices of $\beta$ and $\{\psi_i\}$. 

\paragraph{Inapplicability of MLE / natural gradient.} 
For both \eqref{eq:nip} and the data generating process in \Cref{sec:ex-lingauss}, the MLE $\hat\theta_n$ satisfies $A\hat\theta_n = \bar z_n = \frac{1}{n}\sum_{i=1}^n z_i$. When $\alpha=1$, the estimation error $\|\hat\theta_n - \theta_0\|$ thus equals the \emph{dimensionality} of $\cZ$, and is unbounded if the dimensionality is so; 
the same applies to the natural gradient algorithm with $\eta_j = j^{-1}$ due to its exact equivalence to MLE in this scenario. 
In contrast, the Bayesian estimator have a bounded error (see \eqref{eq:bayes-error-wn} below) due to its regularisation effect. 

\paragraph{Validating the assumptions for the linear-Gaussian MP.} Observe that the posterior equals $$
\pi(\theta\mid z_{\le j}) = \cN(\theta\mid \regCov{j}^{-1} A^\top \bar z_j, (j\regCov{j})^{-1}),
$$
where $
\regCov{j} := A^\top A + j^{-1} I, ~
\bar z_j := \frac{1}{j}\left(\sum_{i=1}^n z_i + \sum_{i=n+1}^j \BayesData[i]\right),
$ and 
$A^\top$ denotes the adjoint. And we have 
\begin{equation}\label{eq:bayes-error-wn}
\BayesError[j]^2 = \mrm{Tr} ((A^\top A)^\alpha (j\regCov{j})^{-1})
    = \sum_{i=1}^\infty \frac{s_i^{2\alpha} }{j s_i^2 + 1}
    \asymp %
     j^{-1} + %
     j^{-\alpha} m_j, 
\end{equation}
where $m_j := \max\{m\in\mb{N}: s_m^2 \ge j^{-1}\}\asymp j^{1/2\beta}$. 
We have introduced the Hilbert spaces $\cH,\cZ$ and defined the parameter norm $\|\theta\| := \|(A^\top A)^{\alpha/2}\theta\|_\cH =: \|S \theta\|_\cH$. 
In instantiating the theorem we will set the data norm as 
$
\|z\|_z := \|(AA^\top)^{(\alpha-1)/2} z\|_\cZ.
$

We now verify the assumptions in turn. 
\begin{enumerate}[leftmargin=*]
\item \Cref{asm:approx-martingale} holds for all $\delta>0$ because $\Alg$ defines an exact martingale.
\item \Cref{asm:stable} holds because for its~(i), we have
\begin{align*}
\|\EstDelta(\theta, z) - \EstDelta(\theta', z)\|^2 &= 
\|S(\EstDelta(\theta, z) - \EstDelta(\theta', z))\|_\cH^2 \\ 
&=
    \|j^{-1} g_j(A^\top A) A^\top A S(\theta-\theta')\|_\cH^2 \le j^{-2} \|\theta-\theta'\|^2, \\
\|\EstDelta(\theta, z) - \EstDelta(\theta, z')\|^2 &= 
\|S\cdot j^{-1} g_j(A^\top A) A^\top (z-z')\|_\cH^2  \\ 
&\le j^{-2} \|(A^\top A) g_j(A^\top A)\|_{op}^2 \|(AA^\top)^{(\alpha-1)/2}(z-z')\|_\cZ^2 
\le j^{-2}\|z-z'\|_z^2.
\end{align*}
And for its condition~(ii), 
\begin{align*}
W_2^2(p_\theta, p_{\theta'};\|\cdot\|) &= \|A\theta - A\theta'\|^2 = \|\theta-\theta'\|^2.%
\end{align*}
\item To verify \cref{asm:replaced-efficiency} we first prove that 
$$
\EstDelta[j](\BayesParam, \BayesData) = \BayesDelta[j].
$$ 
This is because there exist independent rvs 
$
e_i \sim \cN(0, \sigma^2 I), ~\Delta e_i \sim \cN(0, j^{-1} A \regCov{j}^{-1} A^\top)
$
s.t.~for $
\bar e_i := e_i + \Delta e_i,
$ we can have 
\begin{align*}
\BayesDelta[j] &=
    \regCov{j}^{-1} A^\top \left(\frac{j-1}{j}\bar z_{j-1} + \frac{1}{j}(A\BayesParam[j] + \bar e_j)\right) - \regCov{j-1}^{-1} A^\top \bar z_{j-1} %
= j^{-1} \regCov{j}^{-1}A^\top \bar e_j %
= \EstDelta(\BayesParam, \BayesData). %
\end{align*}
Since we also have $\followParam[n] = \BayesParam[n]$, it follows by induction that $\followParam[j] = \BayesParam[j]$ for all $j\ge n$. 
Thus, $\ExcessError \equiv 0$, and the assumption holds for $\nu_l \equiv 0$. 
\item \Cref{asm:martingale-divergence} holds for $\Calg=0,\Calg'=1$ and $\eta_j=j^{-1}$ because 
$$
\EE_{z'\sim\PP_{\theta'}} \EstDelta[j](\theta, z') = j^{-1} \underbrace{g_j(A^\top A) A^\top A}_{=: H_{\theta,j}} (\theta'-\theta).
$$
\item \Cref{asm:conventions} holds when $\alpha=1$ since $\BayesError[j]^2\asymp j^{-1+1/2\beta}$. 
It also holds for a range of $\alpha$ depending on the value of $\beta$. 
\end{enumerate}

\paragraph{(Non-asymptotic) connections to GP regression.}
Consider a GP model with input space $\cX$, prior $\pi_{gp} = \mc{GP}(0, k)$ and likelihood $p(y\mid f(x))=\cN(f(x), 1)$. 
Let $\bar\cH$ be the reproducing kernel Hilbert space (RKHS) defined by $k$, $\{(x_1,y_1),\ldots,(x_n,y_n)\}$ be the training data, and $K := (k(x_i,x_j))_{ij}\in\RR^{n\times n}$ be the %
Gram matrix. 
Introduce the notations %
$f(X) := (f(x_1);\ldots;f(x_n))\in\RR^n$ and 
$Y := (y_1;\ldots;y_n)\in\RR^n$. 
Let $\cH\subset\bar\cH$ be the subspace spanned by $\{k(x_i,\cdot)\}_{i=1}^n$ with the inherited norm. Then we can identify the projection of any $f\in\bar\cH$ onto $\cH$ with $f(X)$, and its norm satisfies $\|f(X)\|_{\cH}^2 = f(X)^\top K^{-1} f(X)$. Let $\cZ=\RR^n$ be equipped with the Euclidean norm.  
We substitute the remaining quantities in \cref{sec:ex-lingauss} as follows:
$$
\theta = f(X), ~~~
A\theta = %
\frac{1}{\sqrt{n}} f(X), ~~~
\frac{1}{n}\sum_{i=1}^n z_i = \frac{1}{\sqrt{n}} Y.
$$
Then it is clear that $\theta$ follows the prior $\pi$ and the conditional distribution $\frac{1}{n}\sum_{i=1}^n z_i \mid \theta$ equals that defined by the likelihood  in \cref{sec:ex-lingauss}, and  
we can readily verify that the %
posterior in Sec.~\ref{sec:ex-lingauss} for $\theta=f(X)$ equals the GP marginal posterior. 
Following \cref{sec:ex-lingauss}, we can consider an MP defined by 
\eqref{eq:lingauss-alg} and $\EstData[j] \sim \cN(\EstParam, n^{-1} I)$, which provides a high-quality approximation to the GP marginal posterior. 

As noted above, on $\{z_j\}$ sampled from the prior predictive distribution 
\eqref{eq:lingauss-alg} has a behaviour equivalent to sequential posterior mean estimation which, for linear-Gaussian Bayesian models, is equivalent to sequential maximum-a-posteriori (MAP) estimation. Based on the same idea of sequential MAP estimation we can derive the update rule \eqref{eq:gp-alg-spo} for GP regression. Note that 
\eqref{eq:gp-alg-spo} and \eqref{eq:lingauss-alg} are not an exact match because the GP MAP also depends on the sampled $\hat x_j$. (If we continue the analogy above, \eqref{eq:gp-alg-spo} can be viewed as an MAP in a Bayesian model where we impute at all $n$ input locations simultaneously in each iteration, and scale the resulted log likelihood by $1/\sqrt{n}$.) 
Nonetheless, we expect their behaviour to be similar. A separate analysis for \eqref{eq:gp-alg-spo} may be possible, but we forego this discussion given the rich literature on GP inference. Instead, we refer readers to \Cref{app:toy-exp-gp} for an empirical evaluation for \eqref{eq:gp-alg-spo}. 

\begin{remark}\label{rem:gp-ood}
The above discussion restricted to the marginal posterior $f(X)\mid (X,Y)$ and does not cover predictive uncertainty in out-of-distribution (OOD) regions. 
We note that for models that define continuous prediction functions, the uncertainty for $f(X)$ always translates to some uncertainty in OOD regions due to the continuity constraint; the MP will also provides additional uncertainty if we sample $\hat x_j$ from the OOD regions. However, an equally important source of OOD uncertainty is from the model's \emph{initialisation randomness}, which can be fully characterised in the GP example above. 

To see this, consider an MP defined by \eqref{eq:gp-alg-spo} and the choice of $\hat x_{j+1} \sim \mrm{Unif}\{x_{1:n}, \hat x_{n+1:j}\}$. We claim that the resulted algorithm will fully retain the initialisation randomness for %
uncertainty in OOD regions. Formally, for any $f\in\bar\cH$, or an interpolating RKHS which %
cover all GP samples \citep{steinwart2019convergence}, and any $x_*\in\cX$, we can decompose 
$f(x_*) = f_{\parallel}(x_*) + f_{\perp}(x_*)$ by projecting $f =: f_{\parallel} + f_{\perp}$ into $\cH$ and its orthogonal complement.
Then %
the GP posterior for $f_\parallel$ and $f_\perp$ are then independent, and the latter is equivalent to the prior; this is because the likelihood is independent of $f_\perp$. %
The MP update admits a similar factorisation for the same reason, and thus any initialisation randomness will be retained in the MP, and an exact match to the GP posterior can be possible if we initialise based on the GP prior. 
\end{remark}

\section{Implementation Details for Algorithm~\ref{alg:main}}\label{app:impl-details}

\paragraph{Choices of $\Delta n$ and $N$.} 
If the base algorithm is ``correctly specified'' for the problem as hypothesised, we should ideally choose $\Delta n$ and $N$ to match the exact martingale posterior ($\Delta n=1, N\to\infty$) as close as possible, but computational constraints may prevent an exact match. 
A larger $\Delta n$ or a smaller $N$ generally leads to an underestimation of uncertainty. 

We note that no adjustment is needed if, as 
in many applications, the goal is merely to improve predictive performance by better accounting for epistemic uncertainty, 
since the algorithm can still account for a substantial proportion of the uncertainty; 
and similar underestimation issues may also emerge in the applications of approximate Bayesian inference to %
complex models, when due to computational constraints we cannot recover the exact posterior. 
Nonetheless, for the construction of credible sets, 
we provide a rule of thumb to %
compensate for this effect by analysing simplified settings. 
Specifically, consider the natural GD algorithm
\begin{equation}\label{eq:natgd-martingale}
\EstParam[j+1] := \EstParam[j] + (j+1)^{-1} F_{\EstParam[j]}^{-1} \nabla_\theta \log p_{\EstParam[j]}(\EstData[j+1]),
\end{equation}
where $F_\theta$ denotes the Fisher information matrix. 
Suppose $n/\Delta_n\in\mb{N}$ for simplicity, then 
the covariance of the parameter ensemble from \Cref{alg:main} is 
\begin{equation}\label{eq:alg1-cov}
\sum_{j'=n/\Delta_n}^\infty \frac{\Delta_n }{((j'+1)\Delta_n)^2} F_{\EstParam}^{-1}
\approx 
\sum_{j'=n/\Delta_n}^\infty \frac{\Delta_n }{((j'+1)\Delta_n)^2} F_{\theta_0}^{-1}
\sim \left(\frac{1}{n+\Delta_n} - \frac{1}{N+\Delta_n}\right) F_{\theta_0}^{-1}.
\end{equation}
The exact MP has covariance $\sim n^{-1}F_{\theta_0}^{-1}$, so 
to match the exact MP it suffices to inflate the covariance by a factor 
$\sim \frac{\Delta n}{n} + \frac{n}{N}$. 
The same inflation applies to credible sets for linear functionals of the parameter which, 
for linear-in-parameter regression models, include  
pointwise credible intervals for the true regression function. 
Note that the same adjustment applies to any GD algorithms with a step-size of $\eta_j\sim j^{-1}$, which is generally related with sequential ERM algorithms (and thus Alg.~\ref{alg:main}) as shown in \Cref{sec:theory-examples}. 
And the above discussion is relevant in a deep learning context if we consider ultrawide NNs \citep{lee2019wide}. 

In reality, we expect the adjustment to produce conservative credible sets for NN-based algorithms, since it also (unnecessarily) inflates the initialisation randomness. 
However, the scale of the adjustment is generally small, and together with the unadjusted credible sets they can provide a two-sided bound for the predictive uncertainty. 

In our experiments we adopt $N\asymp n\asymp \Delta n$ where the ratios $(N/n, n/\Delta n)$ are in the range of $[1, 10]$, and determine the adjustment scale by explicitly numerical approximation of the ratio between the coefficient of \eqref{eq:alg1-cov} and $n^{-1}$. 
For base algorithms that are potentially misspecified we determine the ratio through cross validation. 

\paragraph{Early stopping for NN-based algorithms.} 
While the objective \eqref{eq:nn-objective} always prevent overfitting to past samples, we still need to determine the number of optimisation iterations for the new samples $\EstData[n_j:n_j+\Delta n]$. 
In our experiments we use a simple strategy: 
we use a validation set to determine the number of iterations $L$ for estimation on the $n$ real samples, 
and optimise for $L \Delta n/n$ iterations when ``finetuning'' on (each group of) $\Delta n$ synthetic samples. Other optimisation hyperparameters are also kept consistent across the initial estimation and finetuning. 

\section{Experiment Details and Additional Results}\label{app:exps}

This section provides full details for the experiments in the text, and two additional experiments on GP inference. 

\subsection{Toy Experiment: 1D Gaussian Process Regression}\label{app:toy-exp-gp}

We first evaluate the proposed method on a toy GP regression task, to understand its behaviour and 
complement the GP discussion in \Cref{sec:ex-lingauss}. 

\begin{figure}[htb]
\centering 
\includegraphics[width=0.85\linewidth,clip,trim={0 0.25cm 0 0.25cm}]{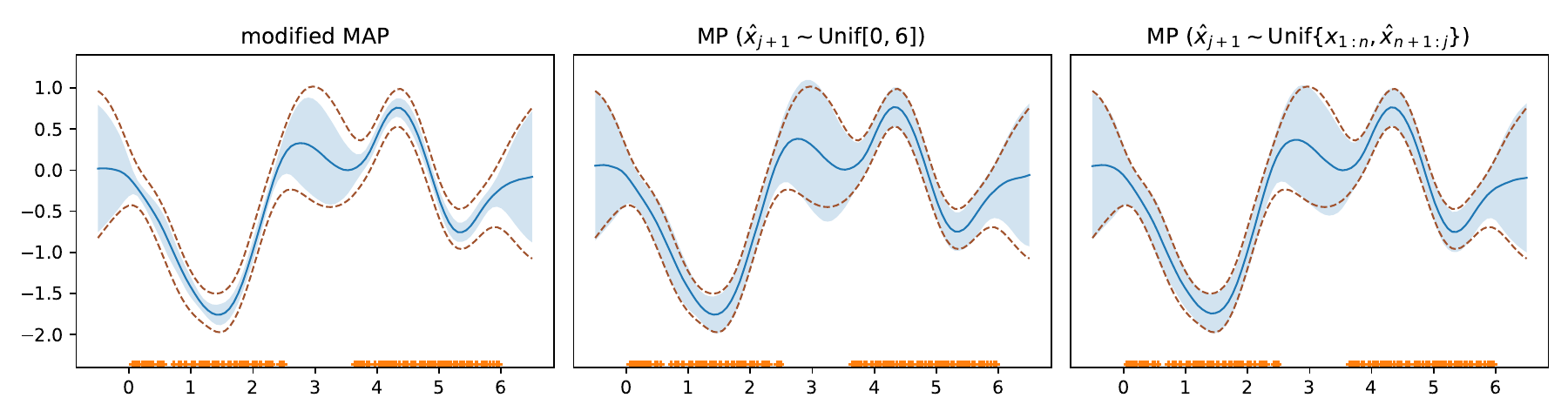}
\caption{GP inference on the Snelson dataset: visualisation of the approximate MP defined by Eq.~\eqref{eq:gp-alg-spo}, compared with the ensemble predictors defined by a modified MAP estimator with similar initialisation randomness (Eq.~\eqref{eq:map-anchoring}). Solid line and shade indicate the mean estimate and $80\%$ pointwise credible intervals (CIs) for the true regression function. Dashed line indicates the $80\%$ CIs from the exact posterior. Dots at bottom indicate the location of training inputs. 
}\label{fig:gp-toy}
\end{figure}

\paragraph{Experiment setup.} We instantiate \Cref{alg:main} using \eqref{eq:gp-alg-spo} as the estimation algorithm and 
random Fourier approximation \citep{rahimi2007random} for the RKHS. 
We adopt the one-dimensional Snelson dataset \citep{snelson2008flexible} and remove the samples with input within the $[0.4, 0.6]$ quantile to create an out-of-distribution region for visualisation. 
We adopt a Mat\'ern-$3/2$ kernel with bandwidth $1$ approximated with $400$ random Fourier features, and specify a Gaussian likelihood with variance $\sigma^2=0.64$. We set $N=6n, \Delta n=0.05n$ in \Cref{alg:main}, and consider two choices for $\hat x_j$: (i) uniform sampling from $[0, 6]$, and (ii) nonparametric resampling as in \Cref{rem:gp-ood}. 
We compare with an ensemble of modified MAP predictors, proposed by \citet{pearce20a}:
\begin{equation}\label{eq:map-anchoring}
\hat f_n := \argmin_{f} \sum_{i=1}^n (f(x_i)-y_i)^2 + \frac{\sigma^2}{n}\|f-\tilde f_0\|_\cH^2, ~~~\text{ where }\tilde f_0\sim\mc{GP}(0, k_x),
\end{equation}
and $k_x$ denotes the Mat\'ern kernel. 
Compared with standard MAP estimation, the random $\tilde f_0$ provides an additional source of initialisation randomness which is 
also needed for the MP to match the exact Bayesian posterior in out-of-distribution regions (\Cref{rem:gp-ood}). 
\eqref{eq:map-anchoring} is also analogous to the deep ensemble method \citep{lakshminarayanan2017simple} in which epistemic uncertainty is similarly derived solely from initialisation randomness. %
For all methods we compute the closed-form optima. 

\paragraph{Results and discussion.}
\Cref{fig:gp-toy} visualises the predictive uncertainty from the MP, the modified MAP ensemble, and the exact posterior. We can see that the MP produces a close match to the GP posterior, as expected in \Cref{sec:ex-lingauss}; 
and the results are highly consistent across the two choices of samplers for $\hat x_j$. 
In contrast, \eqref{eq:map-anchoring} underestimates uncertainty, especially in in-distribution regions. 
While conjugate GP inference is a well-studied problem, 
the above result suggests that in more general scenarios, the uncertainty derived from our method may also have a more desirable behaviour than that from methods relying solely on initialisation randomness. 
We will observe such results in the DNN experiments in \Cref{app:exp-dj}.

\subsection{Synthetic Multi-Task Learning Experiment}\label{app:gp-mtl}

We now turn to a synthetic setup where the MP defined by \eqref{eq:gp-alg-spo} is instantiated with a kernel learned from multi-task data. 

\paragraph{Background: few-shot multi-task learning in a stylised setting.} The setup is inspired from a line of theoretical work \citep{tripuraneni2020theory,du2020few,tripuraneni2021provable,wang2022fast} that studied multi-task learning in a stylised setting and showed that,  
given a number of i.i.d.~pretraining tasks sampled from a task distribution $\pi$, 
it is possible to learn a linear representation space (i.e., a finite-dimensional RKHS) that allows for sample-efficient learning on identically distributed test tasks. 
These results suggest that in such settings our theoretical analysis may guarantee the approximate recovery of the optimal posterior $\pi_n = \pi(\cdot\mid z_{1:n})$, since 
the base algorithm \eqref{eq:gp-alg-spo} instantiated with a learned RKHS may satisfy the efficiency assumption (Asm.~\ref{asm:replaced-efficiency}) in \S\ref{sec:theory}, following which the discussions in \S\ref{sec:ex-lingauss} will apply. 
Such a result will provide an interesting stylised example where the challenge of uncertainty quantification can be addressed by exploiting pretraining data.

Previous works \citep{du2020few,wang2022fast} showed that in certain regimes test-time prediction using the learned RKHS attains order-optimal errors. 
Our Asm.~\ref{asm:replaced-efficiency} requires the prediction error to be first-order optimal \emph{up to a sample size} of $N>n_{test}$. Thus, we expect it to hold in scenarios closer to \emph{few-shot learning}, where the test task has a smaller sample size. 
We will validate both Asm.~\ref{asm:replaced-efficiency} and the conclusion of \Cref{thm:param-alt} empirically, on a synthetic data distribution inspired by \citep{wang2022fast}. 

\paragraph{Experiment setup.} We consider regression tasks with additive noise and known variance. 
All tasks share a latent feature space $\bar\cX$, and are determined by a feature-space prediction function 
$\bar g: \bar\cX\to\RR$. 
Each task defines a data distribution $p_{\bar g}(x,y)$ as follows:
\begin{equation}
\bar g\sim \mc{GP}(0, \bar k), ~~\bar x = \begin{bmatrix}
    \bar x_{true} \\ \bar x_{spurious}
\end{bmatrix} \sim \cN(0, I), 
~~y\mid \bar x, \bar g\sim \cN(\bar g(\bar x_{true}), \sigma_0^2), 
~~x = \Phi(\bar x). 
\end{equation}
In the above, 
$\bar x$ denotes the unobserved latent features, 
$\bar k$ is a reproducing kernel in the latent space, and the function $\Phi$ is the same across all tasks. 
Representation learning thus amounts to learning the composition of 
the feature-space kernel $\bar k$ and the feature extraction function $\Phi^{-1}$. 
We note that both the values of $(\bar k, \Phi)$ and their structural form (e.g., the fact that $\bar k$ is an RBF kernel, or $\Phi$ is defined by a DNN with a certain architecture) are unknown to the learner. 
Instead, the learner simply invokes the algorithm in \cite{wang2022fast} on the pretraining dataset, which trains a DNN model with $m$ prediction heads (one for each pretraining task) and defines a kernel $\hat k$ using the linear predictions as the feature map. At test time, the learner invokes the base prediction algorithm \eqref{eq:gp-alg-spo} with the RKHS $\cH$ defined by $\hat k$. 

We generate $m$ pretraining tasks, each with $n_{pret}$ observations, and an identically distributed test task with $n_{test}$ observations. We define $\Phi$ as a randomly initialised multi-layer perceptron (MLP) with $3$ hidden layers and a width of $128$, and instantiate the kernel learning algorithm in \cite{wang2022fast} using an MLP with $4$ hidden layers and a width of $256$. The MLPs are defined with swish activation. %
(We note that the MLP model in kernel learning is not guaranteed to be correctly specified since it needs to model the inverse of $\Phi$.) 
We set $\dim \bar x_{true}=1, \dim\bar x_{spurious}=3, \dim x=10$ and $\bar k$ to be an RBF kernel with bandwidth set to the input median. 
We vary $m\in \{100,200,400\}, n_{pret}\in\{5,10,20,40\}\times 100$ and $n_{test}\in \{5,10,20,40\}$. 
For kernel learning, the MLP is optimised using the AdamW optimiser \citep{loshchilov2018fixing}, with learning rate determined from $\{1,5,10,50\}\times 10^{-4}$ and number of iterations from $\{1,2,4\}\times 1000$ 
based on validation loss; other optimisation hyperparameters follow the default in Optax \citep{deepmind2020jax}. 
Given the learned kernel we compute \eqref{eq:gp-alg-spo} in closed form. 
We implement Alg.~\ref{alg:main} using $\Delta n=\max\{1, 0.05n\}$ and $N=12n$. 

For evaluation, we generate inputs as $\{x_{eval,i} := \Phi([\bar x_{true, i}; 0])\}$ where $\{\bar x_{true,i}\}$ denote a linearly spaced grid of 10 points from $-2.25$ to $2.25$. 
$\{x_{eval,i}\}$ determine an empirical $L_2$ (semi-)norm $\|\cdot\|$ for the regression function $g$; we validate our theoretical claims against this choice of $\|\cdot\|$. 
We also report the average coverage rate of the pointwise 90\% credible intervals for $\{g(x_{eval,i})\}$. 

\begin{figure}[htbp]
\centering 
\subfigure[$n_{test}=20, m=200$, varying $n_{pret}$]{
\includegraphics[width=0.9\linewidth]{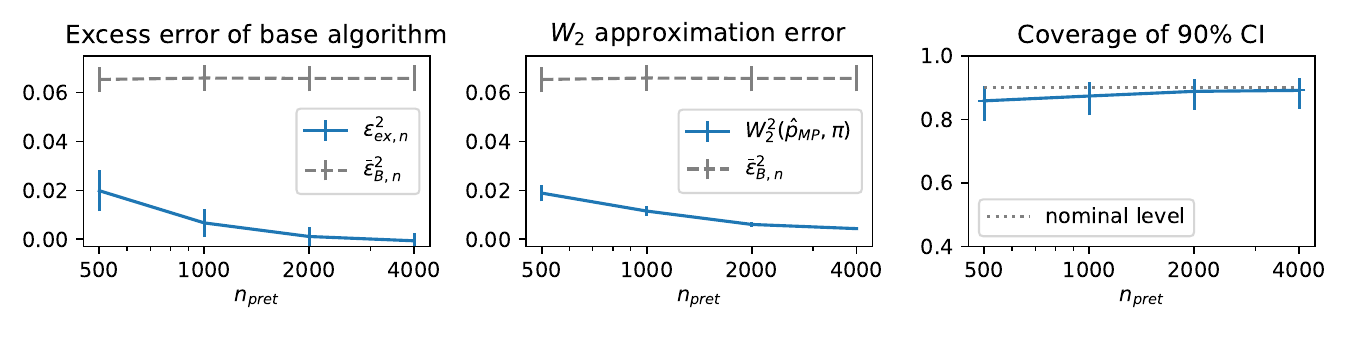}
}
\subfigure[$n_{pret}=2000, m=200$, varying $n_{test}$]{
\includegraphics[width=0.9\linewidth]{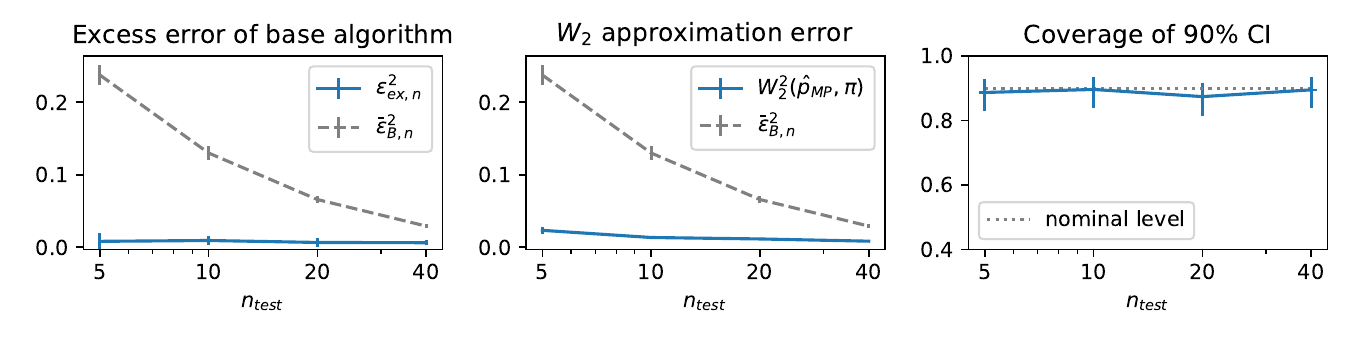}
}
\subfigure[$n_{pret}=2000, n_{test}=20$, varying $m$]{
\includegraphics[width=0.9\linewidth]{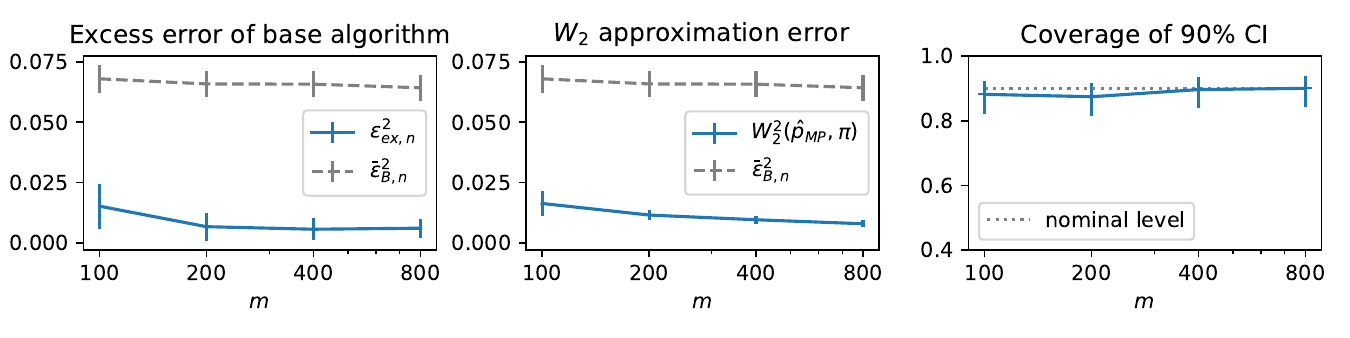}
}
\caption{Multi-task learning simulation: results with varying choices of $(m,n_{pret},n_{test})$. Plotted are the mean and 95\% confidence interval (CI) for each metric. CIs are computed on 160 replications using normal approximation (first two subplots) or the Wilson score (last subplot).
}\label{fig:mtl-sim-asymp}
\end{figure}

\paragraph{Results and discussion.} The results are summarised in \Cref{fig:mtl-sim-asymp}. We can see that  
as we increase the pretraining sample size ($n_{pret}$), the task diversity ($m$), or move closer to a few-shot scenario ($n_{test}$), 
the ratio $\ExcessError[n_{test}] / \BayesError[n_{test}]$ vanishes, indicating \Cref{asm:replaced-efficiency} becomes more applicable; 
and as predicted by \Cref{thm:param-alt} the Wasserstein distance between the MP and the Bayesian posterior becomes vanishing compared with the spread ($\BayesError[n_{test}]$) of the latter. 
In such cases the coverage rate of the MP credible intervals also matches their nominal level, in line with the discussion below \Cref{thm:param-alt}. 
These results validate the analysis in \S\ref{sec:theory} 
in a multi-task learning setting. %

\subsection{Hyperparameter Learning for Gaussian Processes}\label{app:exp-gp}

\paragraph{Setup details.} To implement \Cref{alg:main}, we sample $\hat x_{n+i}$ from a kernel density estimate and $\hat y_{n+i}\mid \hat x_{n+i}$ from the GP's %
marginal predictive distribution, and use $\Delta n=0.25n, N=4n$. %
In preliminary experiments we find that 
a larger choice of $N$ or a smaller choice of $\Delta n$ 
appears to lead to diminishing improvements for performance; thus we adopt this choice for simplicity. 
For all methods, 
we implement the base empirical Bayes algorithm with the L-BFGS-B optimiser \citep{zhu1997algorithm} using a step-size of $0.05$ and $1600$ iterations, and build an ensemble of $K=16$ predictors.  

The hyperparameter learning process has a high variation across randomly sampled training sets due to the small sample sizes. 
Therefore, we use Wilcoxon signed-rank tests to check for statistically significant improvement, and  
account for ties in computing the ranks for \Cref{tbl:gp-main}, by defining the rank of each method as the number of methods that significantly outperform it as determined by the Wilcoxon test.  

\paragraph{Full results and discussion.} Full results are shown in \Cref{tbl:gp-full}. As we can see, our method consistently improves upon the EB baseline and is competitive against the other ensemble approaches. 
Nonparametric bootstrap also demonstrates competitive performance with $n=75$, but generally underperforms the EB baseline when $n=300$. 
It is possible that the distribution of parameter estimates from bootstrap has a very high variation, which may be only beneficial when overfitting is severe. 
We note that 
the performance difference is often small compared to the standard deviation, %
but the improvement over baselines is consistent as evidenced by the Wilcoxon test.  

\begin{sidewaystable}\scriptsize
\caption{
Full results for the GP experiment: mean and standard deviation for all test metrics. Boldface indicates the best result ($p<0.05$ in a Wilcoxon signed-rank test).
}\label{tbl:gp-full}
\begin{tabular}[h]{ccccccccccccc} \toprule Dataset 	& \multicolumn{4}{c}{ RMSE }	& \multicolumn{4}{c}{ NLPD }	& \multicolumn{4}{c}{ CRPS }	\\
 \cmidrule(lr){2-5} \cmidrule(lr){6-9}\cmidrule(lr){10-13} 	& Emp.~Bayes	& Bootstrap	& Ensemble	& Proposed	& Emp.~Bayes	& Bootstrap	& Ensemble	& Proposed	& Emp.~Bayes	& Bootstrap	& Ensemble	& Proposed	\\ \midrule
\multicolumn{13}{l}{$n=75$} \\ \midrule
Boston	& $4.47$ {\tiny $ \pm0.93$}	& $\mathbf{4.39}$ {\tiny $ \pm0.79$}	& $4.53$ {\tiny $ \pm0.89$}	& $4.49$ {\tiny $ \pm0.86$}	& $3.28$ {\tiny $ \pm0.42$}	& $\mathbf{2.72}$ {\tiny $ \pm0.14$}	& $3.19$ {\tiny $ \pm0.38$}	& $3.17$ {\tiny $ \pm0.37$}	& $2.31$ {\tiny $ \pm0.38$}	& $\mathbf{2.16}$ {\tiny $ \pm0.27$}	& $2.30$ {\tiny $ \pm0.34$}	& $2.28$ {\tiny $ \pm0.33$}	\\
Concrete	& $8.16$ {\tiny $ \pm0.94$}	& $8.21$ {\tiny $ \pm0.79$}	& $8.16$ {\tiny $ \pm0.89$}	& $\mathbf{8.10}$ {\tiny $ \pm0.86$}	& $3.66$ {\tiny $ \pm14.40$}	& $\mathbf{3.47}$ {\tiny $ \pm0.08$}	& $3.55$ {\tiny $ \pm3.42$}	& $3.54$ {\tiny $ \pm1.69$}	& $4.38$ {\tiny $ \pm0.50$}	& $\mathbf{4.44}$ {\tiny $ \pm0.38$}	& $4.38$ {\tiny $ \pm0.49$}	& $\mathbf{4.31}$ {\tiny $ \pm0.47$}	\\
Energy	& $1.27$ {\tiny $ \pm0.28$}	& $1.47$ {\tiny $ \pm0.19$}	& $1.27$ {\tiny $ \pm0.27$}	& $\mathbf{1.27}$ {\tiny $ \pm0.25$}	& $1.26$ {\tiny $ \pm0.28$}	& $1.60$ {\tiny $ \pm0.14$}	& $\mathbf{1.25}$ {\tiny $ \pm0.25$}	& $\mathbf{1.24}$ {\tiny $ \pm0.24$}	& $0.55$ {\tiny $ \pm0.12$}	& $0.73$ {\tiny $ \pm0.09$}	& $0.55$ {\tiny $ \pm0.12$}	& $\mathbf{0.55}$ {\tiny $ \pm0.10$}	\\
Kin8nm	& $\mathbf{0.19}$ {\tiny $ \pm0.02$}	& $0.19$ {\tiny $ \pm0.01$}	& $\mathbf{0.19}$ {\tiny $ \pm0.02$}	& $\mathbf{0.19}$ {\tiny $ \pm0.02$}	& $\mathbf{-0.23}$ {\tiny $ \pm0.14$}	& $\mathbf{-0.23}$ {\tiny $ \pm0.06$}	& $\mathbf{-0.23}$ {\tiny $ \pm0.12$}	& $\mathbf{-0.22}$ {\tiny $ \pm0.13$}	& $\mathbf{0.11}$ {\tiny $ \pm0.01$}	& $0.11$ {\tiny $ \pm0.01$}	& $\mathbf{0.11}$ {\tiny $ \pm0.01$}	& $\mathbf{0.11}$ {\tiny $ \pm0.01$}	\\
Naval	& $0.01$ {\tiny $ \pm0.00$}	& $0.01$ {\tiny $ \pm0.00$}	& $0.00$ {\tiny $ \pm0.00$}	& $\mathbf{0.00}$ {\tiny $ \pm0.00$}	& $\mathbf{-5.05}$ {\tiny $ \pm0.12$}	& $-4.06$ {\tiny $ \pm0.12$}	& $-4.99$ {\tiny $ \pm0.11$}	& $\mathbf{-5.03}$ {\tiny $ \pm0.13$}	& $0.00$ {\tiny $ \pm0.00$}	& $0.00$ {\tiny $ \pm0.00$}	& $0.00$ {\tiny $ \pm0.00$}	& $\mathbf{0.00}$ {\tiny $ \pm0.00$}	\\
Power	& $4.54$ {\tiny $ \pm0.22$}	& $5.07$ {\tiny $ \pm0.42$}	& $4.54$ {\tiny $ \pm0.22$}	& $\mathbf{4.54}$ {\tiny $ \pm0.19$}	& $2.94$ {\tiny $ \pm0.05$}	& $3.10$ {\tiny $ \pm0.07$}	& $2.94$ {\tiny $ \pm0.05$}	& $\mathbf{2.94}$ {\tiny $ \pm0.04$}	& $2.50$ {\tiny $ \pm0.12$}	& $2.86$ {\tiny $ \pm0.22$}	& $2.50$ {\tiny $ \pm0.12$}	& $\mathbf{2.49}$ {\tiny $ \pm0.10$}	\\
Protein	& $6.03$ {\tiny $ \pm0.35$}	& $\mathbf{5.76}$ {\tiny $ \pm0.14$}	& $5.92$ {\tiny $ \pm0.32$}	& $5.92$ {\tiny $ \pm0.32$}	& $3.36$ {\tiny $ \pm0.35$}	& $\mathbf{3.17}$ {\tiny $ \pm0.05$}	& $3.22$ {\tiny $ \pm0.22$}	& $3.22$ {\tiny $ \pm0.22$}	& $3.50$ {\tiny $ \pm0.23$}	& $\mathbf{3.31}$ {\tiny $ \pm0.09$}	& $3.38$ {\tiny $ \pm0.21$}	& $3.37$ {\tiny $ \pm0.21$}	\\
Winered	& $0.76$ {\tiny $ \pm0.04$}	& $\mathbf{0.71}$ {\tiny $ \pm0.03$}	& $0.75$ {\tiny $ \pm0.04$}	& $0.74$ {\tiny $ \pm0.04$}	& $1.30$ {\tiny $ \pm2.25$}	& $\mathbf{1.08}$ {\tiny $ \pm0.07$}	& $1.19$ {\tiny $ \pm0.27$}	& $1.18$ {\tiny $ \pm0.25$}	& $0.43$ {\tiny $ \pm0.03$}	& $\mathbf{0.39}$ {\tiny $ \pm0.02$}	& $0.42$ {\tiny $ \pm0.03$}	& $0.42$ {\tiny $ \pm0.03$}	\\
Winewhite	& $0.87$ {\tiny $ \pm0.04$}	& $\mathbf{0.81}$ {\tiny $ \pm0.03$}	& $0.85$ {\tiny $ \pm0.04$}	& $0.84$ {\tiny $ \pm0.05$}	& $1.49$ {\tiny $ \pm0.28$}	& $\mathbf{1.22}$ {\tiny $ \pm0.06$}	& $1.40$ {\tiny $ \pm0.26$}	& $1.36$ {\tiny $ \pm0.28$}	& $0.50$ {\tiny $ \pm0.03$}	& $\mathbf{0.45}$ {\tiny $ \pm0.02$}	& $0.48$ {\tiny $ \pm0.03$}	& $0.48$ {\tiny $ \pm0.03$}	\\
\midrule \multicolumn{13}{l}{$n=300$} \\ \midrule
Boston	& $3.22$ {\tiny $ \pm0.52$}	& $\mathbf{3.19}$ {\tiny $ \pm0.43$}	& $3.19$ {\tiny $ \pm0.50$}	& $\mathbf{3.17}$ {\tiny $ \pm0.48$}	& $2.55$ {\tiny $ \pm0.16$}	& $\mathbf{2.42}$ {\tiny $ \pm0.09$}	& $2.54$ {\tiny $ \pm0.16$}	& $2.52$ {\tiny $ \pm0.14$}	& $1.61$ {\tiny $ \pm0.18$}	& $1.62$ {\tiny $ \pm0.13$}	& $1.60$ {\tiny $ \pm0.17$}	& $\mathbf{1.58}$ {\tiny $ \pm0.16$}	\\
Concrete	& $6.51$ {\tiny $ \pm0.41$}	& $6.77$ {\tiny $ \pm0.61$}	& $6.51$ {\tiny $ \pm0.41$}	& $\mathbf{6.47}$ {\tiny $ \pm0.42$}	& $3.24$ {\tiny $ \pm0.13$}	& $\mathbf{3.22}$ {\tiny $ \pm11.24$}	& $3.24$ {\tiny $ \pm0.13$}	& $\mathbf{3.22}$ {\tiny $ \pm0.12$}	& $3.42$ {\tiny $ \pm0.21$}	& $3.53$ {\tiny $ \pm0.29$}	& $3.42$ {\tiny $ \pm0.21$}	& $\mathbf{3.40}$ {\tiny $ \pm0.21$}	\\
Energy	& $0.60$ {\tiny $ \pm0.14$}	& $0.69$ {\tiny $ \pm0.13$}	& $\mathbf{0.58}$ {\tiny $ \pm0.15$}	& $\mathbf{0.57}$ {\tiny $ \pm0.15$}	& $0.77$ {\tiny $ \pm0.12$}	& $0.90$ {\tiny $ \pm0.07$}	& $\mathbf{0.71}$ {\tiny $ \pm0.10$}	& $\mathbf{0.70}$ {\tiny $ \pm0.11$}	& $0.29$ {\tiny $ \pm0.03$}	& $0.34$ {\tiny $ \pm0.03$}	& $0.28$ {\tiny $ \pm0.04$}	& $\mathbf{0.28}$ {\tiny $ \pm0.04$}	\\
Kin8nm	& $\mathbf{0.12}$ {\tiny $ \pm0.00$}	& $0.13$ {\tiny $ \pm0.00$}	& $\mathbf{0.12}$ {\tiny $ \pm0.00$}	& $\mathbf{0.12}$ {\tiny $ \pm0.00$}	& $\mathbf{-0.69}$ {\tiny $ \pm0.03$}	& $-0.62$ {\tiny $ \pm0.02$}	& $\mathbf{-0.69}$ {\tiny $ \pm0.03$}	& $\mathbf{-0.69}$ {\tiny $ \pm0.03$}	& $\mathbf{0.07}$ {\tiny $ \pm0.00$}	& $0.07$ {\tiny $ \pm0.00$}	& $\mathbf{0.07}$ {\tiny $ \pm0.00$}	& $\mathbf{0.07}$ {\tiny $ \pm0.00$}	\\
Naval	& $0.00$ {\tiny $ \pm0.00$}	& $0.00$ {\tiny $ \pm0.00$}	& $\mathbf{0.00}$ {\tiny $ \pm0.00$}	& $\mathbf{0.00}$ {\tiny $ \pm0.00$}	& $-7.00$ {\tiny $ \pm0.04$}	& $-6.49$ {\tiny $ \pm0.04$}	& $-7.01$ {\tiny $ \pm0.04$}	& $\mathbf{-7.01}$ {\tiny $ \pm0.04$}	& $\mathbf{0.00}$ {\tiny $ \pm0.00$}	& $0.00$ {\tiny $ \pm0.00$}	& $\mathbf{0.00}$ {\tiny $ \pm0.00$}	& $\mathbf{0.00}$ {\tiny $ \pm0.00$}	\\
Power	& $4.31$ {\tiny $ \pm0.10$}	& $4.73$ {\tiny $ \pm0.17$}	& $4.31$ {\tiny $ \pm0.10$}	& $\mathbf{4.30}$ {\tiny $ \pm0.10$}	& $2.88$ {\tiny $ \pm0.03$}	& $3.04$ {\tiny $ \pm0.04$}	& $2.88$ {\tiny $ \pm0.03$}	& $\mathbf{2.88}$ {\tiny $ \pm0.03$}	& $2.36$ {\tiny $ \pm0.04$}	& $2.68$ {\tiny $ \pm0.09$}	& $2.36$ {\tiny $ \pm0.04$}	& $\mathbf{2.36}$ {\tiny $ \pm0.04$}	\\
Protein	& $5.18$ {\tiny $ \pm0.16$}	& $5.36$ {\tiny $ \pm0.10$}	& $5.15$ {\tiny $ \pm0.14$}	& $\mathbf{5.14}$ {\tiny $ \pm0.14$}	& $3.07$ {\tiny $ \pm0.03$}	& $\mathbf{3.07}$ {\tiny $ \pm0.02$}	& $3.06$ {\tiny $ \pm0.04$}	& $\mathbf{3.06}$ {\tiny $ \pm0.04$}	& $2.93$ {\tiny $ \pm0.08$}	& $3.02$ {\tiny $ \pm0.07$}	& $2.92$ {\tiny $ \pm0.07$}	& $\mathbf{2.91}$ {\tiny $ \pm0.07$}	\\
Winered	& $0.71$ {\tiny $ \pm0.04$}	& $\mathbf{0.67}$ {\tiny $ \pm0.03$}	& $0.70$ {\tiny $ \pm0.05$}	& $\mathbf{0.69}$ {\tiny $ \pm0.05$}	& $\mathbf{0.87}$ {\tiny $ \pm0.15$}	& $0.98$ {\tiny $ \pm0.05$}	& $0.94$ {\tiny $ \pm0.12$}	& $\mathbf{0.93}$ {\tiny $ \pm0.11$}	& $0.38$ {\tiny $ \pm0.02$}	& $\mathbf{0.37}$ {\tiny $ \pm0.02$}	& $0.37$ {\tiny $ \pm0.03$}	& $\mathbf{0.37}$ {\tiny $ \pm0.03$}	\\
Winewhite	& $0.79$ {\tiny $ \pm0.02$}	& $\mathbf{0.76}$ {\tiny $ \pm0.02$}	& $0.78$ {\tiny $ \pm0.03$}	& $0.78$ {\tiny $ \pm0.03$}	& $1.13$ {\tiny $ \pm0.04$}	& $1.12$ {\tiny $ \pm0.03$}	& $1.10$ {\tiny $ \pm0.04$}	& $\mathbf{1.09}$ {\tiny $ \pm0.04$}	& $0.44$ {\tiny $ \pm0.01$}	& $\mathbf{0.42}$ {\tiny $ \pm0.01$}	& $0.43$ {\tiny $ \pm0.01$}	& $0.43$ {\tiny $ \pm0.01$}	\\
\bottomrule \end{tabular}

\end{sidewaystable}

\subsection{Classification with Boosting Tree and AutoML Algorithms}\label{app:exp-tree}

\paragraph{Deferred setup details.} We evaluate on the 30 datasets from the OpenML CC18 benchmark \citep{bischl2017openml} with $n\le 2000, \dim x\le 100, \dim y\le 10$. 
In all experiments we adopt a 60-20-20 split for train/validation/test, and determine the hyperparameters for the base algorithm using the log loss on validation set. 
We implement our method by refitting a predictor from scratch at each iteration; in other words, in \Cref{alg:main} we define both $\cA_0(D_{j+1};\hat\theta_j)$ and $\cA_0(D_n)$ as the predictor resulted by applying the base algorithm to the respective dataset. 

For the GDBT algorithm, we adopt the implementation from XGBoost and 
conduct search for the following hyperparameters: tree depth $D\in\{4,5,6,7\}$, number of boosting iterations $L\in\{50, 100, 200\}$ and learning rate $\eta\in\{10, 30, 100\}/L$. 
We also conduct early stopping using the validation set with a tolerance of 10 rounds. 
For the instantiations of our method and bagging, we build an ensemble of 50 predictors; for our method, we determine $\Delta n\in\{0.125n, 0.25n, n\}, N\in\{n, 3n\}$ based on the same validation loss. 

We use the default implementation in AutoGluon (\verb!TabularPredictor(eval_metric="log_loss")! \verb!.fit!), 
which determines the hyperparameters for the individual models based on pre-defined rules and 
uses the validation set to estimate a linear stacking model following \citet{caruana2004ensemble}. 
As the AutoML algorithm is more computation intensive, we build an ensemble of 20 predictors for our method and bagging, and set $\Delta n=N=n$ 
for our method. 

\begin{figure}
\centering 
\includegraphics[width=0.47\linewidth]{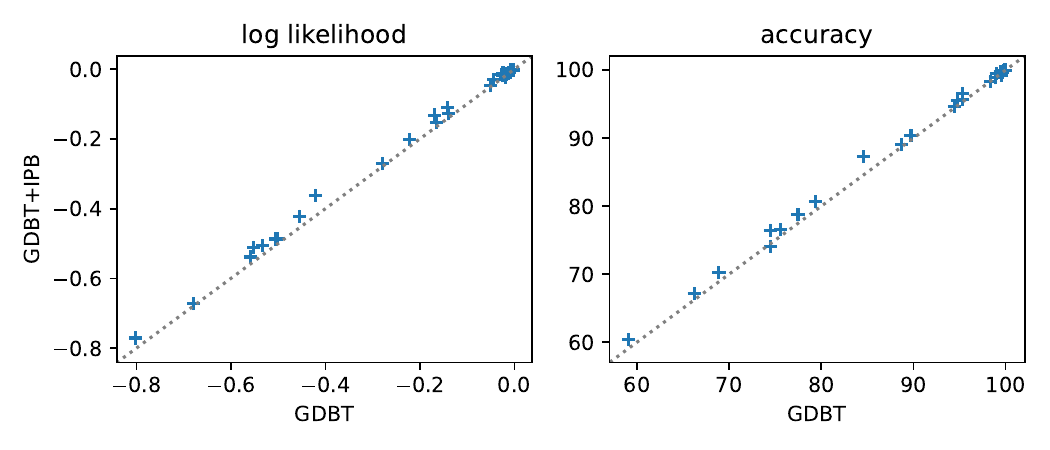}
\includegraphics[width=0.47\linewidth]{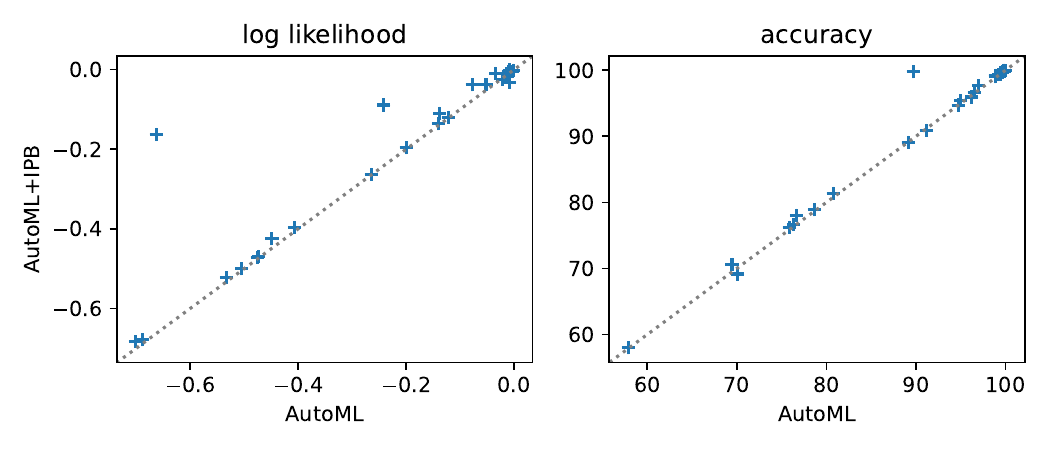}
\caption{Classification experiment: scatter plot of the test metrics (for each dataset averaged over 10 random splits; higher is better) for the base algorithm vs the proposed method. 
}\label{fig:cc18-cmp}
\end{figure}

\paragraph{Additional results.} Table~\ref{tbl:cc18-nll}--\ref{tbl:cc18-acc} report the full test metrics on all 30 datasets; for each baseline method we further conducts a Wilcoxon test to compare its distribution of loss metrics (for each dataset, averaged over 10 random splits) against that of the proposed method, and report the p-value in the respective table. As we can see, except for the test accuracy of the AutoML+bagging baseline, our method always leads to a statistically significant improvement ($p<0.05$). 

We note that the AutoGluon library recommends a more sophisticated multi-level algorithm (corresponding to \verb!.fit(presets="best_quality")!) for the best predictive performance. 
We evaluated that algorithm under identical conditions, and found it to perform better than our chosen base algorithm but worse than bagging and our method applied to the latter (average accuracy 91.1\%, NLL 0.198 in the setting of \Cref{tbl:tree-main}).  
As the algorithm also has a significantly higher computational cost, we refrain from testing our method with it, although we expect a similar improvement in performance if our method were applied. 

\Cref{fig:tree-adult-fea} visualises the uncertainty estimates for 
the information gain-based feature importance scores,  
obtained using our method on the UCI adult dataset. 
As we can see, the correlation structure of the approximate MP is informative about feature dependencies; for example, the strong negative correlation between ``marital status'' and ``relationship'' indicates that these two features are interchangeable for prediction. 

\begin{table}\centering\scriptsize
    \caption{
        Classification experiment: average negative log likelihood across random train/test splits in each dataset. 
    }\label{tbl:cc18-nll}
    \begin{tabular}[h]{ccccccc} 
        \toprule 
        \multirow{2}{*}[-0.2em]{Dataset} & \multicolumn{3}{c}{GBDT} & \multicolumn{3}{c}{AutoML} \\ 
        \cmidrule(lr){2-4}  \cmidrule(lr){5-7}
            & (Base) & + BS & + IPB & (Base) & + BS & + IPB 
            \\ \midrule
banknote-authentication	& $.002${\tiny $\pm .00$}	& $.003${\tiny $\pm .00$}	& $.003${\tiny $\pm .00$}	& $.009${\tiny $\pm .01$}	& $.002${\tiny $\pm .00$}	& $.001${\tiny $\pm .00$}	\\
blood-transfusion-service-center	& $.504${\tiny $\pm .03$}	& $.486${\tiny $\pm .02$}	& $.487${\tiny $\pm .02$}	& $.473${\tiny $\pm .02$}	& $.470${\tiny $\pm .02$}	& $.469${\tiny $\pm .03$}	\\
breast-w	& $.139${\tiny $\pm .02$}	& $.129${\tiny $\pm .02$}	& $.128${\tiny $\pm .02$}	& $.138${\tiny $\pm .03$}	& $.103${\tiny $\pm .01$}	& $.110${\tiny $\pm .02$}	\\
mfeat-karhunen	& $.012${\tiny $\pm .00$}	& $.022${\tiny $\pm .00$}	& $.012${\tiny $\pm .00$}	& $.008${\tiny $\pm .01$}	& $.086${\tiny $\pm .04$}	& $.031${\tiny $\pm .03$}	\\
mfeat-morphological	& $.018${\tiny $\pm .01$}	& $.021${\tiny $\pm .00$}	& $.014${\tiny $\pm .00$}	& $.014${\tiny $\pm .01$}	& $.030${\tiny $\pm .02$}	& $.009${\tiny $\pm .00$}	\\
eucalyptus	& $.802${\tiny $\pm .04$}	& $.786${\tiny $\pm .03$}	& $.771${\tiny $\pm .03$}	& $.689${\tiny $\pm .05$}	& $.704${\tiny $\pm .04$}	& $.679${\tiny $\pm .05$}	\\
mfeat-zernike	& $.017${\tiny $\pm .01$}	& $.021${\tiny $\pm .00$}	& $.012${\tiny $\pm .00$}	& $.241${\tiny $\pm .43$}	& $.059${\tiny $\pm .03$}	& $.089${\tiny $\pm .13$}	\\
cmc	& $.028${\tiny $\pm .01$}	& $.018${\tiny $\pm .00$}	& $.016${\tiny $\pm .00$}	& $.019${\tiny $\pm .01$}	& $.022${\tiny $\pm .01$}	& $.020${\tiny $\pm .01$}	\\
credit-approval	& $.169${\tiny $\pm .03$}	& $.159${\tiny $\pm .02$}	& $.132${\tiny $\pm .02$}	& $.122${\tiny $\pm .03$}	& $.125${\tiny $\pm .02$}	& $.120${\tiny $\pm .03$}	\\
vowel	& $.533${\tiny $\pm .02$}	& $.506${\tiny $\pm .02$}	& $.505${\tiny $\pm .02$}	& $.504${\tiny $\pm .03$}	& $.501${\tiny $\pm .02$}	& $.500${\tiny $\pm .02$}	\\
credit-g	& $.011${\tiny $\pm .00$}	& $.018${\tiny $\pm .00$}	& $.010${\tiny $\pm .00$}	& $.003${\tiny $\pm .00$}	& $.004${\tiny $\pm .00$}	& $.005${\tiny $\pm .00$}	\\
analcatdata\_authorship	& $.044${\tiny $\pm .03$}	& $.045${\tiny $\pm .02$}	& $.030${\tiny $\pm .01$}	& $.052${\tiny $\pm .04$}	& $.029${\tiny $\pm .01$}	& $.038${\tiny $\pm .02$}	\\
balance-scale	& $.421${\tiny $\pm .06$}	& $.362${\tiny $\pm .03$}	& $.361${\tiny $\pm .03$}	& $.663${\tiny $\pm .64$}	& $.137${\tiny $\pm .04$}	& $.163${\tiny $\pm .05$}	\\
analcatdata\_dmft	& $.501${\tiny $\pm .02$}	& $.490${\tiny $\pm .01$}	& $.487${\tiny $\pm .01$}	& $.476${\tiny $\pm .02$}	& $.472${\tiny $\pm .01$}	& $.471${\tiny $\pm .02$}	\\
diabetes	& $.222${\tiny $\pm .02$}	& $.205${\tiny $\pm .01$}	& $.201${\tiny $\pm .01$}	& $.200${\tiny $\pm .01$}	& $.200${\tiny $\pm .01$}	& $.196${\tiny $\pm .01$}	\\
pc4	& $.279${\tiny $\pm .02$}	& $.270${\tiny $\pm .02$}	& $.270${\tiny $\pm .02$}	& $.264${\tiny $\pm .02$}	& $.264${\tiny $\pm .02$}	& $.263${\tiny $\pm .02$}	\\
pc3	& $.019${\tiny $\pm .01$}	& $.022${\tiny $\pm .01$}	& $.024${\tiny $\pm .01$}	& $.078${\tiny $\pm .10$}	& $.026${\tiny $\pm .01$}	& $.038${\tiny $\pm .02$}	\\
kc2	& $.016${\tiny $\pm .01$}	& $.014${\tiny $\pm .01$}	& $.014${\tiny $\pm .01$}	& $.021${\tiny $\pm .02$}	& $.021${\tiny $\pm .01$}	& $.024${\tiny $\pm .02$}	\\
pc1	& $.009${\tiny $\pm .01$}	& $.003${\tiny $\pm .00$}	& $.003${\tiny $\pm .00$}	& $.001${\tiny $\pm .00$}	& $.001${\tiny $\pm .00$}	& $.001${\tiny $\pm .00$}	\\
tic-tac-toe	& $.551${\tiny $\pm .03$}	& $.536${\tiny $\pm .02$}	& $.510${\tiny $\pm .02$}	& $.448${\tiny $\pm .02$}	& $.450${\tiny $\pm .02$}	& $.424${\tiny $\pm .02$}	\\
vehicle	& $.141${\tiny $\pm .03$}	& $.117${\tiny $\pm .03$}	& $.110${\tiny $\pm .03$}	& $.119${\tiny $\pm .05$}	& $.094${\tiny $\pm .03$}	& $.095${\tiny $\pm .04$}	\\
wdbc	& $.025${\tiny $\pm .02$}	& $.015${\tiny $\pm .01$}	& $.011${\tiny $\pm .00$}	& $.034${\tiny $\pm .03$}	& $.027${\tiny $\pm .02$}	& $.009${\tiny $\pm .01$}	\\
qsar-biodeg	& $.558${\tiny $\pm .02$}	& $.543${\tiny $\pm .01$}	& $.538${\tiny $\pm .01$}	& $.533${\tiny $\pm .02$}	& $.524${\tiny $\pm .01$}	& $.523${\tiny $\pm .01$}	\\
dresses-sales	& $.678${\tiny $\pm .01$}	& $.672${\tiny $\pm .01$}	& $.672${\tiny $\pm .01$}	& $.701${\tiny $\pm .02$}	& $.683${\tiny $\pm .02$}	& $.683${\tiny $\pm .02$}	\\
mfeat-fourier	& $.025${\tiny $\pm .01$}	& $.027${\tiny $\pm .00$}	& $.019${\tiny $\pm .00$}	& $.010${\tiny $\pm .01$}	& $.031${\tiny $\pm .02$}	& $.010${\tiny $\pm .00$}	\\
MiceProtein	& $.023${\tiny $\pm .01$}	& $.025${\tiny $\pm .00$}	& $.011${\tiny $\pm .00$}	& $.008${\tiny $\pm .01$}	& $.020${\tiny $\pm .01$}	& $.002${\tiny $\pm .00$}	\\
steel-plates-fault	& $.021${\tiny $\pm .01$}	& $.024${\tiny $\pm .01$}	& $.016${\tiny $\pm .00$}	& $.010${\tiny $\pm .01$}	& $.020${\tiny $\pm .01$}	& $.005${\tiny $\pm .00$}	\\
climate-model-simulation-crashes	& $.165${\tiny $\pm .04$}	& $.158${\tiny $\pm .03$}	& $.152${\tiny $\pm .03$}	& $.140${\tiny $\pm .03$}	& $.139${\tiny $\pm .02$}	& $.136${\tiny $\pm .03$}	\\
car	& $.050${\tiny $\pm .01$}	& $.072${\tiny $\pm .01$}	& $.048${\tiny $\pm .01$}	& $.028${\tiny $\pm .01$}	& $.047${\tiny $\pm .01$}	& $.025${\tiny $\pm .01$}	\\
cylinder-bands	& $.454${\tiny $\pm .05$}	& $.429${\tiny $\pm .02$}	& $.422${\tiny $\pm .03$}	& $.407${\tiny $\pm .05$}	& $.407${\tiny $\pm .03$}	& $.396${\tiny $\pm .04$}	\\
\midrule 
Wilcoxon p-value vs IPB & 3.1e-08	& 6e-07	& -	& 2.2e-06	& 0.029	& -	\\
 \bottomrule
        \end{tabular}

\end{table}

\begin{table}\centering\scriptsize
    \caption{Classification experiment: average test accuracy across random train/test splits in each dataset.}\label{tbl:cc18-acc}
    \begin{tabular}[h]{ccccccc} 
        \toprule 
        \multirow{2}{*}[-0.2em]{Dataset} & \multicolumn{3}{c}{GBDT} & \multicolumn{3}{c}{AutoML} \\ 
        \cmidrule(lr){2-4}  \cmidrule(lr){5-7}
            & (Base) & + BS & + IPB & (Base) & + BS & + IPB 
            \\ \midrule
banknote-authentication	& $99.9${\tiny $\pm 0.1$}	& $99.9${\tiny $\pm 0.1$}	& $100.0${\tiny $\pm 0.0$}	& $99.9${\tiny $\pm 0.1$}	& $100.0${\tiny $\pm 0.1$}	& $100.0${\tiny $\pm 0.0$}	\\
blood-transfusion-service-center	& $77.5${\tiny $\pm 2.1$}	& $79.0${\tiny $\pm 1.6$}	& $78.7${\tiny $\pm 1.7$}	& $78.7${\tiny $\pm 1.1$}	& $78.7${\tiny $\pm 1.2$}	& $78.9${\tiny $\pm 1.6$}	\\
breast-w	& $95.4${\tiny $\pm 0.6$}	& $95.9${\tiny $\pm 0.9$}	& $95.7${\tiny $\pm 0.6$}	& $96.5${\tiny $\pm 0.7$}	& $96.6${\tiny $\pm 0.4$}	& $96.6${\tiny $\pm 0.5$}	\\
mfeat-karhunen	& $99.9${\tiny $\pm 0.1$}	& $99.8${\tiny $\pm 0.1$}	& $99.9${\tiny $\pm 0.1$}	& $99.8${\tiny $\pm 0.1$}	& $99.9${\tiny $\pm 0.1$}	& $100.0${\tiny $\pm 0.1$}	\\
mfeat-morphological	& $99.4${\tiny $\pm 0.4$}	& $99.6${\tiny $\pm 0.2$}	& $99.8${\tiny $\pm 0.2$}	& $99.6${\tiny $\pm 0.2$}	& $99.7${\tiny $\pm 0.2$}	& $99.8${\tiny $\pm 0.2$}	\\
eucalyptus	& $66.2${\tiny $\pm 2.2$}	& $65.9${\tiny $\pm 2.0$}	& $67.2${\tiny $\pm 2.0$}	& $69.5${\tiny $\pm 2.9$}	& $69.1${\tiny $\pm 2.9$}	& $70.6${\tiny $\pm 2.4$}	\\
mfeat-zernike	& $99.7${\tiny $\pm 0.2$}	& $99.7${\tiny $\pm 0.2$}	& $99.8${\tiny $\pm 0.2$}	& $89.7${\tiny $\pm 18.5$}	& $99.9${\tiny $\pm 0.1$}	& $99.8${\tiny $\pm 0.1$}	\\
cmc	& $99.0${\tiny $\pm 0.3$}	& $99.5${\tiny $\pm 0.1$}	& $99.5${\tiny $\pm 0.2$}	& $99.5${\tiny $\pm 0.2$}	& $99.6${\tiny $\pm 0.2$}	& $99.4${\tiny $\pm 0.2$}	\\
credit-approval	& $94.8${\tiny $\pm 0.8$}	& $95.0${\tiny $\pm 0.4$}	& $95.6${\tiny $\pm 0.8$}	& $96.2${\tiny $\pm 1.1$}	& $95.7${\tiny $\pm 0.7$}	& $95.9${\tiny $\pm 1.1$}	\\
vowel	& $74.5${\tiny $\pm 2.1$}	& $75.5${\tiny $\pm 1.7$}	& $76.5${\tiny $\pm 1.9$}	& $75.8${\tiny $\pm 1.8$}	& $75.0${\tiny $\pm 1.8$}	& $76.2${\tiny $\pm 2.0$}	\\
credit-g	& $99.8${\tiny $\pm 0.2$}	& $99.8${\tiny $\pm 0.2$}	& $99.9${\tiny $\pm 0.1$}	& $99.9${\tiny $\pm 0.1$}	& $99.9${\tiny $\pm 0.1$}	& $99.9${\tiny $\pm 0.1$}	\\
analcatdata\_authorship	& $98.9${\tiny $\pm 0.6$}	& $98.7${\tiny $\pm 0.7$}	& $98.9${\tiny $\pm 0.6$}	& $98.9${\tiny $\pm 0.6$}	& $99.1${\tiny $\pm 0.4$}	& $99.0${\tiny $\pm 0.5$}	\\
balance-scale	& $84.6${\tiny $\pm 1.9$}	& $89.2${\tiny $\pm 1.9$}	& $87.3${\tiny $\pm 1.7$}	& $95.0${\tiny $\pm 1.2$}	& $94.8${\tiny $\pm 0.9$}	& $95.4${\tiny $\pm 0.9$}	\\
analcatdata\_dmft	& $75.6${\tiny $\pm 1.9$}	& $75.6${\tiny $\pm 2.3$}	& $76.6${\tiny $\pm 2.5$}	& $76.4${\tiny $\pm 1.5$}	& $76.6${\tiny $\pm 2.1$}	& $76.6${\tiny $\pm 1.6$}	\\
diabetes	& $89.7${\tiny $\pm 1.0$}	& $90.4${\tiny $\pm 0.9$}	& $90.4${\tiny $\pm 1.0$}	& $91.1${\tiny $\pm 1.0$}	& $90.8${\tiny $\pm 1.0$}	& $90.9${\tiny $\pm 0.9$}	\\
pc4	& $88.7${\tiny $\pm 1.2$}	& $89.2${\tiny $\pm 1.1$}	& $89.0${\tiny $\pm 1.2$}	& $89.1${\tiny $\pm 1.1$}	& $89.1${\tiny $\pm 1.0$}	& $89.1${\tiny $\pm 1.0$}	\\
pc3	& $99.5${\tiny $\pm 0.3$}	& $99.6${\tiny $\pm 0.3$}	& $99.2${\tiny $\pm 0.6$}	& $99.2${\tiny $\pm 0.6$}	& $99.4${\tiny $\pm 0.5$}	& $99.3${\tiny $\pm 0.5$}	\\
kc2	& $99.6${\tiny $\pm 0.2$}	& $99.6${\tiny $\pm 0.3$}	& $99.6${\tiny $\pm 0.2$}	& $99.6${\tiny $\pm 0.3$}	& $99.7${\tiny $\pm 0.2$}	& $99.6${\tiny $\pm 0.2$}	\\
pc1	& $99.9${\tiny $\pm 0.2$}	& $99.9${\tiny $\pm 0.2$}	& $99.9${\tiny $\pm 0.2$}	& $99.9${\tiny $\pm 0.1$}	& $100.0${\tiny $\pm 0.0$}	& $99.9${\tiny $\pm 0.1$}	\\
tic-tac-toe	& $74.5${\tiny $\pm 1.7$}	& $74.0${\tiny $\pm 1.5$}	& $74.1${\tiny $\pm 1.3$}	& $76.6${\tiny $\pm 1.3$}	& $77.1${\tiny $\pm 0.9$}	& $78.1${\tiny $\pm 1.2$}	\\
vehicle	& $95.4${\tiny $\pm 1.3$}	& $95.7${\tiny $\pm 1.2$}	& $96.6${\tiny $\pm 1.1$}	& $97.0${\tiny $\pm 0.7$}	& $97.3${\tiny $\pm 0.6$}	& $97.6${\tiny $\pm 0.5$}	\\
wdbc	& $99.5${\tiny $\pm 0.3$}	& $99.5${\tiny $\pm 0.3$}	& $99.7${\tiny $\pm 0.2$}	& $99.4${\tiny $\pm 0.3$}	& $99.8${\tiny $\pm 0.2$}	& $99.7${\tiny $\pm 0.2$}	\\
qsar-biodeg	& $68.9${\tiny $\pm 1.4$}	& $69.3${\tiny $\pm 1.6$}	& $70.3${\tiny $\pm 1.5$}	& $70.1${\tiny $\pm 1.5$}	& $69.8${\tiny $\pm 1.5$}	& $69.1${\tiny $\pm 1.6$}	\\
dresses-sales	& $59.1${\tiny $\pm 1.7$}	& $60.4${\tiny $\pm 2.0$}	& $60.4${\tiny $\pm 1.9$}	& $57.9${\tiny $\pm 2.7$}	& $57.8${\tiny $\pm 2.6$}	& $58.0${\tiny $\pm 2.9$}	\\
mfeat-fourier	& $99.5${\tiny $\pm 0.2$}	& $99.6${\tiny $\pm 0.2$}	& $99.7${\tiny $\pm 0.2$}	& $99.6${\tiny $\pm 0.1$}	& $99.8${\tiny $\pm 0.1$}	& $99.7${\tiny $\pm 0.1$}	\\
MiceProtein	& $99.7${\tiny $\pm 0.2$}	& $99.5${\tiny $\pm 0.3$}	& $99.9${\tiny $\pm 0.1$}	& $99.8${\tiny $\pm 0.1$}	& $100.0${\tiny $\pm 0.0$}	& $100.0${\tiny $\pm 0.0$}	\\
steel-plates-fault	& $99.5${\tiny $\pm 0.2$}	& $99.7${\tiny $\pm 0.2$}	& $99.8${\tiny $\pm 0.1$}	& $99.8${\tiny $\pm 0.1$}	& $99.9${\tiny $\pm 0.1$}	& $99.9${\tiny $\pm 0.1$}	\\
climate-model-simulation-crashes	& $94.4${\tiny $\pm 1.5$}	& $94.3${\tiny $\pm 1.4$}	& $94.6${\tiny $\pm 1.3$}	& $94.7${\tiny $\pm 1.4$}	& $94.4${\tiny $\pm 1.6$}	& $94.7${\tiny $\pm 1.4$}	\\
car	& $98.4${\tiny $\pm 0.4$}	& $97.6${\tiny $\pm 0.4$}	& $98.4${\tiny $\pm 0.3$}	& $98.8${\tiny $\pm 0.5$}	& $98.3${\tiny $\pm 0.5$}	& $99.2${\tiny $\pm 0.4$}	\\
cylinder-bands	& $79.4${\tiny $\pm 2.3$}	& $79.7${\tiny $\pm 2.4$}	& $80.6${\tiny $\pm 1.7$}	& $80.7${\tiny $\pm 1.6$}	& $81.1${\tiny $\pm 2.0$}	& $81.4${\tiny $\pm 1.4$}	\\
\midrule 
Wilcoxon p-value vs IPB & 5e-05	& 0.0047	& -	& 0.0011	& 0.056	& -	\\
 \bottomrule
        \end{tabular}

\end{table}

\begin{figure}[htb]
    \centering 
    \includegraphics[width=0.85\linewidth]{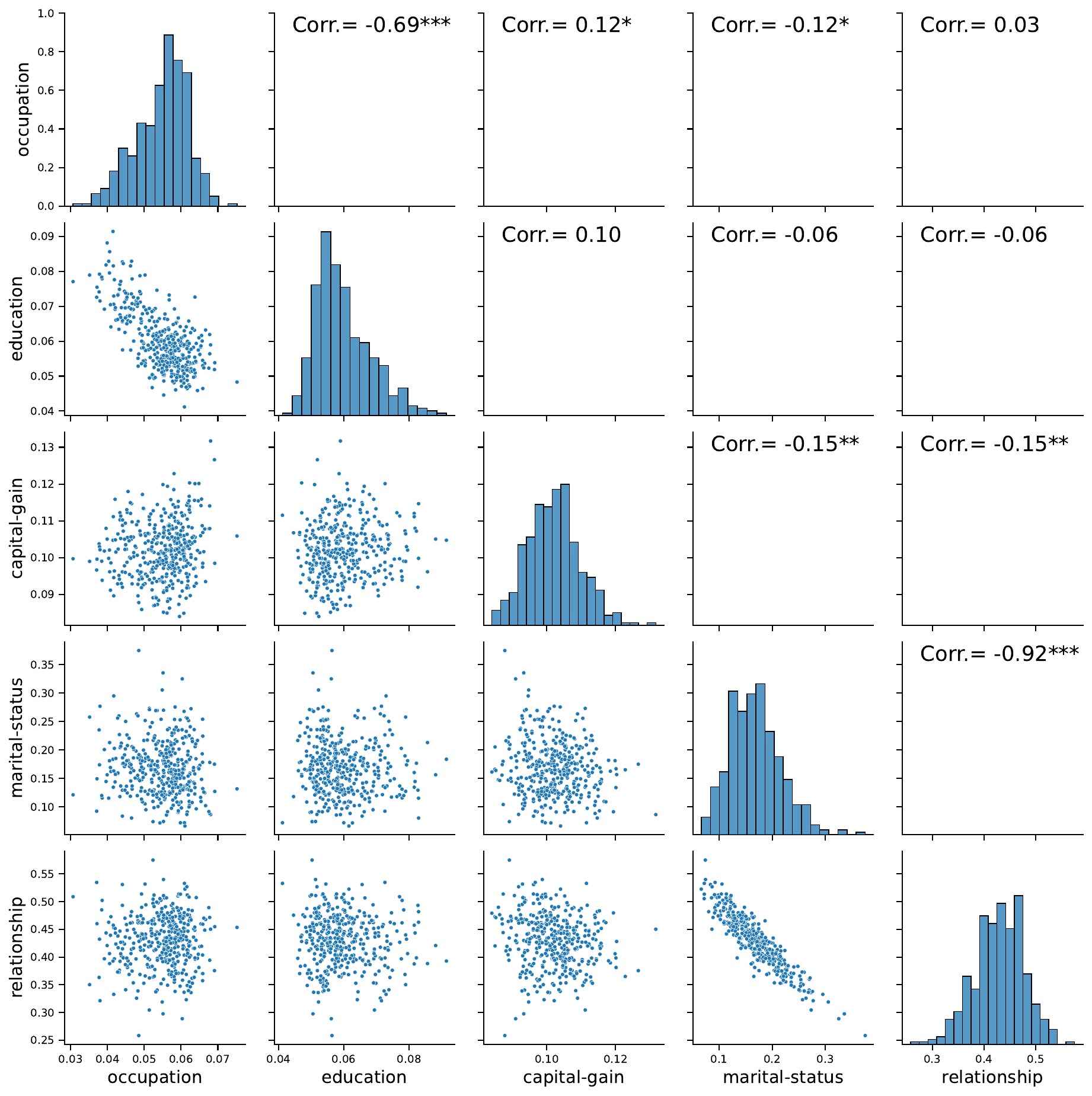}
    \caption{Classification experiment: approximate MP for the GDBT feature importance scores and their pairwise correlations. Plotted are the top 5 features in the UCI adult dataset. %
    }\label{fig:tree-adult-fea}
\end{figure}

\begin{table}[htb]
\setlength{\tabcolsep}{2pt}
\centering\scriptsize
\caption{
Interventional density estimation: full results in the setting of Table~\ref{tbl:dj-synth-main}. Reported is the estimate and $95\%$ CI for the $100\times \mathrm{MMD}^2$ metric across 30 trials. 
Boldface indicates the best result ($p<0.05$ in a Wilcoxon signed-rank test).
}\label{tbl:dj-synth-mmd-all}
\begin{tabular}[h]{ccccccccc} \toprule Method	&  chain-na	&  chain-nonlin	&  diamond-na	&  diamond-nonlin	&  triangle-na	&  triangle-nonlin	&  y-na	&  y-nonlin	\\
\midrule \multicolumn{9}{l}{$N=100$} \\ \midrule
PB	& $31.75${\tiny $\pm4.25$}	& $8.40${\tiny $\pm1.37$}	& $13.84${\tiny $\pm1.50$}	& $18.86${\tiny $\pm3.90$}	& $29.59${\tiny $\pm4.58$}	& $20.77${\tiny $\pm4.98$}	& $10.35${\tiny $\pm0.92$}	& $7.36${\tiny $\pm0.98$}	\\
Ens	& $27.40${\tiny $\pm3.55$}	& $\mathbf{6.72}${\tiny $\pm1.02$}	& $11.87${\tiny $\pm1.43$}	& $15.40${\tiny $\pm3.18$}	& $25.28${\tiny $\pm3.85$}	& $18.55${\tiny $\pm4.83$}	& $9.42${\tiny $\pm0.93$}	& $\mathbf{6.54}${\tiny $\pm0.77$}	\\
NTKGP	& $47.80${\tiny $\pm0.87$}	& $11.96${\tiny $\pm1.43$}	& $31.45${\tiny $\pm1.12$}	& $51.96${\tiny $\pm2.25$}	& $38.92${\tiny $\pm1.67$}	& $42.52${\tiny $\pm2.45$}	& $19.97${\tiny $\pm1.22$}	& $22.10${\tiny $\pm1.63$}	\\
BS	& $30.30${\tiny $\pm3.36$}	& $\mathbf{6.83}${\tiny $\pm1.05$}	& $12.81${\tiny $\pm1.40$}	& $19.88${\tiny $\pm3.54$}	& $28.21${\tiny $\pm4.81$}	& $23.09${\tiny $\pm5.21$}	& $11.54${\tiny $\pm1.73$}	& $6.76${\tiny $\pm0.78$}	\\
IPB	& $\mathbf{19.94}${\tiny $\pm2.35$}	& $\mathbf{6.31}${\tiny $\pm0.87$}	& $\mathbf{8.74}${\tiny $\pm0.92$}	& $\mathbf{9.64}${\tiny $\pm1.38$}	& $\mathbf{16.35}${\tiny $\pm1.42$}	& $\mathbf{10.02}${\tiny $\pm1.77$}	& $\mathbf{8.14}${\tiny $\pm0.76$}	& $\mathbf{6.56}${\tiny $\pm0.93$}	\\
\midrule \multicolumn{9}{l}{$N=1000$} \\ \midrule
PB	& $9.28${\tiny $\pm0.69$}	& $2.63${\tiny $\pm0.22$}	& $3.52${\tiny $\pm0.32$}	& $4.02${\tiny $\pm0.43$}	& $5.98${\tiny $\pm0.43$}	& $3.42${\tiny $\pm0.27$}	& $3.35${\tiny $\pm0.30$}	& $2.62${\tiny $\pm0.23$}	\\
Ens	& $7.45${\tiny $\pm0.60$}	& $\mathbf{2.42}${\tiny $\pm0.17$}	& $\mathbf{2.85}${\tiny $\pm0.22$}	& $3.49${\tiny $\pm0.36$}	& $4.84${\tiny $\pm0.35$}	& $3.13${\tiny $\pm0.21$}	& $\mathbf{2.84}${\tiny $\pm0.27$}	& $2.40${\tiny $\pm0.17$}	\\
NTKGP	& $21.55${\tiny $\pm0.39$}	& $2.83${\tiny $\pm0.20$}	& $8.03${\tiny $\pm0.20$}	& $11.64${\tiny $\pm0.38$}	& $12.42${\tiny $\pm0.37$}	& $6.13${\tiny $\pm0.25$}	& $5.39${\tiny $\pm0.24$}	& $3.85${\tiny $\pm0.22$}	\\
BS	& $8.58${\tiny $\pm0.68$}	& $\mathbf{2.31}${\tiny $\pm0.15$}	& $3.15${\tiny $\pm0.28$}	& $3.67${\tiny $\pm0.39$}	& $5.80${\tiny $\pm0.42$}	& $3.08${\tiny $\pm0.26$}	& $3.05${\tiny $\pm0.26$}	& $\mathbf{2.31}${\tiny $\pm0.13$}	\\
IPB	& $\mathbf{6.25}${\tiny $\pm0.46$}	& $2.58${\tiny $\pm0.20$}	& $\mathbf{2.78}${\tiny $\pm0.15$}	& $\mathbf{3.22}${\tiny $\pm0.37$}	& $\mathbf{4.23}${\tiny $\pm0.31$}	& $\mathbf{2.79}${\tiny $\pm0.19$}	& $\mathbf{2.98}${\tiny $\pm0.21$}	& $\mathbf{2.27}${\tiny $\pm0.13$}	\\
\bottomrule \end{tabular}
    
\end{table}

\begin{table}[htb]
\setlength{\tabcolsep}{2pt}
\centering\scriptsize
\caption{
Interventional 
density estimation experiment: additional results for quality of uncertainty estimates, when $n=100$. 
Reported are the estimate and $95\%$ CI for the mean of each test metric. 
Boldface indicates the best result ($p<0.05$ in a Wilcoxon signed-rank test).
}\label{tbl:dj-synth-od-100}
\begin{tabular}[h]{ccccccccc} \toprule Method	&  chain-na	&  chain-nonlin	&  diamond-na	&  diamond-nonlin	&  triangle-na	&  triangle-nonlin	&  y-na	&  y-nonlin	\\
\midrule \multicolumn{9}{l}{CDF $L_2$} \\ \midrule
PB	& $0.023${\tiny $\pm0.004$}	& $0.008${\tiny $\pm0.002$}	& $0.041${\tiny $\pm0.005$}	& $0.068${\tiny $\pm0.010$}	& $0.073${\tiny $\pm0.008$}	& $0.049${\tiny $\pm0.010$}	& $0.013${\tiny $\pm0.002$}	& $0.012${\tiny $\pm0.003$}	\\
Ens	& $0.019${\tiny $\pm0.003$}	& $\mathbf{0.007}${\tiny $\pm0.002$}	& $0.035${\tiny $\pm0.002$}	& $0.078${\tiny $\pm0.011$}	& $0.075${\tiny $\pm0.008$}	& $0.049${\tiny $\pm0.010$}	& $0.013${\tiny $\pm0.002$}	& $\mathbf{0.009}${\tiny $\pm0.002$}	\\
NTKGP	& $0.039${\tiny $\pm0.002$}	& $0.009${\tiny $\pm0.002$}	& $0.053${\tiny $\pm0.002$}	& $0.098${\tiny $\pm0.005$}	& $0.086${\tiny $\pm0.006$}	& $0.082${\tiny $\pm0.006$}	& $0.027${\tiny $\pm0.002$}	& $0.028${\tiny $\pm0.003$}	\\
BS	& $0.022${\tiny $\pm0.004$}	& $\mathbf{0.006}${\tiny $\pm0.001$}	& $0.037${\tiny $\pm0.004$}	& $0.083${\tiny $\pm0.011$}	& $0.076${\tiny $\pm0.008$}	& $0.058${\tiny $\pm0.010$}	& $0.015${\tiny $\pm0.003$}	& $\mathbf{0.010}${\tiny $\pm0.001$}	\\
IPB	& $\mathbf{0.013}${\tiny $\pm0.002$}	& $\mathbf{0.006}${\tiny $\pm0.001$}	& $\mathbf{0.028}${\tiny $\pm0.002$}	& $\mathbf{0.054}${\tiny $\pm0.010$}	& $\mathbf{0.059}${\tiny $\pm0.006$}	& $\mathbf{0.035}${\tiny $\pm0.007$}	& $\mathbf{0.010}${\tiny $\pm0.001$}	& $\mathbf{0.011}${\tiny $\pm0.002$}	\\
\midrule \multicolumn{9}{l}{Average coverage of 90\% CI} \\ \midrule
PB	& $0.960${\tiny $\pm0.017$}	& $0.731${\tiny $\pm0.109$}	& $0.762${\tiny $\pm0.090$}	& $0.637${\tiny $\pm0.073$}	& $0.506${\tiny $\pm0.065$}	& $0.640${\tiny $\pm0.068$}	& $0.750${\tiny $\pm0.095$}	& $0.806${\tiny $\pm0.088$}	\\
Ens	& $0.388${\tiny $\pm0.101$}	& $0.334${\tiny $\pm0.090$}	& $0.231${\tiny $\pm0.046$}	& $0.244${\tiny $\pm0.043$}	& $0.181${\tiny $\pm0.025$}	& $0.271${\tiny $\pm0.049$}	& $0.304${\tiny $\pm0.082$}	& $0.345${\tiny $\pm0.085$}	\\
NTKGP	& $0.388${\tiny $\pm0.094$}	& $0.412${\tiny $\pm0.095$}	& $0.256${\tiny $\pm0.044$}	& $0.152${\tiny $\pm0.024$}	& $0.151${\tiny $\pm0.015$}	& $0.158${\tiny $\pm0.017$}	& $0.182${\tiny $\pm0.045$}	& $0.265${\tiny $\pm0.075$}	\\
BS	& $0.861${\tiny $\pm0.075$}	& $0.806${\tiny $\pm0.080$}	& $0.762${\tiny $\pm0.081$}	& $0.572${\tiny $\pm0.074$}	& $0.511${\tiny $\pm0.068$}	& $0.638${\tiny $\pm0.075$}	& $0.801${\tiny $\pm0.062$}	& $0.798${\tiny $\pm0.094$}	\\
IPB	& $0.966${\tiny $\pm0.009$}	& $0.865${\tiny $\pm0.048$}	& $0.934${\tiny $\pm0.028$}	& $0.915${\tiny $\pm0.031$}	& $0.796${\tiny $\pm0.047$}	& $0.930${\tiny $\pm0.028$}	& $0.833${\tiny $\pm0.066$}	& $0.804${\tiny $\pm0.070$}	\\
\midrule \multicolumn{9}{l}{Average width of 90\% CI} \\ \midrule
PB	& $0.216${\tiny $\pm0.014$}	& $0.339${\tiny $\pm0.033$}	& $0.205${\tiny $\pm0.020$}	& $0.382${\tiny $\pm0.035$}	& $0.587${\tiny $\pm0.080$}	& $0.442${\tiny $\pm0.050$}	& $0.431${\tiny $\pm0.032$}	& $0.308${\tiny $\pm0.016$}	\\
Ens	& $0.069${\tiny $\pm0.007$}	& $0.109${\tiny $\pm0.006$}	& $0.049${\tiny $\pm0.003$}	& $0.120${\tiny $\pm0.012$}	& $0.173${\tiny $\pm0.018$}	& $0.138${\tiny $\pm0.012$}	& $0.164${\tiny $\pm0.010$}	& $0.087${\tiny $\pm0.005$}	\\
NTKGP	& $0.117${\tiny $\pm0.004$}	& $0.133${\tiny $\pm0.002$}	& $0.110${\tiny $\pm0.003$}	& $0.173${\tiny $\pm0.005$}	& $0.176${\tiny $\pm0.010$}	& $0.186${\tiny $\pm0.009$}	& $0.199${\tiny $\pm0.011$}	& $0.140${\tiny $\pm0.005$}	\\
BS	& $0.199${\tiny $\pm0.012$}	& $0.339${\tiny $\pm0.015$}	& $0.176${\tiny $\pm0.013$}	& $0.402${\tiny $\pm0.021$}	& $0.615${\tiny $\pm0.061$}	& $0.502${\tiny $\pm0.029$}	& $0.488${\tiny $\pm0.024$}	& $0.323${\tiny $\pm0.020$}	\\
IPB	& $0.168${\tiny $\pm0.007$}	& $0.338${\tiny $\pm0.012$}	& $0.208${\tiny $\pm0.016$}	& $0.768${\tiny $\pm0.046$}	& $1.043${\tiny $\pm0.126$}	& $0.785${\tiny $\pm0.074$}	& $0.459${\tiny $\pm0.021$}	& $0.268${\tiny $\pm0.009$}	\\
\bottomrule \end{tabular}

\end{table}

\begin{table}[htb]
\setlength{\tabcolsep}{2pt}
\centering\scriptsize
\caption{
Interventional 
density estimation experiment: additional results for quality of uncertainty estimates, when $n=1000$.
Reported are the estimate and $95\%$ CI for the mean of each test metric. For CDF $L_2$, 
boldface indicates the best result ($p<0.05$ in a Wilcoxon signed-rank test).
}\label{tbl:dj-synth-od-1000}
\begin{tabular}[h]{ccccccccc} \toprule Method	&  chain-na	&  chain-nonlin	&  diamond-na	&  diamond-nonlin	&  triangle-na	&  triangle-nonlin	&  y-na	&  y-nonlin	\\
\midrule \multicolumn{9}{l}{CDF $L_2$} \\ \midrule
PB	& $0.006${\tiny $\pm0.001$}	& $0.001${\tiny $\pm0.000$}	& $0.017${\tiny $\pm0.001$}	& $0.041${\tiny $\pm0.005$}	& $0.029${\tiny $\pm0.002$}	& $0.010${\tiny $\pm0.002$}	& $0.004${\tiny $\pm0.000$}	& $0.002${\tiny $\pm0.001$}	\\
Ens	& $0.004${\tiny $\pm0.000$}	& $\mathbf{0.001}${\tiny $\pm0.000$}	& $0.016${\tiny $\pm0.001$}	& $0.043${\tiny $\pm0.005$}	& $\mathbf{0.026}${\tiny $\pm0.002$}	& $0.011${\tiny $\pm0.002$}	& $0.003${\tiny $\pm0.000$}	& $\mathbf{0.002}${\tiny $\pm0.000$}	\\
NTKGP	& $0.014${\tiny $\pm0.000$}	& $0.001${\tiny $\pm0.000$}	& $0.027${\tiny $\pm0.001$}	& $0.047${\tiny $\pm0.004$}	& $0.035${\tiny $\pm0.002$}	& $0.016${\tiny $\pm0.001$}	& $0.007${\tiny $\pm0.000$}	& $0.004${\tiny $\pm0.000$}	\\
BS	& $0.005${\tiny $\pm0.001$}	& $\mathbf{0.001}${\tiny $\pm0.000$}	& $0.017${\tiny $\pm0.001$}	& $0.043${\tiny $\pm0.006$}	& $\mathbf{0.027}${\tiny $\pm0.002$}	& $0.011${\tiny $\pm0.001$}	& $0.004${\tiny $\pm0.000$}	& $\mathbf{0.002}${\tiny $\pm0.000$}	\\
IPB	& $\mathbf{0.004}${\tiny $\pm0.001$}	& $0.001${\tiny $\pm0.000$}	& $\mathbf{0.014}${\tiny $\pm0.001$}	& $\mathbf{0.031}${\tiny $\pm0.005$}	& $\mathbf{0.027}${\tiny $\pm0.002$}	& $\mathbf{0.008}${\tiny $\pm0.002$}	& $\mathbf{0.003}${\tiny $\pm0.000$}	& $\mathbf{0.002}${\tiny $\pm0.000$}	\\
\midrule \multicolumn{9}{l}{Average coverage of 90\% CI} \\ \midrule
PB	& $0.870${\tiny $\pm0.064$}	& $0.901${\tiny $\pm0.054$}	& $0.908${\tiny $\pm0.041$}	& $0.746${\tiny $\pm0.051$}	& $0.701${\tiny $\pm0.052$}	& $0.878${\tiny $\pm0.037$}	& $0.958${\tiny $\pm0.027$}	& $0.947${\tiny $\pm0.032$}	\\
Ens	& $0.633${\tiny $\pm0.085$}	& $0.712${\tiny $\pm0.089$}	& $0.522${\tiny $\pm0.055$}	& $0.347${\tiny $\pm0.061$}	& $0.331${\tiny $\pm0.045$}	& $0.408${\tiny $\pm0.044$}	& $0.716${\tiny $\pm0.062$}	& $0.679${\tiny $\pm0.073$}	\\
NTKGP	& $0.654${\tiny $\pm0.108$}	& $0.709${\tiny $\pm0.086$}	& $0.539${\tiny $\pm0.056$}	& $0.254${\tiny $\pm0.038$}	& $0.159${\tiny $\pm0.020$}	& $0.377${\tiny $\pm0.022$}	& $0.683${\tiny $\pm0.072$}	& $0.687${\tiny $\pm0.072$}	\\
BS	& $0.963${\tiny $\pm0.021$}	& $0.989${\tiny $\pm0.008$}	& $0.848${\tiny $\pm0.049$}	& $0.662${\tiny $\pm0.070$}	& $0.624${\tiny $\pm0.056$}	& $0.758${\tiny $\pm0.044$}	& $0.935${\tiny $\pm0.032$}	& $0.937${\tiny $\pm0.029$}	\\
IPB	& $0.927${\tiny $\pm0.029$}	& $0.884${\tiny $\pm0.042$}	& $0.927${\tiny $\pm0.029$}	& $0.838${\tiny $\pm0.050$}	& $0.670${\tiny $\pm0.048$}	& $0.891${\tiny $\pm0.029$}	& $0.876${\tiny $\pm0.050$}	& $0.890${\tiny $\pm0.044$}	\\
\midrule \multicolumn{9}{l}{Average width of 90\% CI} \\ \midrule
PB	& $0.090${\tiny $\pm0.002$}	& $0.182${\tiny $\pm0.005$}	& $0.082${\tiny $\pm0.003$}	& $0.217${\tiny $\pm0.014$}	& $0.600${\tiny $\pm0.041$}	& $0.263${\tiny $\pm0.016$}	& $0.265${\tiny $\pm0.006$}	& $0.152${\tiny $\pm0.004$}	\\
Ens	& $0.045${\tiny $\pm0.001$}	& $0.097${\tiny $\pm0.001$}	& $0.032${\tiny $\pm0.001$}	& $0.069${\tiny $\pm0.002$}	& $0.139${\tiny $\pm0.006$}	& $0.073${\tiny $\pm0.004$}	& $0.144${\tiny $\pm0.003$}	& $0.069${\tiny $\pm0.001$}	\\
NTKGP	& $0.060${\tiny $\pm0.001$}	& $0.104${\tiny $\pm0.001$}	& $0.050${\tiny $\pm0.000$}	& $0.081${\tiny $\pm0.002$}	& $0.128${\tiny $\pm0.004$}	& $0.091${\tiny $\pm0.001$}	& $0.150${\tiny $\pm0.003$}	& $0.085${\tiny $\pm0.001$}	\\
BS	& $0.082${\tiny $\pm0.002$}	& $0.159${\tiny $\pm0.002$}	& $0.067${\tiny $\pm0.001$}	& $0.173${\tiny $\pm0.006$}	& $0.434${\tiny $\pm0.023$}	& $0.175${\tiny $\pm0.004$}	& $0.220${\tiny $\pm0.004$}	& $0.126${\tiny $\pm0.002$}	\\
IPB	& $0.072${\tiny $\pm0.001$}	& $0.153${\tiny $\pm0.002$}	& $0.063${\tiny $\pm0.001$}	& $0.230${\tiny $\pm0.010$}	& $0.450${\tiny $\pm0.021$}	& $0.235${\tiny $\pm0.005$}	& $0.219${\tiny $\pm0.005$}	& $0.117${\tiny $\pm0.002$}	\\
\bottomrule \end{tabular}

\end{table}

\subsection{Interventional Density Estimation}\label{app:exp-dj}

\paragraph{Setup details.} 
For the base estimation algorithm, 
we adopt a fully-connected NN model with 128 hidden units in each layer, and determine the other hyperparameters in the following range: 
(i) number of hidden layers $D\in\{2, 3, 4\}$, (ii) learning rate $\eta\in\{0.1, 0.5, 1, 5\}\times 10^{-3}$, 
(iii) training iterations $L\in\{2,4,8\}\times 1000$, and 
(iv) activation function from \{swish, selu, tanh\}. 
The hyperparameters are determined by evaluating the training objective on an in-distribution validation set, on the \texttt{chain-na} dataset from \citet{chao2023interventional}. 
We use the AdamW optimiser. %
For our method, we instantiate the proximal Bregman objective \eqref{eq:nn-objective} using the weighted score matching loss in \citet{ho2020denoising}, and set 
$\Delta n=0.1n, N=6n$: beyond this range, a larger value of $N$ leads to diminishing improvement, and the results appear somewhat insensitive to the choice of $\Delta n$. Other implementation details are discussed in \Cref{app:impl-details}. 

On the synthetic datasets, we consider two evaluation setups: 
\begin{itemize}
    \item Following \citet{chao2023interventional} we evaluate distributional estimates for $\PP(x_{\mathrm{desc}(i)}\mid \mathrm{do}(x_i=x))$, where $\mathrm{desc}(i)$ denotes the descendents of node $i$ in the causal graph and $x$ ranges over a uniform grid of the $[0.1, 0.9]$ quantile. We report the maximum mean discrepancy for in this setup. 
    \item We present a more direct evaluation of the uncertainty estimates, 
    by evaluating the average coverage of pointwise credible intervals for the mean outcome $\EE(x_d\mid \mathrm{do}(x_{1:d-1}=\cdot))$ and the $L_2$ distance between the estimated CDF and ground truth. The latter 
    is equivalent to CRPS and is thus a meaningful surrogate for forecasting error. 
    The value for $x_{1:d-1}$ is determined by varying one of the variables on a uniform grid and fixing the others to $\{-0.5, 0, 0.5\}$, consecutively. 
\end{itemize}

On the fMRI dataset, we report the median of absolute error following \citet{khemakhem2021causal,chao2023interventional} and the CRPS. 
Our setup, where we average over random seeds (which determine the initialisation and train/validation split), appears different from \citet{khemakhem2021causal}, and we can exactly match their reported results using 
a single (default) seed set in their codebase. Nonetheless, the results remain statistically consistent. 

\paragraph{Full results and discussion.} 
Full results for the synthetic experiments are shown in \Cref{tbl:dj-synth-mmd-all} (in the setting of \Cref{tbl:dj-synth-main} and \citet{chao2023interventional}) and \Cref{tbl:dj-synth-od-100}--\ref{tbl:dj-synth-od-1000} (for the evaluation of uncertainty). 
As we can see, our method attains the best overall performance for both prediction and uncertainty quantification. 
The vanilla ensemble method achieves the best predictive performance across baselines, which is 
consistent with previous reports \citep{fort2019deep,gorishniy2021revisiting}. 
\texttt{NTKGP} is generally uncompetitive; even through the method is applied to the same DNN models, 
it is possible that the ultrawide NN perspective which motivated their design choices is less %
applicable to diffusion models which utilise multi-output NNs. 
The predictive performance of \texttt{PB} is uncompetitive possibly related to its discard of real data. 
For uncertainty quantification, however, both bootstrap baselines demonstrate better performance than the other baselines, although our method still achieves better performance. 
Note that due to the distribution shift we cannot expect the coverage of credible intervals to match their exact nominal level. 

\ifnum\isPreprint=0
\subsection{Code, Computational Resource, and Assets Used}\label{app:exp-misc}

We provide code for the classification experiment as supplementary material. 
Code to reproduce all experiments will be available upon publication. 

Amount of compute required for each experiment is as follows:
\begin{itemize}
    \item 
    \Cref{app:toy-exp-gp}: experiment takes 5 minutes on an Nvidia RTX A2000 GPU.
    \item \Cref{app:gp-mtl}: a single run with $m=200,n_{test}=20$ takes 2 minutes on an Nvidia RTX A2000 GPU; all experiments take an equivalent of 36 hours on a single Nvidia Tesla A100 GPU.
    \item \Cref{app:exp-gp}: a single run on the \texttt{wine-white} dataset takes 4.5 minutes on an Nvidia Tesla V100 GPU; all experiments cost an equivalent of 340 hours on a single V100 GPU.
    \item \Cref{app:exp-tree}: experiments take a total of 25 hours on 96 CPU cores (2 AMD EPYC 7642 CPUs); for the AutoML experiment, a single run for the proposed method takes an average of 67 minutes on 4 EPYC cores; for the XGBoost experiment, 
    a single run on the \texttt{banknote-authentication} dataset takes about 4 seconds on an Intel i5-13500 CPU. 
    \item \Cref{app:exp-dj}: 
    a single run for the proposed method (\texttt{triangle-nonlin}, $n=100$) takes 4 minutes on an Nvidia RTX A2000 GPU; all experiments cost an equivalent of 112 hours on an Nvidia RTX 4090 GPU.
\end{itemize}
We report equivalent time cost for the GPU experiments which run in parallel. %
We expect the time cost of the GP experiments to be improvable as the current implementation does not make optimal use of JIT compilation. 

Main software libraries used in the code include 
JAX \citep[version 0.4.32]{jax2018github}, Flax \citep[version 0.7.4]{flax2020github}, Optax \citep[version 0.1.7]{deepmind2020jax}, GPJax \citep[version 0.7.1]{Pinder2022}, XGBoost \citep[version 2.0.3]{chen2016xgboost}, and AutoGluon \citep[version 1.0.0]{agtabular}, all released under the Apache-2.0 licence; and OpenML \citep[version 0.14.0, BSD licence]{bischl2017openml}. The OpenML datasets used in \Cref{app:exp-tree} are individually licenced with licences listed on the website (\url{https://openml.org}). 
\fi

\ifnum\isPreprint=0
\newpage 
\clearpage
\input{neurips-checklist.tex}
\fi
\end{document}